\documentclass{article} % For LaTeX2e
\PassOptionsToPackage{dvipsnames}{xcolor}
\usepackage{iclr2027_conference,times}
\usepackage{xcolor}

\usepackage[utf8]{inputenc} % allow utf-8 input
\usepackage[T1]{fontenc}    % use 8-bit T1 fonts
\usepackage{hyperref}       % hyperlinks
\usepackage{url}            % simple URL typesetting
\usepackage{booktabs}       % professional-quality tables
\usepackage{amsfonts}       % blackboard math symbols
\usepackage{nicefrac}       % compact symbols for 1/2, etc.
\usepackage{microtype}      % microtypography

\usepackage{amsmath,amsfonts,bm}

\def\eqref#1{equation~\ref{#1}}
\def\1{\bm{1}}

\DeclareMathAlphabet{\mathsfit}{\encodingdefault}{\sfdefault}{m}{sl}
\SetMathAlphabet{\mathsfit}{bold}{\encodingdefault}{\sfdefault}{bx}{n}

\usepackage{times}
\usepackage{latexsym}
\usepackage{graphicx}
\usepackage{amsmath}
\usepackage{amssymb}
\usepackage{mathtools}
\usepackage{amsthm}
\usepackage{array}
\usepackage{longtable}
\usepackage[most]{tcolorbox}
\usepackage{bibentry}
\usepackage{bm} 
\usepackage{tabularx}
\usepackage{multirow} 
\usepackage{algorithm}
\usepackage{algorithmic}
\usepackage{appendix}
\usepackage{wrapfig}   % 用于环绕文本的图片
\usepackage{bbm}
\usepackage{booktabs}
\usepackage{xcolor}
\usepackage{rotating}
\definecolor{gainred}{HTML}{C62828}
\usepackage{enumitem}

\makeatletter
\def\blankfootnote{\xdef\@thefnmark{}\@footnotetext}
\makeatother

\definecolor{royalblue(web)}{rgb}{0.25, 0.41, 0.88}
\definecolor{blue-violet}{rgb}{0.54, 0.17, 0.89}
\definecolor{brightmaroon}{rgb}{0.76, 0.13, 0.28}
\definecolor{darkmagenta}{rgb}{0.55, 0.0, 0.55}
\definecolor{bleudefrance}{rgb}{0.19, 0.55, 0.91}
\definecolor{palatinateblue}{rgb}{0.15, 0.23, 0.89}
\definecolor{whitesmoke}{rgb}{0.96, 0.96, 0.96}
\definecolor{thulianpink}{rgb}{0.87, 0.44, 0.63}
\definecolor{amber(sae/ece)}{rgb}{1.0, 0.49, 0.0}
\definecolor{darkblue}{rgb}{0.0, 0.0, 0.55}
\definecolor{alizarin}{rgb}{0.82, 0.1, 0.26}
\definecolor{asparagus}{rgb}{0.53, 0.66, 0.42}
\definecolor{darkspringgreen}{rgb}{0.09, 0.45, 0.27}
\definecolor{columbiablue}{rgb}{0.61, 0.87, 1.0}
\definecolor{wildblueyonder}{rgb}{0.64, 0.68, 0.82}
\definecolor{trolleygrey}{rgb}{0.5, 0.5, 0.5}
\definecolor{paleaqua}{rgb}{0.74, 0.83, 0.9}
\definecolor{bubblegum}{rgb}{0.99, 0.76, 0.8}
\definecolor{coralred}{rgb}{1.0, 0.25, 0.25}
\definecolor{green(ryb)}{rgb}{0.4, 0.69, 0.2}
\definecolor{flame}{rgb}{0.89, 0.35, 0.13}
\definecolor{bittersweet}{rgb}{1.0, 0.44, 0.37}
\definecolor{darksalmon}{rgb}{0.91, 0.59, 0.48}
\definecolor{emerald}{rgb}{0.31, 0.78, 0.47}
\definecolor{green(pigment)}{rgb}{0.0, 0.65, 0.31}
\definecolor{codegreen}{rgb}{0,0.6,0}
\definecolor{codegray}{rgb}{0.5,0.5,0.5}
\definecolor{codepurple}{rgb}{0.58,0,0.82}
\definecolor{backcolour}{rgb}{0.96,0.96,0.94}
\definecolor{bluegray}{rgb}{0.3, 0.38, 0.47}

\tcbuselibrary{breakable}
\usepackage{listings}
\lstdefinestyle{mystyle}{
  basicstyle=\scriptsize\ttfamily,
  frame=single, % 添加边框
  columns=fixed, % 设置列宽为固定值
}

\lstdefinestyle{newstyle}{
  basicstyle=\footnotesize\ttfamily\color{codegreen},
  backgroundcolor=\color{backcolour},
  frame=shadowbox, % 需要 listings 和 fancyvrb 支持
  rulecolor=\color{red},
  frameround=tttt, % 四个角都圆角（仅当使用 frame=box 时有效）
  keywordstyle=\color{magenta},
  commentstyle=\color{green},
  stringstyle=\color{red},
  showstringspaces=false,
  numbers=left,
  numberstyle=\tiny\color{gray},
  breaklines=true
}

\newtcolorbox{theorybox}{
    colback=blue!2!white,
    colframe=blue!45!black,
    boxrule=0.5pt,
    arc=1mm,
    left=1.5mm,
    right=1.5mm,
    top=1mm,
    bottom=1mm,
    breakable
}

\usepackage{color}

\newtheorem{theorem}{Theorem}

\newcommand{\ours}{{\fontfamily{qpl}\selectfont OptiCom}}

\title{\ours{}: A Unified Framework for State-Conditioned Composition in LLM-Driven Optimization}

\author{%
\textbf{Chenxing Wei}$^{\dagger\S\heartsuit\diamond}$,
\textbf{Sichen Liu}$^{\circ\heartsuit}$,
\textbf{Lizhao Liu}$^{\heartsuit}$,
\textbf{Ningyuan Sun}$^{\heartsuit}$,
\textbf{Chen Bingzhou}$^{\heartsuit}$\\
\textbf{Ying He}$^{\dagger}$,
\textbf{Bo Jiang}$^{\heartsuit}$,
\textbf{Fei Yu}$^{\ddagger}$,
\textbf{Yao Shu}$^{\wr}$\thanks{Corresponding author.\quad
$\diamond$ Work done during an internship at ByteDance.}\\[6pt]
$^{\dagger}$School of Computing and Data Science,
The University of Hong Kong\\
$^{\circ}$Huazhong University of Science and Technology\\
$^{\S}$Shenzhen Loop Area Institute\\
$^{\wr}$Hong Kong University of Science and Technology (Guangzhou)\\
$^{\ddagger}$School of Information Technology, Carleton University\\
$^{\heartsuit}$ByteDance\\[3pt]
\texttt{weichenxing@connect.hku.hk},
\texttt{yaoshu@hkust-gz.edu.cn}
}

\iclrfinalcopy % Uncomment for camera-ready version, but NOT for submission.

\begin{document}

\maketitle
\begin{abstract}
Large language models (LLMs) are increasingly deployed to solve complex scientific and practical problems via iterative optimization. However, dynamically coordinating diverse search mechanisms as candidate quality, failure modes, and resource budgets evolve remains a critical open challenge. Targeted empirical diagnostics reveal that mechanism effectiveness is highly state-dependent. Motivated by this, we analyze how individual decisions drive final outcomes, decomposing the expected terminal improvement under a shared budget into cumulative decision opportunities minus cumulative selection losses. Guided by this opportunity-loss theoretical foundation, we propose \ours{}, a unified framework that represents LLM-driven optimizers within a shared configuration space: $C=(A,Q,O,E,M,S)$, corresponding to artifact, query, operator, evaluation, memory, and strategy. Operating within this space, a fast LLM-based Optimization Controller dynamically composes immediate mechanisms through structured Action Packages, while a slower Strategy Adapter refines long-term selection preferences, operator weights, and templates based on accumulated trajectory feedback. Comprehensive evaluations across 32 benchmark groups demonstrate the superiority of framework: \ours{} achieves an average Max-score rank of 1.72 among 14 evaluated configurations, securing the top score in 23 groups. Ultimately, these results highlight the broad applicability and high extensibility of \ours{} as a general-purpose paradigm for robust LLM test-time scaling.
\end{abstract}

\section{Introduction}
\label{introduction}

Large language models (LLMs) are increasingly used to improve solutions to scientific and practical problems through iterative generation and evaluation~\cite{snell2025scaling, liu2026skydiscover}. Applications span mathematical discovery~\cite{FunSearch2023, tsoukalas2026advancingmathematicsresearchaidriven}, computational efficiency~\cite{ouyang2025kernelbench}, and image generation~\cite{zhang2025itercomp}. These settings impose concrete requirements: a mathematical construction must remain feasible while improving a target bound~\cite{FunSearch2023, wei2026testtime}; a GPU kernel must preserve correctness while reducing execution time~\cite{ouyang2025kernelbench}; and a generated image must satisfy semantic and visual requirements~\cite{zhang2025itercomp}. Challenging reasoning benchmarks likewise distinguish plausible outputs from solutions that satisfy the task~\cite{chollet2026arcagi2newchallengefrontier, foundation2026arcagi3newchallengefrontier}. Despite differences in representation and evaluation, these diverse problems require searching for better candidates under constraints and limited computational resources~\cite{evolutionarysurveyandroadmap}.

Existing approaches provide complementary mechanisms for this search~\cite{zhang2026systematicsurveylargelanguage, gao2026surveyselfevolvingagentswhat}. OPRO conditions candidate generation on evaluated solutions and their scores~\cite{yang2024large}, while TextGrad propagates natural-language feedback to update variables in computational graphs~\cite{yuksekgonul2025optimizing}. FunSearch and AlphaEvolve combine program generation with evolutionary search and automated evaluation~\cite{FunSearch2023, novikov2025alphaevolvecodingagentscientific}, alongside broader progress in algorithm discovery~\cite{AlphaDev2023, liu2024eoh, zheng2025monte, ye2024reevo}. Reflexion and ExpeL reuse experience to guide later attempts~\cite{shinn2023reflexion, zhao2024expel}, and GEPA combines reflection with complementary candidate reuse~\cite{agrawal2025gepa}. These methods illustrate different ways to organize proposals, feedback, and experience into an improvement loop~\cite{wei2026words}.

From an optimization perspective, this improvement loop involves a sequence of interconnected decisions: what information to acquire, which candidates to revise, how to modify them, how thoroughly to evaluate them, and what experience to retain. Crucially, the utility of these mechanisms shifts dynamically during optimization. An invalid candidate may require targeted repair; a feasible but stagnant solution might benefit from recombination or broader exploration; and a high-quality candidate often warrants a precise, structure-preserving local update. This perspective connects LLM-driven optimization to classical studies of problem-dependent search performance and hyper-heuristics~\cite{wolpert1997nofree, burke2013hyper}. It motivates treating the decisions that govern candidate improvement as objects of optimization themselves, raising a central question: \emph{How can we represent the decisions made by different optimizers in a common space, and adaptively compose them as the task, optimization state, and remaining budget evolve?}

Recent progress in shared abstractions and optimizer adaptation provides a foundation for this view~\cite{khattab2024dspy, cemri2026adaevolveadaptivellmdriven, liu2026evoxmetaevolutionautomateddiscovery, metatextgrad2025, zhang2025darwin}. Building on these directions, we seek a common representation of optimization responsibilities and principles for coordinating them. Our diagnostic study provides empirical evidence that mechanism preferences vary across the constructed optimization states (Section~\ref{sec:motivation}). To rigorously understand how these dynamic decisions affect the complete optimization process, we analyze general policies under a shared budget and terminal evaluation criterion (Section~\ref{sec:theory}). We decompose the expected terminal improvement into \emph{cumulative decision opportunities} ($\Delta_t$) minus \emph{cumulative selection losses} ($\ell_t$). Under a near-optimal selection assumption, we derive a sufficient condition for positive terminal improvement. This analysis accounts for exploration costs and unfavorable decisions through their cumulative effect, yielding two complementary design principles: make valuable alternatives available, and improve their selection using the current state and accumulated evidence.

Guided by this theoretical perspective, we introduce \ours{}, a unified framework that represents optimizers as configurations $C=(A,Q,O,E,M,S)$ and adapts their component compositions. An Optimization Controller dynamically selects Action Packages based on the current state, while a slower Strategy Adapter refines selection preferences over time. Across 32 benchmark groups, \ours{} obtains the lowest average rank (1.72 for Max, 2.08 for Mean) among 14 evaluated configurations, supported by additional checks using official SkyDiscover implementations on seven benchmarks. Our contributions are fourfold:
\begin{itemize}[topsep=0pt,leftmargin=8mm,itemsep=0pt]
    \item \textbf{Shared representation.} We define optimization decisions across six functional dimensions (AQOEMS), enabling disparate existing methods and complementary mechanisms to interact within a common, extensible space (Section~\ref{sec:configuration-space}).
    \item \textbf{Theoretical analysis.} We formalize terminal improvement through the lens of cumulative opportunities and selection losses, yielding theoretically grounded principles for adaptive optimization (Section~\ref{sec:theory}).
    \item \textbf{Adaptive framework.} We instantiate these principles via \ours{}, featuring state-conditioned Action Package composition, a task-specific execution harness, and two-timescale strategy adaptation (Section~\ref{sec:framework}).
    \item \textbf{Empirical validation.} Through cross-domain evaluations, state-conditioned motivation studies, targeted ablations, and token-aware case studies, we evaluate final quality and examine the contributions of adaptation, experience reuse, and continued search expenditure (Section~\ref{sec:experiments}).
\end{itemize}

\section{A Shared Configuration Space for LLM-Driven Optimizers}
\label{sec:configuration-space}

\subsection{Optimization Concepts and the AQOEMS Representation}
\label{sec:aqoems}

Formally, given a task $d$, an optimizer explores a candidate space $\mathcal{X}_d$ under a computational budget $B$ to find solutions for:
\begin{equation}
    \max_{x\in\mathcal{X}_d} f_d(x)
    \qquad \text{subject to } x\in\mathcal{F}_d,
    \label{eq:optimization-objective}
\end{equation}
where $\mathcal{F}_d$ and $f_d$ denote the feasible set and the objective function, respectively. We conceptualize the underlying mechanisms and coordinating rules of any such optimizer as a unified configuration tuple:
\begin{equation}
    C=(A,Q,O,E,M,S).
    \label{eq:configuration}
\end{equation}
Here, $A$ defines the artifact representation; $Q$ dictates information acquisition and context construction; $O$ governs candidate generation and updating operators; $E$ specifies quality and feasibility evaluation; $M$ manages retained experience and candidate archives; and $S$ provides the overarching strategy for search coordination and resource allocation.

Crucially, AQOEMS serves as a functional taxonomy rather than a strict architectural blueprint for six isolated modules or sequential steps. Components may share implementations or be entirely omitted (taking null forms). The instantiation of these functions dynamically adapts to the current optimization state---such as candidate quality, encountered failures, historical experience, and remaining budget. For instance, while $E$ assesses the candidate and generates feedback, $Q$ selectively routes this evidence for future inspection. Consequently, even a visually "fixed" configuration $C$ can encapsulate state-dependent or adaptive rules within its strategy $S$, meaning it need not execute the identical action at every iteration. Further details on component responsibilities and boundaries are provided in Appendix~\ref{app:mapping-criteria}.

\subsection{Existing Optimizers through the AQOEMS Lens}
\label{sec:existing-optimizers}

AQOEMS naturally captures candidate--feedback loops across diverse optimization families, encompassing stateful, stochastic, and internally adaptive mechanisms. We organize related work into three overlapping directions, with one implemented representative from each detailed in Table~\ref{tab:optimizer-mapping}.

\begin{table}[t]
\vspace{-7mm}
    \centering
    \small
    \setlength{\tabcolsep}{4pt}
    \renewcommand{\arraystretch}{1.12}
    \caption{Representative AQOEMS mappings across three research directions. Each method has a framework instantiation; mapping and alignment details appear in Appendix~\ref{app:optimizer-mapping}.}
    \label{tab:optimizer-mapping}
    \begin{tabularx}{\linewidth}{
        @{}l
        >{\raggedright\arraybackslash}X
        >{\raggedright\arraybackslash}X
        >{\raggedright\arraybackslash}X@{}
    }
        \toprule
        & \textbf{Feedback-guided revision and experience reuse} & \textbf{Evolutionary search and adaptive control} & \textbf{Shared abstractions and meta-optimization} \\
        \addlinespace
        & TextGrad~\cite{yuksekgonul2025optimizing} & AdaEvolve~\cite{cemri2026adaevolveadaptivellmdriven} & DSPy~\cite{khattab2024dspy} \\
        \midrule
        $A$ & Optimizable graph variables & Candidate programs & LM program instructions and demonstrations \\
        \addlinespace
        $Q$ & Graph context and propagated feedback & Selected programs and search evidence & Training examples and execution traces \\
        \addlinespace
        $O$ & Textual-feedback-guided updates & LLM-generated program variations & Instruction and demonstration updates \\
        \addlinespace
        $E$ & Objective feedback & Program fitness & User-specified task metric \\
        \addlinespace
        $M$ & Graph and update state & Populations and improvement statistics & Optimizer-dependent candidates and search state \\
        \addlinespace
        $S$ & Feedback propagation and update rules & Adaptive exploration, allocation, and guidance & Selected optimizer's compilation and search rules \\
        \bottomrule
    \end{tabularx}
    \vspace{-3mm}
\end{table}

\textbf{Feedback-guided revision and experience reuse.}
TextGrad connects evaluation ($E$) to variable updates ($O$) through graph-specific context ($Q$)~\cite{yuksekgonul2025optimizing}. OPRO uses scored candidate history~\cite{yang2024large}, Reflexion retains verbal reflections~\cite{shinn2023reflexion}, and ExpeL extracts reusable knowledge~\cite{zhao2024expel}. These methods exemplify distinct strategies for acquiring and applying evidence via $Q$, $M$, and $O$.

\textbf{Evolutionary search and adaptive control.}
AdaEvolve uses improvement statistics ($M$) to adapt exploration and allocation ($S$)~\cite{cemri2026adaevolveadaptivellmdriven}. FunSearch and AlphaEvolve combine program generation, automated evaluation, and population-based search~\cite{FunSearch2023,novikov2025alphaevolvecodingagentscientific}; GEPA adds reflective updates and complementary candidate reuse~\cite{agrawal2025gepa}; and EvoX evolves search procedures~\cite{liu2026evoxmetaevolutionautomateddiscovery}. These frameworks showcase rich interactions among $O$, $E$, $M$, and an internally adaptive $S$.

\textbf{Shared abstractions and meta-optimization.}
DSPy optimizes LM programs against user-specified metrics, with concrete operations and search rules supplied by the selected optimizer~\cite{khattab2024dspy}. Trace uses execution traces and feedback for generative optimization~\cite{trace2024}, while metaTextGrad improves optimizer prompts and compositions~\cite{metatextgrad2025}. 

Unlike existing universal frameworks that enforce static search pipelines or monolithic optimization APIs, \ours{} dynamically reshapes its underlying mechanism composition during execution to adapt to the evolving optimization state, while offering superior extensibility to integrate novel mechanisms. Methods enter the framework through these extensible candidate, feedback, memory, and action interfaces. We defer detailed discussions of existing larger framework compositions (e.g., LLM4AD~\cite{liu2024llm4ad}, SkyDiscover~\cite{liu2026skydiscover}, and optimize\_anything~\cite{agrawal2026optimizeanything}) to Appendix~\ref{app:expanded-related-frameworks}, and details regarding the scope of our contribution to Appendix~\ref{app:scope-of-contribution}.

\section{Motivating State-Conditioned Optimization}
\label{sec:motivation}

\begin{figure}[t]
\vspace{-5mm}
    \centering
    \includegraphics[width=\linewidth]{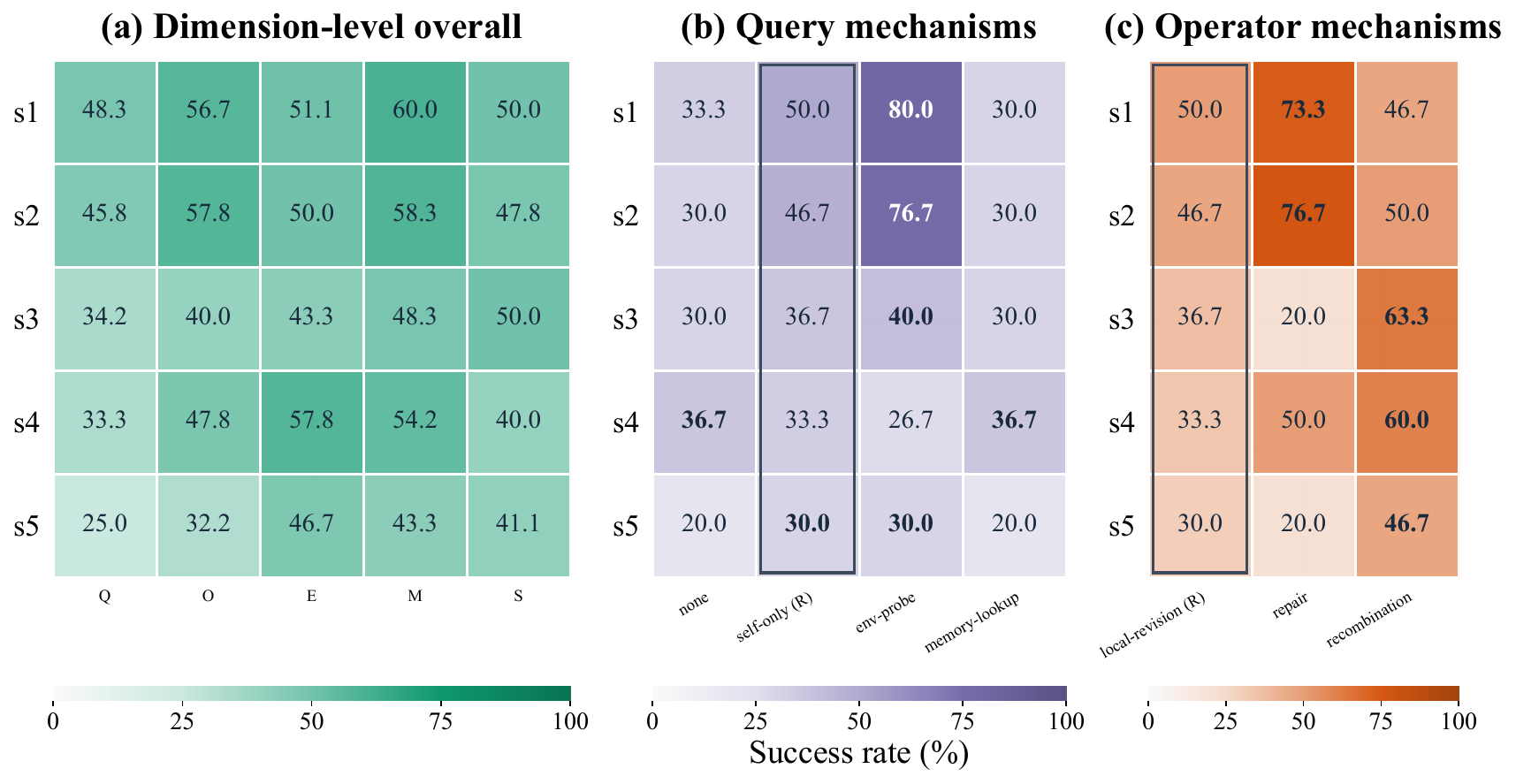}
    \caption{State-conditioned diagnostics on Heilbronn Triangle: three distinct situations per state and ten trials per mechanism--situation pair (30 trials per mechanism--state rate). \textbf{(a)} Averages over Q/O/E/M/S alternatives; \textbf{(b)} query and \textbf{(c)} operator comparisons. R marks the reference setting. Memory comparisons enable retrieval; other dimensions change one mechanism from the reference. Appendix~\ref{app:motivation-protocol} details the protocol and complete six-panel results.}
    \label{fig:motivation}
\vspace{-5mm}
\end{figure}

We examine state-dependent mechanism preferences on Heilbronn Triangle using 15 starting situations across five state families: s1 (execution/interface failures), s2 (failed task-specific checks), s3 (objective stagnation), s4 (limited remaining budget), and s5 (high performance variability). The strict intervention matrix changes one mechanism from the reference configuration ($Q2$ self-only + $O1$ local revision + $E2$ textual feedback + $M1$ no persistent memory + $S1$ fixed strategy). Memory comparisons additionally enable Q4 retrieval so that stored experience can affect later decisions; these are conditional comparisons rather than single-coordinate interventions. Each mechanism--situation pair contributes ten trials, with success defined by resolution of the starting condition (Appendix~\ref{app:motivation-protocol}). Figure~\ref{fig:motivation} shows changes in relative mechanism effectiveness. Environment probing exceeds self-only querying in s1 and s2 (80.0\% and 76.7\% versus 50.0\% and 46.7\%), but has a lower observed rate in s4 (26.7\% versus 33.3\%). Repair reaches 73.3\% and 76.7\% in s1 and s2, yet falls to 20.0\% under stagnation, where recombination reaches 63.3\%. Across the s2 situations, joint Q+O modification reaches 83.3\%, compared with 76.7\% for either single change and 46.7\% for the reference. Conditional mean iterations among successful trials are 2.1 for Q+O versus 2.9, 3.6, and 5.2 for Q-only, O-only, and the reference, respectively (Appendix~\ref{app:motivation-joint}). These descriptive results motivate state-conditioned selection and complementary composition; they do not establish universal preferences or a general interaction effect.

\section{Theoretical Analysis of Adaptive Optimization}
\label{sec:theory}

The motivation study illustrates how different decisions may be useful in different states. We now ask how those decisions affect the final outcome of an optimization run. Our analysis applies to a general optimization policy $\pi$, not specifically to \ours{}. It first identifies the common structure of terminal improvement and then bounds the effect of imperfect selection. Let $J_B(\pi)$ be expected terminal quality under budget $B$, with terminal utility in $[0,1]$. At history $h_t$, let $V^{\pi_0}(h_t)$ be the expected terminal quality obtained by continuing with a reference policy $\pi_0$, and let $q(h_t,a)$ be the value of executing $a$ before returning to $\pi_0$. Define
\begin{equation}
\begin{aligned}
q^\star(h_t)&=\max_{a\in\mathcal{A}(h_t)}q(h_t,a),\\
\Delta_t&=q^\star(h_t)-V^{\pi_0}(h_t),\\
\ell_t&=q^\star(h_t)-\mathbb{E}_{\pi}[q(h_t,A_t)\mid h_t].
\end{aligned}
\label{eq:opportunity-selection-loss}
\end{equation}
Here, $\Delta_t$ is the best available one-decision improvement relative to reference continuation, while $\ell_t$ is the expected loss in selecting among those actions. The complete process and reference continuation are specified in Appendix~\ref{app:theory-setup}.

\subsection{Cumulative Opportunity--Loss Decomposition}
\label{sec:cumulative-advantage}

Our first result specializes the performance-difference identity~\cite{schulman2015trust} to this budgeted process. It characterizes the complete run without assuming that individual decisions improve the incumbent. The proof is given in Appendix~\ref{app:global-improvement-proof}.

\begin{theorybox}
\begin{theorem}[Cumulative Opportunity--Loss Decomposition]
\label{thm:global-adaptive-value}
Under the shared budgeted process,
\begin{equation}
J_B(\pi)-J_B(\pi_0)
=
\mathbb{E}_{\pi}
\left[\sum_{t<T}(\Delta_t-\ell_t)\right].
\label{eq:global-value-decomposition}
\end{equation}
\end{theorem}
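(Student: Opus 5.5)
The plan is the standard telescoping argument behind the performance-difference identity, adapted to the budgeted process of Appendix~\ref{app:theory-setup}. Let $T$ be the (possibly random, budget-determined) stopping time at which the run ends. I extend $V^{\pi_0}$ to terminal histories by setting $V^{\pi_0}(h_T)$ equal to the terminal utility of $h_T$. Then $J_B(\pi)=\mathbb{E}_\pi[V^{\pi_0}(h_T)]$, and $J_B(\pi_0)=V^{\pi_0}(h_0)$ for the common initial history $h_0$, where the shared budget ensures both policies start from the same $h_0$. This gives the telescoping identity
\begin{equation*}
J_B(\pi)-J_B(\pi_0)=\mathbb{E}_\pi\Big[\sum_{t<T}\big(V^{\pi_0}(h_{t+1})-V^{\pi_0}(h_t)\big)\Big],
\end{equation*}
which is valid because utility lies in $[0,1]$ and $T$ is bounded by the budget $B$. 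The bound on $T$ lets me exchange expectation and the finite sum without integrability concerns.

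Next I condition each increment on $h_t$. By definition, $q(h_t,a)$ is the expected terminal quality from executing $a$ at $h_t$ and then following $\pi_0$. That is, $q(h_t,a)=\mathbb{E}[V^{\pi_0}(h_{t+1})\mid h_t,A_t=a]$, where the transition kernel (LLM sampling, evaluation noise, budget consumption) is the same regardless of which policy chose $a$. This transition-invariance is the key modeling assumption I take from the appendix setup. Under it, the tower property gives
\begin{equation*}
\mathbb{E}_\pi\big[V^{\pi_0}(h_{t+1})-V^{\pi_0}(h_t)\mid h_t\big]=\mathbb{E}_\pi[q(h_t,A_t)\mid h_t]-V^{\pi_0}(h_t).
\end{equation*}
I then add and subtract $q^\star(h_t)$. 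The right-hand side becomes $(q^\star(h_t)-V^{\pi_0}(h_t))-(q^\star(h_t)-\mathbb{E}_\pi[q(h_t,A_t)\mid h_t])=\Delta_t-\ell_t$, exactly as in equation~\ref{eq:opportunity-selection-loss}. Summing over $t<T$ and taking the outer expectation yields equation~\ref{eq:global-value-decomposition}.

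The main obstacle is the random horizon. The sum runs over $t<T$, and the event $\{t<T\}$ must be $h_t$-measurable (the budget check happens before the decision), so that conditioning on $h_t$ commutes with the indicator $\mathbbm{1}\{t<T\}$. I will write the sum as $\sum_{t=0}^{B-1}\mathbbm{1}\{t<T\}(\cdot)$ and apply the tower property term by term, using the stopping-time property of $T$ with respect to the history filtration. A second subtlety is that the maximum in $q^\star$ must be attained, or at least finite. With a finite action set $\mathcal{A}(h_t)$ this is immediate; otherwise I replace the max by a supremum, which is finite since $q\in[0,1]$, and the identity is unaffected because $q^\star$ cancels. Finally, I must check that terminal histories reached with leftover budget are handled consistently, meaning that $V^{\pi_0}$ at a terminal history is simply its realized utility and not the value of a further continuation. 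I will verify this against the appendix's definition of the reference continuation.
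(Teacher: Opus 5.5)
Your argument is the paper's own. The appendix's main proof telescopes over hybrid policies $\pi^{(m)}$, which follow $\pi$ for $m$ decisions and then $\pi_0$. It then derives the identity a second time, as an equivalent ``value-function telescoping,'' which is exactly your route: the same conditioning on $h_t$ through the common kernel $K$, and the same add-and-subtract of $q^\star$. Two details need repair before it proves the theorem for the paper's process.

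\textbf{Horizon bound.} The budget does not bound the number of decisions, because costs satisfy only $\kappa_t\geq 0$ and zero-cost decisions are allowed. The paper imposes a separate finite decision cap $H$ and pads every trajectory to length $H$ with a zero-cost no-op in an absorbing state. Your sum and your interchange of sum and expectation should run to $H$, not $B$.

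\textbf{Stopping.} This is the point you left to ``verify,'' and it is a real gap. In the paper's process, Stop is a feasible action at every active history, so the run can end by $\pi$'s own choice with budget left. Suppose $T$ is simply the time the run ends and you set $V^{\pi_0}(h_T):=F(h_T)$. Then at such a history your extended value disagrees with the genuine $V^{\pi_0}(h_T)$, which continues with $\pi_0$.
\begin{itemize}
\item The step $q(h_{T-1},a)=\mathbb{E}[V^{\pi_0}(h_T)\mid h_{T-1},A_{T-1}=a]$ fails at the last term.
\item Whether $h_T$ counts as terminal would then be decided by $\pi$ at $h_T$, not by $h_{T-1}$ and $a$.
\item The forgone continuation value $F(h_T)-V^{\pi_0}(h_T)$ drops out of the sum, so the identity is off by $\mathbb{E}_\pi[F(h_T)-V^{\pi_0}(h_T)]$.
\end{itemize}

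The paper closes this by counting the stop itself as a decision, $A_{T-1}=\mathrm{Stop}$, which leads to an absorbing history. At that history $V^{\pi_0}=F$ holds by definition rather than by extension, since $\pi_0$ can only take the no-op there. The event $\{t<T\}$ is then just ``$h_t$ is active,'' which is $h_t$-measurable with no appeal to a budget check. The stopping decision contributes $q(h,\mathrm{Stop})-V^{\pi_0}(h)=F(h)-V^{\pi_0}(h)$ to the sum. With Stop counted as an action and the horizon taken to be $H$, your telescoping goes through as written.
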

\end{theorybox}

\textbf{Remark.} A larger opportunity $\Delta_t$ benefits performance only when selection realizes the additional value. At a fixed history, expanding $\mathcal{A}(h_t)$ while preserving existing action values and costs cannot decrease $q^\star(h_t)$. However, if the selected-action distribution remains unchanged, any increase in $q^\star(h_t)$ raises $\Delta_t$ and $\ell_t$ equally, leaving $\Delta_t-\ell_t$ unchanged. Thus, catalog expansion should be paired with selection that exploits the added choices.

\subsection{Selection Quality and Global Improvement}
\label{sec:global-improvement-condition}

Following the use of explicit model-capability assumptions in LLM selection theory~\cite{chen2025provable}, suppose that, at each reached history,
\begin{equation}
\Pr_{\pi}\!\left(q^\star(h_t)-q(h_t,A_t)\leq\epsilon_t\mid h_t\right)
\geq1-\delta_t,
\qquad 0\leq\epsilon_t,\delta_t\leq1.
\label{eq:near-optimal-selection}
\end{equation}
Since terminal utility lies in $[0,1]$, the action-value gap is at most one, giving $\ell_t\leq(1-\delta_t)\epsilon_t+\delta_t$. Combining this bound with Theorem~\ref{thm:global-adaptive-value} gives a lower bound on terminal improvement, proved in Appendix~\ref{app:selection-guarantee-proof}. Detailed theoretical remarks and discussions on the applicability of these assumptions to our empirical implementations are provided in Appendix~\ref{app:theory-remarks}.

\begin{theorybox}
\begin{theorem}[Global Performance Bound]
\label{thm:global-improvement}
Under the conditional selection assumption and bounded terminal utility,
\begin{equation}
J_B(\pi)-J_B(\pi_0)
\geq
\mathbb{E}_{\pi}
\sum_{t<T}
\left[\Delta_t-(1-\delta_t)\epsilon_t-\delta_t\right].
\label{eq:global-improvement-bound}
\end{equation}
\end{theorem}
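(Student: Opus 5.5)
The plan is to combine Theorem~\ref{thm:global-adaptive-value} with a pointwise upper bound on the selection loss $\ell_t$ at every reached history. Theorem~\ref{thm:global-adaptive-value} already expresses $J_B(\pi)-J_B(\pi_0)$ exactly as $\mathbb{E}_{\pi}[\sum_{t<T}(\Delta_t-\ell_t)]$. It therefore suffices to show that, almost surely at each reached $h_t$,
\[
\ell_t\leq(1-\delta_t)\epsilon_t+\delta_t .
\]
Substituting this bound inside the expectation then yields the claimed inequality, since the map $\ell_t\mapsto\Delta_t-\ell_t$ is decreasing and expectation is monotone.

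For the pointwise bound, fix $h_t$ and write the action gap as $G_t=q^\star(h_t)-q(h_t,A_t)$. By definition, $\ell_t=\mathbb{E}_{\pi}[G_t\mid h_t]$. Two facts hold for $G_t$:
\begin{itemize}
\item $G_t\geq 0$, since $q^\star$ is a maximum over $\mathcal{A}(h_t)$, which contains the selected action.
\item $G_t\leq 1$, since both $q(h_t,a)$ and $q^\star(h_t)$ are expected terminal utilities in $[0,1]$.
\end{itemize}
Split on the event $\mathcal{E}_t=\{G_t\leq\epsilon_t\}$. On $\mathcal{E}_t$ the gap is at most $\epsilon_t$; on its complement it is at most $1$. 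This gives
\[
\ell_t\leq\epsilon_t\Pr(\mathcal{E}_t\mid h_t)+\bigl(1-\Pr(\mathcal{E}_t\mid h_t)\bigr).
\]
Writing $p=\Pr(\mathcal{E}_t\mid h_t)$, the right-hand side equals $1-p(1-\epsilon_t)$. Because $\epsilon_t\leq 1$, this is nonincreasing in $p$. By the assumption in equation~\eqref{eq:near-optimal-selection}, $p\geq 1-\delta_t$, so
\[
\ell_t\leq 1-(1-\delta_t)(1-\epsilon_t)=(1-\delta_t)\epsilon_t+\delta_t .
\]

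The main subtlety is making the conditioning and the random horizon $T$ rigorous, rather than the algebra itself. There are two points to check. First, $\epsilon_t$ and $\delta_t$ may be history-dependent (measurable in $h_t$), so the bound must be applied conditionally inside the sum. Second, the exchange of the sum and expectation must be justified. I would handle both by writing the sum as $\sum_{t}\mathbf{1}\{t<T\}(\cdot)$ and noting that $\{t<T\}$ is determined by $h_t$ under the budgeted process of Appendix~\ref{app:theory-setup}. Each term is bounded, with $|\Delta_t-\ell_t|\leq 2$. I would assume $T$ is bounded by the budget $B$ (or at least integrable), so dominated convergence or the finite-sum tower property applies. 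The term-by-term inequality then passes through the expectation and gives the stated bound.
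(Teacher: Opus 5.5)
Your proposal is correct and follows the paper's proof. The paper also bounds $\ell_t$ by splitting the conditional expectation of the $[0,1]$-valued gap on the near-optimal event and then substitutes the bound into Theorem~\ref{thm:global-adaptive-value}; it writes the bound in terms of the failure probability $p_t\leq\delta_t$ rather than your success probability $p\geq 1-\delta_t$, and it settles your horizon concern through the finite decision cap $H$ with zero-contribution padding after absorption, not by assuming $T$ is bounded by $B$.
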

\end{theorybox}

\textbf{Remark.} For fixed $\delta_t$, reducing $\epsilon_t$ leaves a selection-loss bound approaching $\delta_t$. Thus, refining near-optimal choices alone may be insufficient when the probability of missing them remains high. This motivates addressing recurring poor selections alongside improving near-optimal choices; the appropriate allocation of effort depends on their attainable improvements and costs.

\textbf{Design implications.}
Together, the results motivate two complementary design principles: expose valuable alternatives to create opportunities ($\Delta_t$), and use diagnostic feedback alongside accumulated experience to improve selection ($\ell_t$). Allocating decision effort according to the current state aims to realize these opportunities while accounting for selection errors and resource costs. Section~\ref{sec:framework} instantiates these theory-motivated principles in \ours{}.

\section{The OptiCom Framework}
\label{sec:framework}

\begin{figure}[ht]
\vspace{-5mm}
    \centering
    \includegraphics[width=\textwidth]{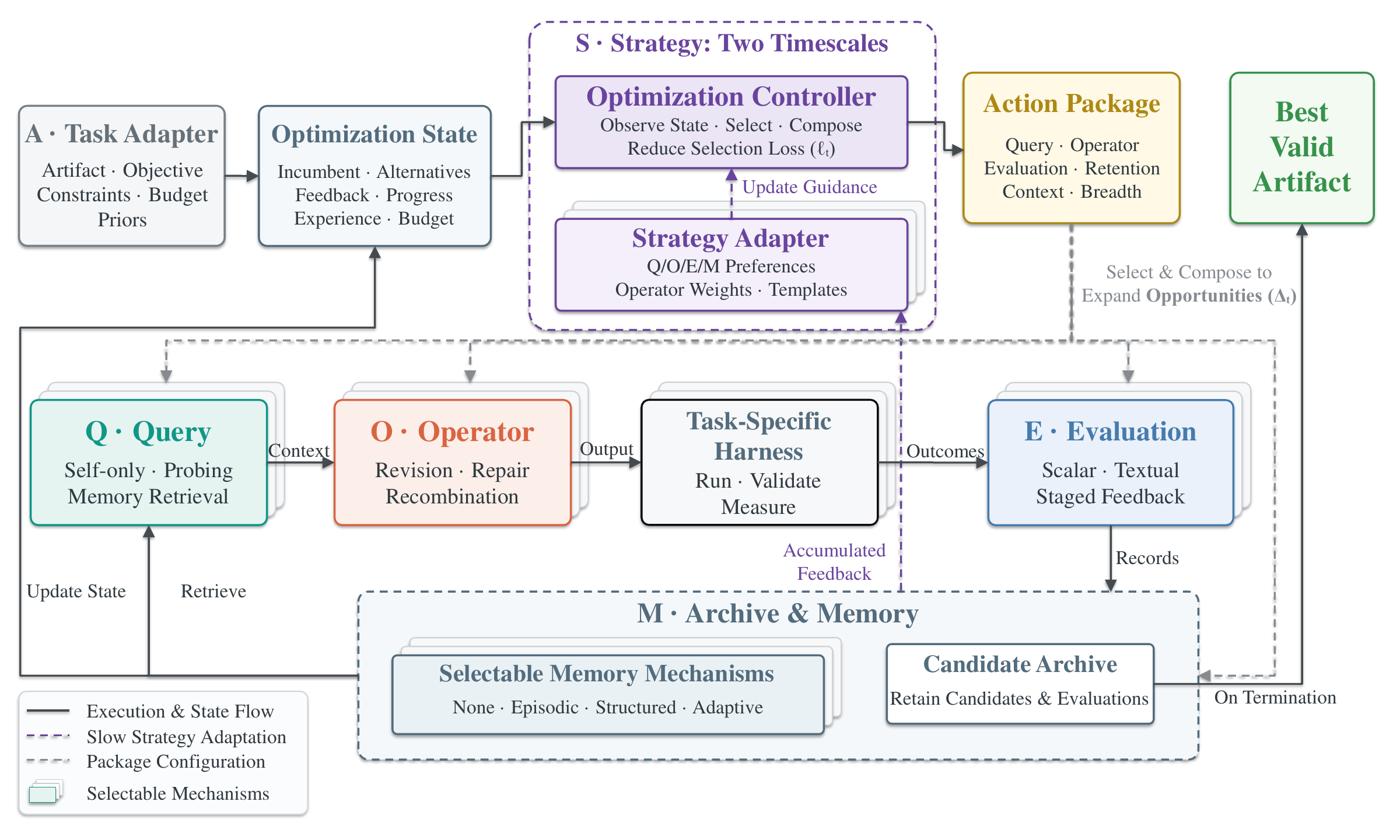}
    \caption{
        Overview of \ours{}.
        The Optimization Controller selects and composes Q/O/E/M mechanisms through an Action Package, while the slower Strategy Adapter updates the guidance used by subsequent Controller decisions.
        Solid arrows show execution and state flow; gray dashed arrows indicate configuration, and purple dashed arrows indicate slower strategy adaptation. Annotations involving $\Delta_t$ and $\ell_t$ highlight how specific architectural components are explicitly designed to expand search opportunities ($\Delta_t$) and reduce selection losses ($\ell_t$).
    }
    \label{fig:framework}
\vspace{-5mm}
\end{figure}

\textbf{Composing optimization mechanisms.}
The analysis in Section~\ref{sec:theory} identifies two factors governing adaptive optimization: the opportunities provided by available actions, $\Delta_t$, and the loss incurred when selecting among them, $\ell_t$. Guided by this decomposition, \ours{} dynamically combines complementary mechanisms within the shared configuration space. To maximize opportunities while mitigating selection loss, the framework operates through a continuous, state-conditioned execution loop driven by two core components: an Optimization Controller for immediate mechanism composition, and a slower Strategy Adapter for long-term guidance revision. Both components operate through LLM API calls with fixed model parameters (see Algorithm~\ref{alg:optimize-anything} in Appendix~\ref{app:algorithm}).

\textbf{State-conditioned execution and evaluation.}
At each iteration, the Optimization Controller observes the current optimization state---the incumbent, available alternatives, recent feedback, progress history, and remaining budget. It produces a structured Action Package specifying query and context scope, the update operator, generation breadth, evaluation effort, and retention policy. The framework acquires the requested evidence, generates candidates, and submits them to a task-specific Execution Harness. The harness runs the artifacts and returns the checks and measurements provided by the adopted benchmark pipeline (see Appendix~\ref{app:benchmark-details} for detailed benchmark configurations and pipeline specifications).

\textbf{Adapting strategy from accumulated experience.}
Following evaluation, candidates and outcomes are retained according to the selected archive and memory policies. The Strategy Adapter reviews accumulated successes, failures, and progress on a slower timescale, revising mechanism-selection preferences and prompt templates for subsequent Controller decisions. For example, recurring runtime errors may motivate greater emphasis on diagnostic queries. These updates change optimization guidance without modifying model weights or resetting the archive. The design uses accumulated evidence to inform later decisions; it does not estimate the theoretical continuation values or guarantee that every exploratory step will yield a long-term benefit. Complete component definitions, Action Package fields, and harness details appear in Appendix~\ref{app:implementation}.\author{
  Chenxing Wei$^{\dagger \S}$, Hong Wang$^{\circ}$, Ying He$^{ \dagger}$,  Fei Yu$^{ \ddagger}$, Yao Shu$^\wr$\thanks{corresponding author.}\\
$^\dagger$School of Computing and Data Science, The University of Hong Kong \\
$^\circ$University of Science and Technology of China \\
$^{\S}$Shenzhen Loop Area Institute \\
$^{\wr}$Hong Kong University of Science and Technology (Guangzhou) \\
$^{\ddagger}$School of Information Technology, Carleton University \\
\texttt{weichenxing@connect.hku.hk}, \texttt{yaoshu@hkust-gz.edu.cn}\\
}

\section{Experiments}
\label{sec:experiments}

\subsection{Experimental Setup}
\label{sec:experimental-setup}

We evaluate \ours{} across 32 benchmark groups spanning mathematics, systems, GPU kernels, algorithms, reasoning, creative generation, prompts, and quantum circuits, integrated from SkyDiscover~\cite{liu2026skydiscover}. The overall comparison comprises \ours{} and 13 method-inspired profiles (e.g., LATS, TextGrad, ProTeGi) executed within our shared framework to ensure evaluation consistency. Table~\ref{tab:main-results} additionally incorporates official SkyDiscover implementations of EvoX and AdaEvolve as external implementation comparisons. All main comparisons use Doubao-Seed-2.0-pro as the optimization backbone, capped at 30 iterations across five independent runs. We report both Mean and Max final scores. A common iteration cap does not equalize candidate counts, API calls, tokens, or elapsed time. Comprehensive details on scoring, aggregation rules, and model budgets are provided in Appendix~\ref{app:experimental-protocols}.

\begin{table*}[t]
    \centering
    \caption{
        Main results across seven representative benchmarks over five runs. Unqualified method names denote method-inspired framework profiles; ``official'' denotes the official SkyDiscover implementation. Scores follow task-specific scales. Higher scores are better. Bold and underlined values indicate the best and second-best distinct results.
    }
    \label{tab:main-results}
    \begingroup
    \small
    \setlength{\tabcolsep}{3pt}
    \renewcommand{\arraystretch}{1.15}
    \resizebox{\textwidth}{!}{%
    \begin{tabular}{@{}l*{14}{c}@{}}
        \toprule
        \textbf{Domain}
        & \multicolumn{2}{c}{Math}
        & \multicolumn{2}{c}{Systems}
        & \multicolumn{2}{c}{GPU}
        & \multicolumn{2}{c}{Algorithms}
        & \multicolumn{2}{c}{Reasoning}
        & \multicolumn{2}{c}{Prompts}
        & \multicolumn{2}{c}{Quantum} \\
        \midrule
        \textbf{Benchmark}
        & \multicolumn{2}{c}{Heilbronn}
        & \multicolumn{2}{c}{LLM-SQL}
        & \multicolumn{2}{c}{TriMul}
        & \multicolumn{2}{c}{Frontier-CS}
        & \multicolumn{2}{c}{ARC}
        & \multicolumn{2}{c}{HotpotQA}
        & \multicolumn{2}{c}{QNN} \\
        \cmidrule(lr){2-3}\cmidrule(lr){4-5}\cmidrule(lr){6-7}
        \cmidrule(lr){8-9}\cmidrule(lr){10-11}\cmidrule(lr){12-13}
        \cmidrule(lr){14-15}
        \textbf{Method}
        & Max & Mean & Max & Mean & Max & Mean & Max & Mean
        & Max & Mean & Max & Mean & Max & Mean \\
        \midrule
        ProTeGi & 0.828 & 0.813 & 0.630 & 0.622 & 2.541 & 2.531 & \underline{0.632} & \underline{0.619} & \underline{0.473} & \underline{0.461} & \underline{0.441} & \underline{0.427} & \underline{0.8833} & 0.8367 \\
        LATS & 0.887 & 0.867 & 0.686 & \underline{0.682} & 2.576 & 2.542 & 0.627 & 0.609 & 0.457 & 0.432 & 0.413 & 0.409 & 0.8500 & 0.8300 \\
        EvoX & 0.943 & 0.931 & 0.684 & 0.677 & \underline{2.593} & \underline{2.583} & 0.626 & 0.617 & 0.461 & 0.443 & 0.386 & 0.357 & 0.8167 & 0.8067 \\
        EvoX (official) & \underline{0.956} & \underline{0.941} & \underline{0.696} & 0.681 & 2.592 & 2.579 & 0.613 & 0.597 & \underline{0.473} & 0.451 & 0.397 & 0.363 & 0.8333 & 0.8133 \\
        AdaEvolve & 0.769 & 0.744 & 0.619 & 0.613 & 2.573 & 2.569 & 0.621 & \underline{0.619} & 0.413 & 0.406 & 0.397 & 0.339 & 0.8333 & 0.8200 \\
        AdaEvolve (official) & 0.769 & 0.744 & 0.619 & 0.613 & 2.573 & 2.569 & 0.619 & 0.616 & 0.423 & 0.409 & 0.409 & 0.359 & 0.8500 & \underline{0.8400} \\
        GEPA & 0.719 & 0.701 & 0.659 & 0.651 & 2.437 & 2.421 & 0.613 & 0.607 & 0.396 & 0.371 & 0.381 & 0.313 & 0.8000 & 0.7900 \\
        \midrule
        \textbf{\ours{}} & \textbf{0.961} & \textbf{0.953} & \textbf{0.698} & \textbf{0.696} & \textbf{2.654} & \textbf{2.616} & \textbf{0.693} & \textbf{0.683} & \textbf{0.503} & \textbf{0.491} & \textbf{0.513} & \textbf{0.501} & \textbf{0.9000} & \textbf{0.8967} \\
        \midrule
        Relative gain & \textcolor{gainred}{+0.52\%} & \textcolor{gainred}{+1.28\%} & \textcolor{gainred}{+0.29\%} & \textcolor{gainred}{+2.05\%} & \textcolor{gainred}{+2.35\%} & \textcolor{gainred}{+1.28\%} & \textcolor{gainred}{+9.65\%} & \textcolor{gainred}{+10.34\%} & \textcolor{gainred}{+6.34\%} & \textcolor{gainred}{+6.51\%} & \textcolor{gainred}{+16.33\%} & \textcolor{gainred}{+17.33\%} & \textcolor{gainred}{+1.89\%} & \textcolor{gainred}{+6.75\%} \\
        \bottomrule
    \end{tabular}%
    }
    \endgroup
\end{table*}

\subsection{Main Results}
\label{sec:main-results}

\textbf{Overall performance.}
Figure~\ref{fig:overall_results}(a) shows that \ours{} obtains the lowest average rank among the 14 evaluated configurations across 32 benchmark groups: 1.72 for five-run Max and 2.08 for five-run Mean, compared with 6.33 and 6.35 for the strongest competing profile, EvoX. Under Max, \ours{} attains the highest score on 23 groups, including two ties. Its leading position under both summaries indicates that the aggregate advantage is not confined to best-of-five performance. Table~\ref{tab:main-results} reports the highest Max and Mean for \ours{} on all seven representative benchmarks, including comparisons with the two official-code entries. Relative gains over the strongest baseline in each column range from 0.29\% to 16.33\% for Max and from 1.28\% to 17.33\% for Mean. These results concern the evaluated configurations under the reported iteration limit; the official-code comparisons cover two methods on seven benchmarks.

\begin{figure}[t]
\vspace{-6mm}
    \centering
    \includegraphics[width=\textwidth]{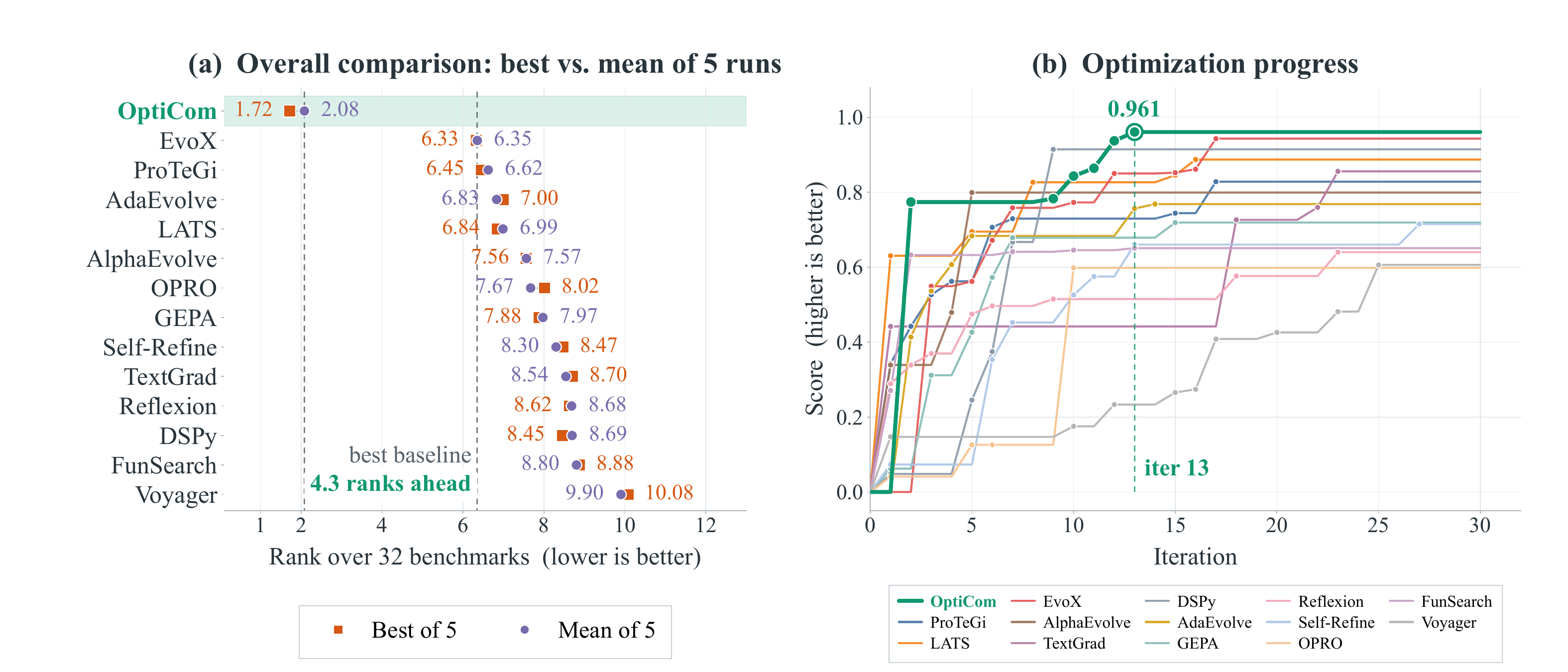}
    \vspace{-5mm}
    \caption{
        \textbf{(a)} Average within-group ranks of \ours{} and 13 method-inspired profiles across 32 benchmark groups, computed separately from five-run Max and Mean scores (lower is better).
        \textbf{(b)} Archived best-so-far scores over optimization iterations on Heilbronn Triangle.
    }
    \label{fig:overall_results}
\vspace{-6mm}
\end{figure}

\textbf{Optimization progress.}
Figure~\ref{fig:overall_results}(b) illustrates how improvements accumulate during a run. On Heilbronn Triangle, \ours{} reaches approximately 0.774 at iteration 2, makes limited progress through iteration 8, and then improves to 0.961 at iteration 13. This score exceeds the final scores of the displayed baseline profiles and remains unchanged through iteration 30. The trajectory shows that a temporary plateau need not indicate exhausted improvement opportunities. Additional trajectories reveal task-dependent refinement horizons and cases where specialized profiles finish higher (Appendix~\ref{app:optimization-trajectories}). These observations are consistent with the cumulative perspective in Section~\ref{sec:theory}, but score histories alone do not identify the effects of individual mechanism changes or establish equal-cost efficiency.

\subsection{Ablation Studies}
\label{sec:ablation-studies}

Table~\ref{tab:core-ablations} examines the roles of immediate selection, slower adaptation, and experience reuse. Removing the Strategy Adapter while retaining the state-conditioned Controller lowers mean scores by 0.240 on Heilbronn Triangle and 0.050 on HotpotQA. With the Adapter disabled in both variants, allowing the Controller to adjust Q/E/M and other action settings in addition to O improves mean scores over Operator-only adaptation by 0.016 and 0.023, respectively. Removing experience memory lowers the full framework's means by 0.060 and 0.018. Full \ours{} has the highest observed mean and lowest standard deviation among these core variants. Together, the comparisons support the benefits of broader action selection, strategy adaptation, and experience reuse in the tested settings. Full versus Fixed or Random composition changes both immediate selection and slower adaptation, so those contrasts do not isolate the Controller.

\begin{wraptable}{r}{0.46\textwidth}
\vspace{-4mm}
\begin{minipage}{\linewidth}
\centering
\scriptsize
\setlength{\tabcolsep}{2pt}
\renewcommand{\arraystretch}{1.12}
\caption{Core ablations with Doubao-Seed-2.0-pro. Scores report mean $\pm$ std over 5 runs.}
\label{tab:core-ablations}
\resizebox{\linewidth}{!}{%
\begin{tabular}{@{}lcc@{}}
\toprule
\textbf{Config.} & \textbf{Heilbronn} & \textbf{HotpotQA} \\
\midrule
Full \ours{} & $\mathbf{0.953 \pm 0.008}$ & $\mathbf{0.501 \pm 0.017}$ \\
Fixed composition & $0.704 \pm 0.034$ & $0.409 \pm 0.127$ \\
Random composition & $0.672 \pm 0.171$ & $0.381 \pm 0.153$ \\
w/o Strat. Adapter & $0.713 \pm 0.027$ & $0.451 \pm 0.058$ \\
Operator-only adapt. & $0.697 \pm 0.056$ & $0.428 \pm 0.076$ \\
w/o Exp. Memory & $0.893 \pm 0.024$ & $0.483 \pm 0.039$ \\
\bottomrule
\end{tabular}%
}
\end{minipage}
\vspace{-3mm}
\end{wraptable}

\textbf{Adaptation frequency and initialization robustness.}
On Heilbronn Triangle, event-triggered adaptation reaches a mean score of 0.953 with 10 Adapter calls, compared with 0.954 and 29 calls for every-iteration adaptation (Appendix~\ref{app:ablation-results}). This is a 65.5\% reduction in Adapter calls, while total API tokens decrease only from 877K to 865K (approximately 1.37\%). The result supports similar observed quality with fewer adaptation calls, rather than a substantial reduction in total computation. Across three initial compositions, adaptive mean scores range from 0.951 to 0.956, whereas fixed-composition means range from 0.813 to 0.857. This smaller range suggests reduced sensitivity to the tested initial choices; it does not establish an optimality bound or statistical equivalence across initializations.

\textbf{Cross-backbone robustness and model elevation.}
Table~\ref{tab:cross-backbone} evaluates five diverse foundational models under identical optimization constraints. The results establish the universal applicability of \ours{}: across every evaluated backbone--task pair, enabling the Strategy Adapter yields substantial mean-score improvements and strictly compresses variance compared to the static configuration. This consistent delta yields two critical insights. First, the framework's adaptive guidance is fundamentally robust across different model alignments and pre-training distributions; it is not overfitted to the idiosyncrasies of a single LLM. Second, the Adapter operates simultaneously as a performance ``floor raiser'' for standard models (e.g., driving a remarkable +0.240 gain for Doubao on Heilbronn) and a ``ceiling breaker'' for frontier models. For instance, equipping GPT-5.5 with Full \ours{} pushes its Heilbronn performance from a static 0.796 to a near-optimal 0.987, while collapsing the standard deviation to a mere $\pm 0.002$. Ultimately, this demonstrates that dynamic strategy adaptation structurally stabilizes the search process, acting as an indispensable performance multiplier regardless of the underlying model's raw capabilities.

\begin{figure}[t]
    \vspace{-5mm}
    \centering
    \includegraphics[width=\linewidth]{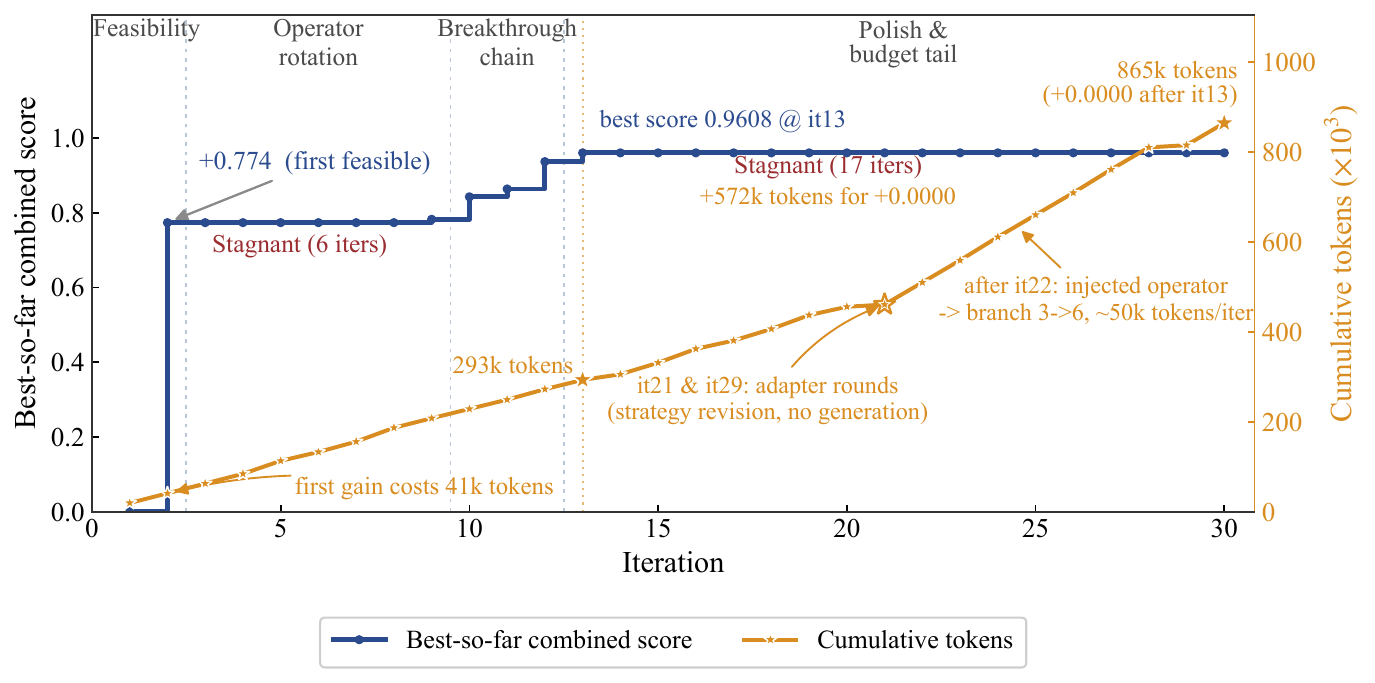}
    \vspace{-5mm}
    \caption{Heilbronn Triangle case study: best-so-far combined score (blue) and recorded cumulative API tokens (orange). The run reaches 0.9608 at iteration 13 using $\approx$293k tokens, then spends another 572k through iteration 30 without improvement. Selected execution events are annotated; their causal effects and the impossibility of further improvement are not established.}
    \label{fig:heilbronn-case-study}
    \vspace{-2mm}
\end{figure}

\begin{table}[t]
    \centering
    \small
    \setlength{\tabcolsep}{4pt}
    \renewcommand{\arraystretch}{1.12}
    \caption{Cross-backbone evaluation: mean $\pm$ SD over five runs (at most 30 iterations). Gain denotes improvement from enabling the Adapter. Bold and underlining mark the best and second-best distinct values per column, separately for means ($\uparrow$), SDs ($\downarrow$), and gains ($\uparrow$).}
    \label{tab:cross-backbone}
    \resizebox{\linewidth}{!}{%
    \begin{tabular}{@{}lccc@{\hspace{10pt}}ccc@{}}
        \toprule
        & \multicolumn{3}{c}{\textbf{Heilbronn Triangle}}
        & \multicolumn{3}{c}{\textbf{HotpotQA}} \\
        \cmidrule(lr){2-4}
        \cmidrule(lr){5-7}
        Backbone
        & w/o Adapter & Full \ours{} & Gain
        & w/o Adapter & Full \ours{} & Gain \\
        \midrule
        Doubao-Seed-2.0-pro
        & $0.713 \pm 0.027$
        & $0.953 \pm 0.008$
        & $\mathbf{+0.240}$
        & $0.451 \pm 0.058$
        & $0.501 \pm 0.017$
        & $+0.050$ \\
        GPT-5.5
        & $\mathbf{0.796} \pm \mathbf{0.011}$
        & $\mathbf{0.987} \pm \mathbf{0.002}$
        & $\underline{+0.191}$
        & $\mathbf{0.471} \pm \mathbf{0.037}$
        & $\underline{0.553} \pm \underline{0.012}$
        & $\underline{+0.082}$ \\
        Claude Opus 4.6
        & $0.787 \pm \underline{0.012}$
        & $0.973 \pm 0.007$
        & $+0.186$
        & $\underline{0.469} \pm \underline{0.041}$
        & $\mathbf{0.554} \pm \mathbf{0.010}$
        & $\mathbf{+0.085}$ \\
        GLM-5.3
        & $0.783 \pm 0.014$
        & $0.969 \pm 0.008$
        & $+0.186$
        & $0.461 \pm 0.059$
        & $0.523 \pm 0.019$
        & $+0.062$ \\
        Kimi-K3
        & $\underline{0.791} \pm \mathbf{0.011}$
        & $\underline{0.974} \pm \underline{0.004}$
        & $+0.183$
        & $0.465 \pm 0.053$
        & $0.547 \pm 0.016$
        & $\underline{+0.082}$ \\
        \bottomrule
    \end{tabular}%
    }
\end{table}

\subsection{Token-Aware Optimization Case Study}
\label{sec:test-time-efficiency}

Figure~\ref{fig:heilbronn-case-study} details a token-aware Heilbronn Triangle trajectory, exposing the stark nonlinearities of test-time compute scaling. After finding an initial feasible solution (0.774) at iteration 2, the optimizer endures a six-iteration plateau before surging to a peak score of 0.9608 at iteration 13. This sequence yields a crucial insight: \emph{early stagnation is often deceptive}. Relying on static, low-resource operators based on lack of immediate progress would forfeit significant delayed returns, underscoring the necessity of dynamically triggered exploration to break plateaus.

However, the post-peak phase reveals the severe marginal cost of unguided scaling. From iteration 13 onward, the score flatlines while cumulative token consumption explodes from roughly 293k to 865k. Driven by Adapter-triggered exploration efforts (e.g., doubling generation branch width after iteration 22), per-iteration costs soar to $\sim$50k tokens without yielding better candidates. This exposes a fundamental asymmetry in LLM optimization: while broadening search is essential to shatter early local optima, maintaining aggressive exploration at the performance frontier risks massive computational waste. This token-aware perspective directly validates \ours{}'s core premise: rather than deploying static, resource-heavy mechanisms throughout an entire run, an optimizer must dynamically modulate its execution strategy to balance breakthrough opportunities ($\Delta_t$) against their cumulative evaluation and selection costs ($\ell_t$).

\section{Conclusions and Limitations}
\label{sec:conclusion}

We introduced \ours{}, a shared AQOEMS representation and framework for state-conditioned composition of optimization mechanisms. Evaluations across 32 benchmark groups and targeted ablations support its effectiveness under the tested iteration limits. The token-aware case study highlights the cost of continued search without further improvement, while task-specific evaluation limitations and incomplete run-level records constrain the conclusions.
\newpage
\medskip

\subsection*{AI use statement}

We have not used generative AI tools for any of the tasks whose disclosure is
required---proposing or refining hypotheses, developing conceptual frameworks,
designing methodology or experiments, implementing methods, processing or
cleaning data, analysing or interpreting results, formulating or proving
mathematical claims, or translation---all of which were carried out entirely by
the authors. We did use generative AI tools for two tasks whose disclosure is
recommended: creating and refining the figures in this paper, and copy-editing
the manuscript to improve grammar and readability. All AI-assisted figures were
verified against the raw experimental outputs by the authors, and all
AI-suggested text edits were reviewed by an author and restricted to wording,
with no change to the technical content, claims, or conclusions. We have
reviewed all AI-assisted work and take responsibility for the final content of
this work, including text, claims, and artifacts produced with the aid of
generative AI.

\bibliography{iclr2027_conference}
\bibliographystyle{iclr2027_conference}

\newpage
\medskip

\appendix

\section{Optimizer Mapping and Implementation Alignment}
\label{app:optimizer-mapping}

\subsection{Expanded Related Frameworks}
\label{app:expanded-related-frameworks}

To fully contextualize \ours{}, we analyze its relationship with recent large-scale LLM optimization frameworks and platforms, explicitly delineating our unique contributions against each:

\textbf{LLM4AD (Large Language Models for Algorithm Design)~\cite{liu2024llm4ad}:} 
LLM4AD focuses on automating the discovery of programmatic algorithms using LLMs. It typically employs fixed evolutionary search pipelines (e.g., Evolution of Heuristics) to iteratively generate, evaluate, and mutate code-based solutions. 
\emph{Difference:} While LLM4AD is domain-centric (algorithm design) and prescribes specific, hardcoded evolutionary search topologies, \ours{} is a general-purpose, state-conditioned framework. Rather than executing a static pipeline, \ours{} dynamically alters its foundational search topology—fluidly switching from evolutionary recombination to error-guided local revision—based on real-time optimization state and stagnation signals.

\textbf{SkyDiscover~\cite{liu2026skydiscover}:} 
SkyDiscover serves as a comprehensive, multi-domain benchmarking platform designed to standardize the evaluation of LLM optimizers across diverse fields (e.g., mathematics, systems, quantum circuits). It provides execution harnesses and static baseline implementations of various methods. 
\emph{Difference:} SkyDiscover is the \emph{evaluation environment}, whereas \ours{} is the \emph{active optimization agent}. We utilize SkyDiscover's rigorous tasks as our testbed, but fundamentally differ by introducing a dynamic Controller-Adapter architecture capable of state-conditioned mechanism routing, rather than acting as another static algorithm profile on the benchmark.

\textbf{Universal Optimization APIs (e.g., optimize\_anything~\cite{agrawal2026optimizeanything}):} 
Recent works like \texttt{optimize\_anything} propose unified APIs that treat diverse tasks (from agent architecture search to CUDA kernel generation) as general text optimization problems. They typically rely on a single, powerful LLM proposer guided by textual feedback to iteratively improve an artifact. 
\emph{Difference:} While universal APIs provide a vital abstraction for defining tasks, they often rely on a monolithic search strategy (e.g., a fixed prompt template generating candidates based on recent feedback). \ours{} subsumes such operations into its unified AQOEMS configuration space. Instead of committing to a single proposal strategy for an entire run, \ours{} treats different search mechanisms (e.g., broad tree exploration, local diagnostic querying) as composable Action Packages, transitioning between their behaviors when the trajectory demands it.

\subsection{Scope of the Contribution}
\label{app:scope-of-contribution}

The primary contribution of this work lies in formalizing the unified AQOEMS representation and providing an executable, state-conditioned composition interface (\ours{}), validated through extensive empirical evaluation across 32 benchmark groups. Rather than introducing new individual heuristics for adaptive algorithm selection, evolutionary search, reflection, or self-modification, our work provides the rigorous architectural foundation for their dynamic coordination. 

Furthermore, the theoretical decomposition presented in Section~\ref{sec:theory} is purposefully designed to explicitly distinguish between search opportunities and selection losses in LLM optimization; it specializes existing performance-difference identities to motivate our framework's design, rather than claiming a fundamentally new mathematical theorem. Finally, the framework's extensibility ensures that new query, operator, evaluation, and memory mechanisms can be seamlessly registered via strict task contracts, though minor structural alignments may be required to express highly idiosyncratic legacy optimizers within our shared interfaces.

\subsection{AQOEMS Responsibilities and Mapping Criteria}
\label{app:mapping-criteria}

AQOEMS provides a functional decomposition of an optimizer through
$C=(A,Q,O,E,M,S)$. These dimensions describe what is optimized, what
information is acquired, how candidates are updated, how outcomes are
evaluated, what experience is retained, and how these decisions are
coordinated. They do not prescribe six separate modules or a fixed
execution order. Functions may share an implementation, and not every
dimension requires an explicit mechanism: unused functions may take
null or minimal forms. Table~\ref{tab:aqoems} summarizes the
responsibilities and illustrative realizations of each dimension.

\begin{table*}[h]
    \centering
    \small
    \setlength{\tabcolsep}{4pt}
    \renewcommand{\arraystretch}{1.15}
    \caption{
        Functional dimensions of AQOEMS.
        The dimensions describe responsibilities rather than mandatory
        modules or sequential execution steps. The examples are possible
        realizations, not required components; their use may depend on
        the optimization state through $S$.
    }
    \label{tab:aqoems}
    \begin{tabularx}{\textwidth}{
        @{}
        p{0.11\textwidth}
        >{\raggedright\arraybackslash}p{0.19\textwidth}
        >{\raggedright\arraybackslash}X
        >{\raggedright\arraybackslash}X
        @{}
    }
        \toprule
        \textbf{Dimension} &
        \textbf{Optimization concept} &
        \textbf{Functional responsibility} &
        \textbf{Illustrative realizations} \\
        \midrule

        $A$: Artifact &
        Solution representation &
        What is optimized, and how candidates are represented. &
        Source code, numerical parameters, mathematical constructions,
        prompts, or images. \\
        \addlinespace

        $Q$: Query &
        Information acquisition &
        What evidence and context are acquired or assembled
        to inform an update. &
        Inspect selected failures, request profiling evidence,
        retrieve past attempts, or use only the current candidate
        and feedback. \\
        \addlinespace

        $O$: Operator &
        Search and update operations &
        How candidates are generated or modified. &
        Generate a new candidate, repair an error, apply a local
        revision, or recombine existing candidates. \\
        \addlinespace

        $E$: Evaluation &
        Objective and constraint assessment &
        How candidate quality and feasibility are assessed,
        and what feedback is returned. &
        Correctness checks, runtime measurements, diagnostic tests,
        or higher-fidelity verification. \\
        \addlinespace

        $M$: Memory &
        Search history and experience &
        What information persists across iterations,
        and how it is maintained. &
        Retain an incumbent, a candidate archive, evaluation records,
        or reusable lessons; omit persistent experience storage
        when unused. \\
        \addlinespace

        $S$: Strategy &
        Search control &
        How the other functions are coordinated and resources
        are allocated as optimization proceeds. &
        Follow a prescribed schedule, apply state-dependent rules,
        or adapt selection preferences using accumulated outcomes. \\
        \bottomrule
    \end{tabularx}
\end{table*}

\paragraph{Functional boundaries.}
The dimensions distinguish roles within an optimization process,
even when those roles share code or model calls. For example, $E$
may execute a candidate program and return correctness results,
runtime measurements, and failure records. $M$ specifies how these
records and associated candidates persist across iterations, while
$Q$ specifies how selected failures, additional measurements, or
stored experiences are obtained and assembled for a subsequent
update. $O$ defines the transformation applied to the candidate,
and $S$ coordinates which information-acquisition and update
mechanisms are used and with what resources. Thus, retrieving a
past attempt involves both the storage functionality of $M$ and
the information-access functionality of $Q$. Likewise, $Q$ may
request additional measurements through procedures defined by $E$.
These interactions do not require separate implementations for
each responsibility.

\paragraph{State-dependent realization.}
The mechanisms used to fulfill these responsibilities may depend
on the current candidate, unresolved failures, accumulated experience,
and remaining budget. For instance, an invalid program may trigger
failure inspection and repair, whereas a valid but slow program may
trigger profiling and a targeted performance revision. This does not
require every dimension to change at every iteration: the artifact
representation may remain fixed while context selection, update
operators, and evaluation depth vary. Such variation must remain
consistent with the task objective and feasibility requirements;
changing an evaluation procedure does not by itself redefine the
optimization problem.

\paragraph{Configurations and execution policies.}
A configuration specifies mechanisms and coordinating rules rather
than a predetermined sequence of actions. In particular, $S$ may
include conditional decisions or internal adaptation based on the
optimization state. Representing an existing optimizer as one point
in the configuration space can express its adaptive behavior; a mapping alone does not verify that an experimental profile reproduces it.
Holding that configuration fixed means retaining its defining
mechanisms and rules, not forcing identical actions across iterations.
The reference policy in Section~\ref{sec:theory} uses this distinction: it may contain state-dependent or internally adaptive rules. The theory does not equate a fixed configuration with a constant action.

\paragraph{Mapping criteria.}
We map existing systems according to their implemented functionality,
rather than their original module names. A mapping identifies the
candidate representation, information sources, update operations,
evaluation procedures, persistent state, and coordinating rules.
Where a responsibility is implicit, shared with another component,
or absent, the mapping records that fact instead of introducing an
additional mechanism. Internal adaptation is included in $S$, and
task-specific assumptions are retained. A conceptual mapping alone
does not establish implementation equivalence: any changes to prompts,
operators, feedback access, or resource accounting must be documented
when instantiating a method within the framework. Complete mappings
are provided in Appendix~\ref{app:complete-mappings}, with
implementation differences and alignment checks in
Appendix~\ref{app:alignment-checks}.

\subsection{Complete AQOEMS Mappings}
\label{app:complete-mappings}

Table~\ref{tab:complete-optimizer-mapping} provides functional mappings
for the methods associated with our baseline configurations and for
two additional conceptual references, Trace and metaTextGrad.
The mappings describe the published mechanisms; they are not, by
themselves, claims that every mechanism is reproduced in our
experimental implementation.
The correspondence between these mechanisms and the registered
implementations is addressed in Appendix~\ref{app:alignment-checks}.

We distinguish methods with a corresponding registered configuration
(\textbf{B}) from conceptual mappings only (\textbf{C}).
The B/C annotations distinguish the 13 method-inspired baseline profiles evaluated here from conceptual references; conceptual entries are not empirical comparisons. For framework-level entries, the mapping describes
a family of configurations whose concrete behavior depends on the
selected optimizer.

\begingroup
\small
\setlength{\tabcolsep}{4pt}
\setlength{\LTcapwidth}{\textwidth}
\renewcommand{\arraystretch}{1.12}

\begin{longtable}{
    @{}
    >{\raggedright\arraybackslash}p{0.17\textwidth}
    >{\raggedright\arraybackslash}p{0.385\textwidth}
    >{\raggedright\arraybackslash}p{\dimexpr0.445\textwidth-16pt\relax}
    @{}
}
    \caption{
        Complete functional mappings. B denotes a corresponding baseline profile in the supplied registry; C denotes conceptual coverage only. B does not certify equivalence to the original implementation. The two right columns jointly specify all six AQOEMS dimensions.
    }
    \label{tab:complete-optimizer-mapping}\\
    \toprule
    \textbf{Method / status} &
    \textbf{Artifact, query, and operator} &
    \textbf{Evaluation, memory, and strategy} \\
    \midrule
    \endfirsthead

    \multicolumn{3}{l}{
        \tablename~\thetable\ continued
    }\\
    \toprule
    \textbf{Method / status} &
    \textbf{Artifact, query, and operator} &
    \textbf{Evaluation, memory, and strategy} \\
    \midrule
    \endhead

    \midrule
    \multicolumn{3}{r}{Continued on next page}\\
    \endfoot

    \bottomrule
    \endlastfoot

    OPRO~\cite{yang2024large}
    \newline \textbf{B}
    &
    \textbf{A:} Candidate solutions, including prompts.
    \newline
    \textbf{Q:} Task description and selected scored solutions.
    \newline
    \textbf{O:} Generate proposals conditioned on that history.
    &
    \textbf{E:} Task objective values.
    \newline
    \textbf{M:} Evaluated candidate--score records.
    \newline
    \textbf{S:} Repeated generation and evaluation with
    history selection and stopping rules.
    \\
    \addlinespace

    TextGrad~\cite{yuksekgonul2025optimizing}
    \newline \textbf{B}
    &
    \textbf{A:} Optimizable variables in a computational graph.
    \newline
    \textbf{Q:} Relevant graph context and propagated feedback.
    \newline
    \textbf{O:} Variable updates guided by textual feedback.
    &
    \textbf{E:} Objective evaluation and associated criticism.
    \newline
    \textbf{M:} Graph state, current variables, and
    optimizer-dependent update history.
    \newline
    \textbf{S:} Feedback propagation and variable-update procedure.
    \\
    \addlinespace

    ProTeGi~\cite{pryzant2023automatic}
    \newline \textbf{B}
    &
    \textbf{A:} Task prompts.
    \newline
    \textbf{Q:} Minibatch examples and observed errors.
    \newline
    \textbf{O:} Generate textual critiques and corresponding
    prompt revisions.
    &
    \textbf{E:} Predictive performance on evaluation examples.
    \newline
    \textbf{M:} Beam candidates and evaluation statistics.
    \newline
    \textbf{S:} Beam expansion and bandit-based candidate selection.
    \\
    \addlinespace

    Reflexion~\cite{shinn2023reflexion}
    \newline \textbf{B}
    &
    \textbf{A:} Task attempts, including programs or action trajectories.
    \newline
    \textbf{Q:} Current feedback and previous verbal reflections.
    \newline
    \textbf{O:} Produce a subsequent attempt informed by reflection.
    &
    \textbf{E:} Task-dependent external or internal feedback.
    \newline
    \textbf{M:} Episodic verbal reflections.
    \newline
    \textbf{S:} Attempt, evaluate, reflect, and retry.
    \\
    \addlinespace

    GEPA~\cite{agrawal2025gepa}
    \newline \textbf{B}
    &
    \textbf{A:} Prompts in an AI system.
    \newline
    \textbf{Q:} Execution trajectories, diagnostics,
    and selected candidate context.
    \newline
    \textbf{O:} Reflective mutation and combination of
    complementary candidates.
    &
    \textbf{E:} Task evaluations, including performance across examples.
    \newline
    \textbf{M:} Candidate pool, scores, and ancestry.
    \newline
    \textbf{S:} Pareto-aware selection and evolutionary updates
    under an evaluation budget.
    \\
    \addlinespace

    FunSearch~\cite{FunSearch2023}
    \newline \textbf{B}
    &
    \textbf{A:} Functions within a program specification.
    \newline
    \textbf{Q:} Selected prior functions assembled into a prompt.
    \newline
    \textbf{O:} LLM-generated function variants.
    &
    \textbf{E:} Automated execution and scoring.
    \newline
    \textbf{M:} An island-structured database of evaluated programs.
    \newline
    \textbf{S:} Program sampling, insertion, and island reset rules.
    \\
    \addlinespace

    AlphaEvolve~\cite{novikov2025alphaevolvecodingagentscientific}
    \newline \textbf{B}
    &
    \textbf{A:} Programs implementing candidate solutions.
    \newline
    \textbf{Q:} Selected programs, evaluation feedback,
    and task context.
    \newline
    \textbf{O:} LLM-proposed code changes.
    &
    \textbf{E:} Automated evaluators, potentially organized in stages.
    \newline
    \textbf{M:} Evaluated program database and associated metadata.
    \newline
    \textbf{S:} Evolutionary sampling, evaluation,
    and population management.
    \\
    \addlinespace

    AdaEvolve~\cite{cemri2026adaevolveadaptivellmdriven}
    \newline \textbf{B}
    &
    \textbf{A:} Candidate programs.
    \newline
    \textbf{Q:} Selected program context and improvement evidence.
    \newline
    \textbf{O:} LLM-generated program variations.
    &
    \textbf{E:} Program fitness.
    \newline
    \textbf{M:} Island populations and accumulated improvement statistics.
    \newline
    \textbf{S:} Adaptive exploration intensity, inter-island
    resource allocation, and meta-guidance.
    \\
    \addlinespace

    EvoX~\cite{liu2026evoxmetaevolutionautomateddiscovery}
    \newline \textbf{B}
    &
    \textbf{A:} Candidate programs; search procedures
    at the meta level.
    \newline
    \textbf{Q:} Candidate history and evidence about search performance.
    \newline
    \textbf{O:} Program variation and meta-level modification
    of search procedures.
    &
    \textbf{E:} Candidate quality and the performance
    induced by search strategies.
    \newline
    \textbf{M:} Candidate and strategy histories.
    \newline
    \textbf{S:} Coupled evolution of solutions and
    their generation and management procedures.
    \\
    \addlinespace

    Voyager~\cite{wang2023voyager}
    \newline \textbf{B}
    &
    \textbf{A:} Executable skill programs.
    \newline
    \textbf{Q:} Environment observations, errors,
    and retrieved skills.
    \newline
    \textbf{O:} Skill generation and iterative program revision.
    &
    \textbf{E:} Environment feedback and task-success verification.
    \newline
    \textbf{M:} A reusable skill library.
    \newline
    \textbf{S:} Automatic curriculum and iterative skill acquisition.
    \\
    \addlinespace

    DSPy~\cite{khattab2024dspy}
    \newline \textbf{B}
    &
    \textbf{A:} Parameters of an LM program,
    such as instructions and demonstrations.
    \newline
    \textbf{Q:} Training examples and optimizer-dependent traces.
    \newline
    \textbf{O:} Updates supplied by the selected DSPy optimizer.
    &
    \textbf{E:} A user-specified task metric.
    \newline
    \textbf{M:} Optimizer-dependent candidate, demonstration,
    and search state.
    \newline
    \textbf{S:} The selected compilation and search procedure;
    DSPy does not specify a single universal optimizer.
    \\
    \addlinespace

    Self-Refine~\cite{madaan2023selfrefine}
    \newline \textbf{B}
    &
    \textbf{A:} Generated outputs.
    \newline
    \textbf{Q:} The current output and feedback context.
    \newline
    \textbf{O:} Revision using self-generated feedback.
    &
    \textbf{E:} LLM-generated assessment of the output.
    \newline
    \textbf{M:} Outputs and feedback retained in the refinement context.
    \newline
    \textbf{S:} Alternating feedback and refinement
    until a stopping condition is met.
    \\
    \addlinespace

    LATS~\cite{zhou2024lats}
    \newline \textbf{B}
    &
    \textbf{A:} Partial and complete action or reasoning trajectories.
    \newline
    \textbf{Q:} Environment observations and reflections.
    \newline
    \textbf{O:} Generate and expand candidate trajectory continuations.
    &
    \textbf{E:} Environment rewards and LM-based value estimates.
    \newline
    \textbf{M:} Search-tree statistics and reflection history.
    \newline
    \textbf{S:} Monte Carlo tree search with selection,
    expansion, evaluation, and backpropagation.
    \\
    \addlinespace

    Trace / OptoPrime~\cite{trace2024}
    \newline \textbf{C}
    &
    \textbf{A:} Optimizable parameters of computational workflows.
    \newline
    \textbf{Q:} Execution traces and associated feedback.
    \newline
    \textbf{O:} Updates supplied by OptoPrime or another
    optimizer connected to Trace.
    &
    \textbf{E:} Workflow feedback supplied through a trace oracle.
    \newline
    \textbf{M:} Workflow state and optimizer-dependent history.
    \newline
    \textbf{S:} The selected optimizer's update procedure.
    \\
    \addlinespace

    metaTextGrad~\cite{metatextgrad2025}
    \newline \textbf{C}
    &
    \textbf{A:} Optimizer prompts and compositions at the outer level.
    \newline
    \textbf{Q:} Reference optimizers and validation evidence.
    \newline
    \textbf{O:} Prompt refinement and structural composition
    of optimizers.
    &
    \textbf{E:} Downstream validation performance after optimization.
    \newline
    \textbf{M:} Reference and previously evaluated optimizer candidates.
    \newline
    \textbf{S:} Meta-level proposal, evaluation, and selection,
    with separate prompt and structure optimization.
    \\

\end{longtable}
\endgroup

\paragraph{Frameworks and nested optimization.}
A framework such as DSPy or Trace corresponds to multiple possible
configurations because its update rules depend on the optimizer
attached to it. Naming the framework alone is therefore insufficient
to identify an experimental baseline.
Trace supplies an execution-trace abstraction for generative
optimization, while AQOEMS describes the responsibilities that
constitute the optimizer operating on such information.
For metaTextGrad, the mapping in
Table~\ref{tab:complete-optimizer-mapping} is made at the outer level,
where optimizers are the artifacts being improved.
Its inner optimization process can be represented by a separate
AQOEMS configuration.
The same distinction applies when EvoX modifies the procedures used
by its candidate search. These nested representations preserve the
level at which each decision is made.

\paragraph{Task adaptation.}
The artifact and evaluator in an experimental configuration may differ
from those in the source method's original application.
For example, transferring a prompt-update mechanism to program
optimization changes the artifact representation and evaluation
interface. Such an instance is a task-adapted configuration, whose
relationship to the source method depends on the mechanisms retained.
Similarly, representing Voyager's skill-improvement loop does not
make a bounded optimization benchmark equivalent to its original
open-ended environment.
The shared representation exposes these adaptations so that they
can be reported explicitly.

\paragraph{Memory and strategy are functional, not nominal.}
A population, search tree, beam, or in-prompt candidate history
constitutes memory even when the original implementation has no
module named ``memory.'' Likewise, a method with no explicit
strategy-adaptation module still has coordinating rules in $S$.
Conversely, an adaptive $S$ can be part of one fixed configuration.
These conventions prevent implementation labels such as
``no memory'' or ``no strategy'' from obscuring state and control
that are present elsewhere in the execution process.

\subsection{Implementation Differences and Alignment Checks}
\label{app:alignment-checks}

The mappings in Appendix~\ref{app:complete-mappings} describe how existing optimizers can be instantiated within the shared configuration space. A functional mapping identifies the responsibilities of an optimizer, but does not by itself establish equivalence to its original implementation. We therefore distinguish the mechanism that defines a method from the interfaces used to execute it. For example, an evolutionary method is characterized by how it selects and combines candidates, whereas a feedback-guided method is characterized by how feedback informs subsequent revisions. These mechanisms can operate through shared candidate, evaluation, and memory interfaces without requiring identical prompts, storage formats, or execution infrastructure.

An instantiation should preserve the dependencies that make its defining mechanism effective. Retaining a reflection module, for instance, is insufficient if its output is never supplied to subsequent attempts; similarly, retaining a candidate archive is insufficient if the selection procedure does not use it. Table~\ref{tab:instantiation-checks} organizes alignment checks around such observable dependencies. Components may be stateful or randomized, and a fixed configuration may contain an adaptive strategy. Alignment therefore concerns the method's decision rules and information flow, rather than requiring the same action at every iteration.

\begin{table}[ht]
    \centering
    \small
    \setlength{\tabcolsep}{4pt}
    \renewcommand{\arraystretch}{1.15}
    \caption{Criteria for checking framework instantiations. These checks distinguish preservation of a method's defining mechanism from changes introduced by shared interfaces. The table specifies verification criteria rather than certifying equivalence to official implementations.}
    \label{tab:instantiation-checks}
    \begin{tabular}{
        @{}
        >{\raggedright\arraybackslash}p{0.19\textwidth}
        >{\raggedright\arraybackslash}p{0.35\textwidth}
        >{\raggedright\arraybackslash}p{\dimexpr0.46\textwidth-16pt\relax}
        @{}
    }
        \toprule
        \textbf{Aspect} &
        \textbf{Mechanism to preserve} &
        \textbf{Alignment check} \\
        \midrule
        Candidate generation &
        The information and update rule used to produce new candidates. &
        Check that generation receives the intended parents, examples, feedback, or revision instructions. \\

        Selection and search &
        The method's candidate selection, branching, population, or tree-search rule. &
        Check that these rules affect executed actions, rather than appearing only in configuration descriptions. \\

        Feedback use &
        The feedback representation and its role in subsequent optimization. &
        Check that evaluation records, reflections, or textual critiques reach the decisions they are intended to guide. \\

        Memory and reuse &
        The information retained across iterations and the rule for retrieving it. &
        Check that retained candidates or experiences remain available and are retrieved according to the configured mechanism. \\

        Strategy adaptation &
        The trigger, evidence, and scope of changes to the optimization procedure. &
        Check that strategy updates affect later decisions; distinguish adaptive rules within a configuration from changes to the configuration itself. \\

        Evaluation and budget &
        Task validity requirements, scoring semantics, and resource constraints. &
        Check that candidates use the shared benchmark evaluator and that reported limits include the relevant optimization operations. \\
        \bottomrule
    \end{tabular}
\end{table}

Implementation differences should be documented at the level at which they can affect behavior. Relevant differences include prompt templates, population or branch sizes, retrieval rules, adaptation schedules, stopping conditions, and access to external tools. A component that is omitted or replaced should be identified explicitly rather than inferred from the method name. Likewise, translating a method to a new artifact domain may require a different proposal template while retaining its selection and feedback mechanisms. Comparisons in this paper therefore concern the evaluated framework instantiations, rather than unrestricted claims about every implementation of the corresponding methods.

This distinction also separates optimizer alignment from benchmark alignment. As described in Section~\ref{sec:experimental-setup}, the benchmark evaluation pipelines are integrated from SkyDiscover, with study-specific aggregation stated explicitly in Appendix~\ref{app:benchmark-details}. Sharing this evaluation pipeline makes candidate outcomes comparable; it does not alone establish that each optimizer reproduces all details of its official implementation. The common representation provides a basis for making these retained mechanisms and implementation differences explicit.

\section{Motivation Study: Protocol and Supplementary Results}
\label{app:motivation-protocol}

\subsection{Task, States, and Reference Configuration}
\label{app:motivation-setup}

We use the Heilbronn triangle task in \texttt{benchmarks/math/heilbronn\_triangle}. A candidate program implements \texttt{heilbronn\_triangle11()} and returns an array of shape $(11,2)$ containing points within or on the boundary of the equilateral triangle $\mathcal{T}$ with vertices $(0,0)$, $(1,0)$, and $(1/2,\sqrt{3}/2)$. The objective is to maximize the smallest triangle area among all triples of points:
\begin{equation}
    \max_{p_1,\ldots,p_{11}\in\mathcal{T}}
    \min_{1\leq i<j<k\leq11}
    \frac{\operatorname{Area}(p_i,p_j,p_k)}
         {\operatorname{Area}(\mathcal{T})}.
    \label{eq:motivation-heilbronn-objective}
\end{equation}
The evaluator checks output shape and triangle containment, evaluates all $\binom{11}{3}=165$ triples, and returns the benchmark score and associated execution evidence. We distinguish these objective measurements from the binary state-resolution outcomes used in this diagnostic study and from the bounded terminal utility in Section~\ref{sec:theory}.

The reference configuration is $R=Q2$ self-only + $O1$ local revision + $E2$ textual feedback + $M1$ no persistent experience memory + $S1$ fixed strategy. In the base intervention matrix, one mechanism is changed while the remaining reference rules are retained. Each of the five state families contains three frozen starting situations, and each mechanism--situation pair is evaluated over ten trials with the same maximum ten-iteration horizon. Thus, a displayed mechanism--state rate aggregates 30 trials with equal weight across the three situations. Reference trials are shared when the reference setting reappears across dimensions; they are not counted as independent repetitions.

\subsection{State Definitions and Success Criteria}
\label{app:motivation-success}

The five state families are: s1 execution or hard-constraint failure; s2 failed correctness tests; s3 objective stagnation; s4 limited remaining budget; and s5 high performance variability. A trial is successful if it resolves the targeted state condition within the declared budget while preserving all requirements that should already hold at that stage. Table~\ref{tab:motivation-success-criteria} summarizes the criteria. GPT-5.5 judges state resolution according to explicitly defined, pre-specified criteria for the starting situations. The reported success rates therefore measure criterion-based LLM judgments of state resolution, rather than the benchmark objective score itself. Benchmark execution checks and objective measurements remain distinct from this diagnostic judgment.

\begin{table}[ht]
    \centering
    \small
    \renewcommand{\arraystretch}{1.12}
    \caption{Optimization states and state-resolution criteria in the Heilbronn diagnostic study.}
    \label{tab:motivation-success-criteria}
    \begin{tabularx}{\linewidth}{@{}l>{\raggedright\arraybackslash}p{0.25\linewidth}>{\raggedright\arraybackslash}X@{}}
        \toprule
        \textbf{State} & \textbf{Initial condition} & \textbf{Success criterion} \\
        \midrule
        s1 & Execution failure or hard-constraint violation & Recover a candidate that executes and satisfies the required hard constraints. \\
        \addlinespace
        s2 & Failed correctness tests & Preserve feasibility and pass all required correctness checks. \\
        \addlinespace
        s3 & Objective stagnation & Preserve feasibility and correctness and exceed the predefined objective-improvement criterion. \\
        \addlinespace
        s4 & Limited remaining budget & Reach the s3 objective criterion within the declared remaining budget. \\
        \addlinespace
        s5 & High performance variability & Reduce variability beyond the predefined criterion without exceeding the allowed degradation in mean quality, feasibility, or correctness. \\
        \bottomrule
    \end{tabularx}
\end{table}

\subsection{Three Situations per State}
\label{app:motivation-situations}

The three situations within each state represent different failure modes rather than repeated copies of one snapshot. All mechanisms compared within a state use the same three starting situations and the same situation-specific resource limits. The 15 situations are listed below.

\begingroup
\small
\setlength{\tabcolsep}{4pt}
\setlength{\LTcapwidth}{\textwidth}
\renewcommand{\arraystretch}{1.12}
\begin{longtable}{@{}p{0.07\textwidth}p{0.22\textwidth}p{\dimexpr0.71\textwidth-16pt\relax}@{}}
\caption{Starting situations for the expanded Heilbronn Triangle diagnostic study. Each state contains three situations with ten trials per mechanism and situation.}
\label{tab:motivation-situations}\\
\toprule
\textbf{ID} & \textbf{Situation} & \textbf{Starting condition}\\
\midrule
\endfirsthead
\multicolumn{3}{l}{\tablename~\thetable\ continued}\\
\toprule
\textbf{ID} & \textbf{Situation} & \textbf{Starting condition}\\
\midrule
\endhead
\midrule
\multicolumn{3}{r}{Continued on next page}\\
\endfoot
\bottomrule
\endlastfoot
s1-1 & Runtime exception & Point generation or updating raises an exception, such as an out-of-range index or incompatible array dimensions, and terminates without returning a point set.\\
\addlinespace
s1-2 & Construction timeout & Excessive search, too many internal iterations, or an ineffective termination condition prevents the construction from finishing within the declared execution limit.\\
\addlinespace
s1-3 & Invalid output interface & The program terminates but returns the wrong number of points, an array other than shape $(11,2)$, or coordinates containing NaN or infinity.\\
\midrule
s2-1 & Incorrect boundary handling & The program returns a well-formed point set, but rectangular clipping or an incorrect projection leaves points outside the sloping boundaries of the equilateral triangle.\\
\addlinespace
s2-2 & Incorrect area objective & The search uses an incorrect internal area calculation, such as omitting the absolute determinant or using an incorrect normalization factor, so its internal objective disagrees with the specified geometric objective.\\
\addlinespace
s2-3 & Incomplete triple enumeration & The internal objective evaluates only a subset of triples, such as consecutive or locally selected points, and can miss the triple determining the true minimum area.\\
\midrule
s3-1 & Local-perturbation plateau & A valid construction repeatedly applies small coordinate perturbations near the incumbent layout, without a qualifying improvement during the predefined observation window.\\
\addlinespace
s3-2 & Structure-restricted plateau & Search retains a fixed layout structure, such as symmetry constraints, a fixed number of boundary points, or a fixed grouping, and parameter updates fail to improve the score during the observation window.\\
\addlinespace
s3-3 & Unproductive restarts & Repeated restarts use the same initialization distribution and search rules, producing similarly scored solutions without improving the retained best score during the observation window.\\
\midrule
s4-1 & Expensive construction & The current valid candidate requires substantial internal search or local optimization; continuing that procedure consumes most of the remaining budget and leaves little room for further attempts.\\
\addlinespace
s4-2 & Too many active branches & Several exploration directions remain active, but the remaining budget cannot support continued substantial allocation to every branch under the current schedule.\\
\addlinespace
s4-3 & Budget-mismatched updates & The current procedure proposes large structural changes or complete restarts whose search and verification demands are poorly matched to the limited remaining budget.\\
\midrule
s5-1 & Initialization sensitivity & Different random initial point sets lead the same construction procedure to substantially different final scores, with strong outcomes in some executions and weak outcomes in others.\\
\addlinespace
s5-2 & Update-path sensitivity & Starting from the same initial point set, random choices of updated points, perturbation directions, or update order produce substantially different final scores.\\
\addlinespace
s5-3 & Strategy-selection sensitivity & The construction procedure randomly selects among search or construction strategies with differing reliability, producing substantial variation in final performance across executions.\\
\end{longtable}
\endgroup

\paragraph{Thirty-trial aggregation.}
For state $s$, dimension $d$, mechanism $v$, situation $c\in\{1,2,3\}$, and repetition $r\in\{1,\ldots,10\}$, let $Y_{s,d,v,c}^{(r)}$ be the binary state-resolution outcome. We report
\begin{equation}
    \widehat{p}_{s,d,v}
    =\frac{1}{30}\sum_{c=1}^{3}\sum_{r=1}^{10}Y_{s,d,v,c}^{(r)}
    =\frac{1}{3}\sum_{c=1}^{3}\widehat{p}_{s,d,v,c},
    \qquad
    \widehat{p}_{s,d,v,c}=\frac{1}{10}\sum_{r=1}^{10}Y_{s,d,v,c}^{(r)}.
    \label{eq:motivation-expanded-success-rate}
\end{equation}
Dimension-level summaries average these mechanism rates over the displayed alternatives. Because each state has a different resolution criterion, rates should be compared primarily within a state rather than interpreted as equal changes in a shared objective scale.

\subsection{Mechanisms, Memory Gating, and Complete Situation-Level Results}
\label{app:motivation-mechanisms}

The query alternatives are none, self-only, environment probing, and memory lookup. Operator alternatives are local revision, repair, and recombination. Evaluation alternatives are scalar-only, textual feedback, and staged evaluation. Strategy alternatives are fixed rules, bandit adaptation, and meta-evolution. In the strict one-dimension reference intervention, memory lookup is inert when $M1$ stores no reusable library, and episodic or structured memory is inert when $Q2$ never reads it. We therefore report the strict base matrix separately from a gate-open memory analysis: episodic, structured, and adaptive memory are paired with $Q4$ memory lookup as the reader, while the $Q4$ row itself remains evaluated under $M1$. This qualification is important because the memory panel is a conditional memory comparison rather than a literal one-coordinate intervention.

\begin{figure}[ht]
    \centering
    \includegraphics[width=\linewidth]{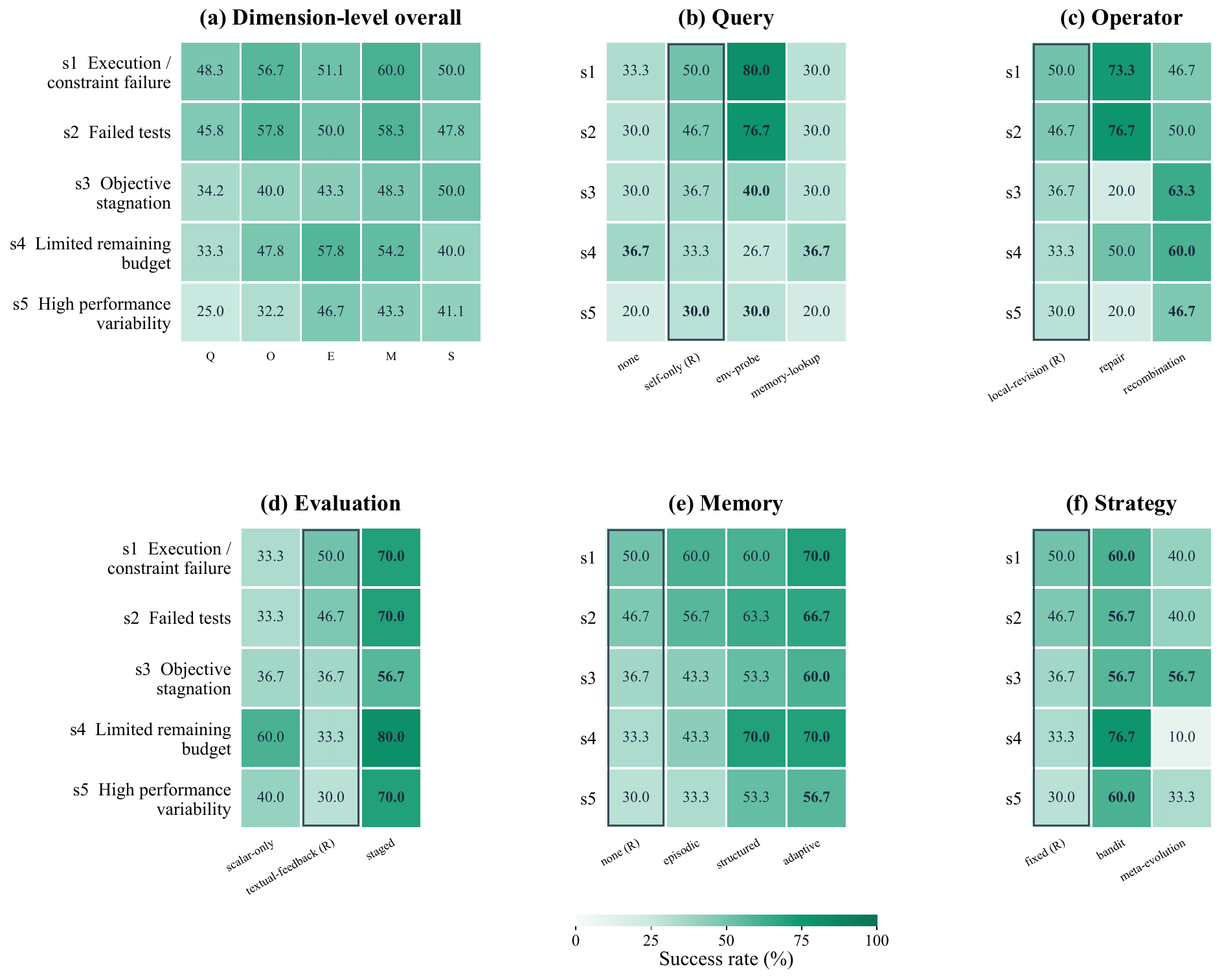}
    \caption{Complete mechanism-level view of the Heilbronn Triangle motivation study. Each displayed mechanism--state value averages the three situations in that state, with ten trials per situation. \textbf{(a)} Dimension-level averages. \textbf{(b)--(f)} Query, operator, evaluation, memory, and strategy mechanisms. R denotes the reference setting. The memory panel uses the gate-open comparison described in this appendix so that stored episodic, structured, and adaptive memory can be read by the memory-lookup query mechanism.}
    \label{fig:motivation-all}
\end{figure}

\begin{table*}[htbp]
\centering
\scriptsize
\renewcommand{\arraystretch}{1.08}
\caption{Successful trials out of ten for all 15 situations in the strict base intervention matrix. R marks the reference mechanism. Memory rows $M2$ and $M3$ are inert here because the reference query does not read persistent memory; the gate-open memory comparison is reported separately in Table~\ref{tab:motivation-memory-gated}.}
\label{tab:motivation-situation-results}
\resizebox{\textwidth}{!}{%
\begin{tabular}{@{}ll*{15}{c}@{}}
\toprule
\textbf{ID} & \textbf{Mechanism} & \textbf{s1-1} & \textbf{s1-2} & \textbf{s1-3} & \textbf{s2-1} & \textbf{s2-2} & \textbf{s2-3} & \textbf{s3-1} & \textbf{s3-2} & \textbf{s3-3} & \textbf{s4-1} & \textbf{s4-2} & \textbf{s4-3} & \textbf{s5-1} & \textbf{s5-2} & \textbf{s5-3} \\
\midrule
Q1 & none & 3&3&4&3&3&3&3&3&3&4&3&4&2&2&2\\
Q2 & self-only (R) & 5&4&6&5&5&4&4&3&4&4&3&3&3&3&3\\
Q3 & env-probe & 9&7&8&8&8&7&4&4&4&2&3&3&3&3&3\\
Q4 & memory-lookup & 3&2&4&3&3&3&3&3&3&4&3&4&2&2&2\\
\midrule
O1 & local-revision (R) & 5&4&6&5&5&4&4&3&4&4&3&3&3&3&3\\
O2 & repair & 8&6&8&8&8&7&2&2&2&5&4&6&2&2&2\\
O3 & recombination & 5&4&5&5&5&5&7&6&6&6&6&6&5&5&4\\
\midrule
E1 & scalar-only & 3&3&4&4&3&3&4&3&4&6&6&6&4&4&4\\
E2 & textual-feedback (R) & 5&4&6&5&5&4&4&3&4&4&3&3&3&3&3\\
E3 & staged & 7&6&8&7&7&7&6&5&6&8&8&8&7&7&7\\
\midrule
M1 & none (R) & 5&4&6&5&5&4&4&3&4&4&3&3&3&3&3\\
M2 & episodic & 5&4&6&5&5&4&4&3&4&4&3&3&3&3&3\\
M3 & structured & 5&4&6&5&5&4&4&3&4&4&3&3&3&3&3\\
\midrule
S1 & fixed (R) & 5&4&6&5&5&4&4&3&4&4&3&3&3&3&3\\
S2 & bandit & 6&5&7&6&6&5&6&5&6&7&9&7&5&5&8\\
S3 & meta-evolution & 4&3&5&4&4&4&5&6&6&1&1&1&3&3&4\\
\bottomrule
\end{tabular}%
}
\end{table*}

\begin{table*}[htbp]
\centering
\scriptsize
\renewcommand{\arraystretch}{1.08}
\caption{Gate-open memory comparison. Episodic, structured, and adaptive memory are paired with $Q4$ memory lookup so that stored experience can affect later decisions. Entries are successful trials out of ten.}
\label{tab:motivation-memory-gated}
\resizebox{\textwidth}{!}{%
\begin{tabular}{@{}ll*{15}{c}@{}}
\toprule
\textbf{ID} & \textbf{Mechanism} & \textbf{s1-1} & \textbf{s1-2} & \textbf{s1-3} & \textbf{s2-1} & \textbf{s2-2} & \textbf{s2-3} & \textbf{s3-1} & \textbf{s3-2} & \textbf{s3-3} & \textbf{s4-1} & \textbf{s4-2} & \textbf{s4-3} & \textbf{s5-1} & \textbf{s5-2} & \textbf{s5-3} \\
\midrule
M2 & episodic + Q4 & 6&5&7&6&6&5&5&3&5&5&4&4&3&3&4\\
M3 & structured + Q4 & 6&5&7&7&6&6&6&4&6&7&7&7&5&5&6\\
M4 & adaptive + Q4 & 7&6&8&7&6&7&6&5&7&7&7&7&5&5&7\\
\bottomrule
\end{tabular}%
}
\end{table*}

The state-level rates plotted in Figure~\ref{fig:motivation} follow directly from these counts. Examples highlight distinct mechanism roles. In s4-2, where too many branches compete for the remaining budget, bandit adaptation reaches 9/10 compared with 3/10 for the fixed reference, while staged evaluation reaches 8/10 by reducing evaluation waste rather than reallocating branch budget. In s3-2, the search structure itself restricts progress: meta-evolution reaches 6/10, recombination also reaches 6/10, and repair reaches only 2/10. The hardest execution-failure situation is s1-2, construction timeout, where even environment probing and repair drop to 7/10 and 6/10 because locating the bottleneck does not by itself perform the required algorithmic restructuring. These differences are why the main text reports state averages but the appendix retains situation-level outcomes.

\subsection{Joint Query and Operator Composition}
\label{app:motivation-joint}

We additionally examine query/operator composition across all three s2 situations. The reference uses self-only queries and local revision. Q-only replaces self-only with environment probing; O-only replaces local revision with repair; and Q+O applies both changes. Evaluation, memory, and strategy retain their reference settings. Each configuration therefore contributes 30 trials (ten per s2 situation). Average iterations measure the iteration of first success, conditioned on the trial succeeding.

\begin{figure}[ht]
    \centering
    \includegraphics[width=0.85\linewidth]{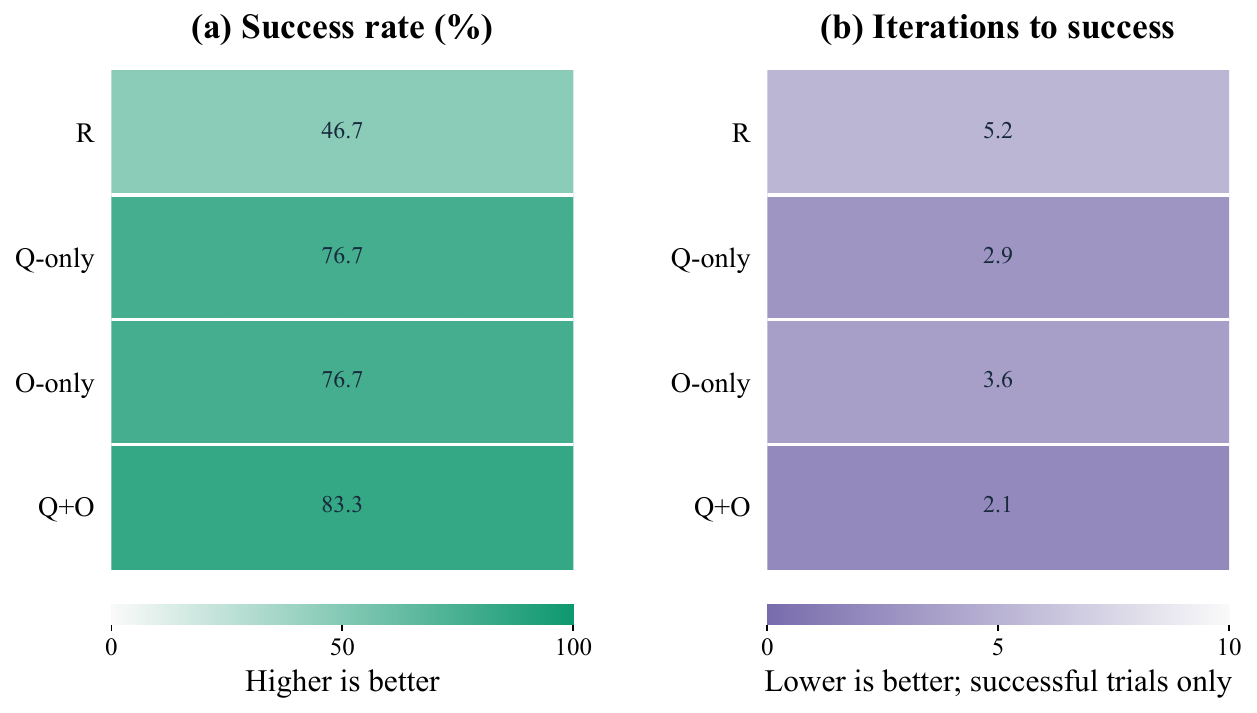}
    \caption{Separate and joint query/operator changes across the three s2 situations. \textbf{(a)} Success rate over 30 trials per configuration; higher is better. \textbf{(b)} Mean iterations to first success among successful trials only; lower is better. The reference, Q-only, O-only, and Q+O settings obtain 46.7\%, 76.7\%, 76.7\%, and 83.3\% success, respectively, with conditional mean iterations of 5.2, 2.9, 3.6, and 2.1.}
    \label{fig:motivation-joint}
\end{figure}

\begin{table}[ht]
    \centering
    \small
    \setlength{\tabcolsep}{5pt}
    \caption{Joint Q/O results across the three s2 situations. Average iterations are conditional on success.}
    \label{tab:motivation-joint}
    \begin{tabular}{@{}lllcc@{}}
        \toprule
        \textbf{Configuration} & \textbf{Query} & \textbf{Operator} & \textbf{Successes} & \textbf{Avg. iterations} \\
        \midrule
        R & self-only & local-revision & 14/30 & 5.2 \\
        Q-only & env-probe & local-revision & 23/30 & 2.9 \\
        O-only & self-only & repair & 23/30 & 3.6 \\
        Q+O & env-probe & repair & 25/30 & 2.1 \\
        \bottomrule
    \end{tabular}
\end{table}

The joint intervention improves the observed state-resolution rate by approximately 6.7 percentage points over either single change ($(25-23)/30\times100$) and shortens successful trajectories relative to both. This result is consistent with complementary composition on these constructed s2 situations, but it does not establish general superadditivity: the experiment covers one pair of dimensions on one task, and iteration counts need not correspond to equal token or wall-clock cost.

\section{Proofs of Theoretical Results}
\label{app:theory}

\subsection{Budgeted Process and Notation}
\label{app:theory-setup}

We analyze one task at a time, with initial history $h_0$, budget $B$, and a finite decision cap $H$. A history $h_t$ contains the task, candidate records, observations, remaining resources, and the internal state needed to specify continuation policies. A policy $\pi$ selects an action from a finite nonempty feasible set $\mathcal{A}(h_t)$, possibly through an observation summary $\phi(h_t)$. The history representation avoids assuming that the summary is Markov or fully informative.

\paragraph{Shared transitions and costs.}
Actions follow a common transition kernel $K(dh'\mid h,a)$. If $\kappa_t\geq0$ is the cost incurred at decision $t$, then
\begin{equation}
    b_0=B,\qquad b_{t+1}=b_t-\kappa_t\geq0,
    \qquad \sum_{t<T}\kappa_t\leq B.
    \label{eq:app-budget}
\end{equation}
The model includes the costs of decision formation as well as query, generation, evaluation, memory maintenance, and strategy adaptation. If two decision procedures incur different control costs, that difference must be represented in their execution modes or explicit control operations. It cannot be hidden inside a policy-dependent transition for an otherwise identical action. A structured Action Package is therefore the implementation's main decision record, while the theoretical history and action description also account for the computation needed to produce and execute it.

We use a scalar budget for notation. Multiple resource limits can instead be carried in the state with componentwise feasibility checks; the proofs rely on a common feasible process and terminal boundary, not a particular cost unit. Random execution costs are handled by enforced limits and specified interruption outcomes. No operation is treated as free merely because it fails.

\paragraph{Stopping and terminal utility.}
Stop is a feasible action from every active history and enters an absorbing state. Let $T\leq H$ be the number of decisions up to termination; when termination is explicit, the stopping action is included among $A_0,\ldots,A_{T-1}$. After absorption, the only action is a zero-cost no-op and the retained result remains unchanged. We pad trajectories to $H$ for the proof.

The terminal utility $F(h_H)$ is measured by a fixed task-specific rule in $[0,1]$, including a failure value in the same interval. If no valid candidate exists, the rule returns this failure value; otherwise it scores the best retained candidate meeting the final requirements. Optimization-time evaluation may use different feedback formats or stages, but does not redefine $F$. Boundedness is a property of the specified terminal utility, not a consequence of dividing an arbitrary benchmark score by a reference target. For a utility bounded in a known interval $[L,U]$ with $U>L$, a fixed positive affine normalization gives the stated form; the corresponding raw-unit loss bound is scaled by $U-L$. No such bounded interval is asserted for every raw metric in our experiments, so the numerical bound is not applied to the reported GPU, systems, or reference-relative scores. The qualitative success rates in the motivation study are not estimates of the continuation values below.

\paragraph{Reference continuation and action values.}
Fix a reference optimizer $\pi_0$ with a defined continuation from every relevant history. It may be randomized and internally adaptive. A stateful reference requires a specified way to reconstruct or maintain its internal state on histories reached by $\pi$. Define
\begin{align}
    J_B(\pi)&=\mathbb{E}_{\pi}[F(h_H)\mid h_0],\label{eq:app-policy-value}\\
    V^{\pi_0}(h_t)&=\mathbb{E}_{\pi_0}[F(h_H)\mid h_t],\label{eq:app-continuation-value}\\
    q(h_t,a)&=\int V^{\pi_0}(h')\,K(dh'\mid h_t,a).\label{eq:app-action-value}
\end{align}
The terminal boundary and common initial history imply
\begin{equation}
    V^{\pi_0}(h_H)=F(h_H),\qquad
    V^{\pi_0}(h_0)=J_B(\pi_0).
    \label{eq:app-value-boundaries}
\end{equation}
Since these quantities measure terminal quality, no additional immediate improvement reward is added to Equation~\eqref{eq:app-action-value}. Action cost instead affects continuation through the remaining budget in $h'$.

\paragraph{Opportunities and losses.}
For the feasible choices exposed to $\pi$, write
\begin{align}
    q^\star(h)&=\max_{a\in\mathcal{A}(h)}q(h,a),\\
    \Delta(h)&=q^\star(h)-V^{\pi_0}(h),\\
    \ell(h)&=q^\star(h)-\mathbb{E}_{A\sim\pi(\cdot\mid h)}[q(h,A)].
\end{align}
The notation $\pi(\cdot\mid h)$ includes policies implemented through $\phi(h)$. Selection loss is nonnegative. Opportunity is nonnegative when the reference choices are retained with identical behavior and cost:
\begin{equation}
    V^{\pi_0}(h)
    =\mathbb{E}_{a\sim\pi_0(\cdot\mid h)}q(h,a)
    \leq\max_{a\in\mathcal{A}(h)}q(h,a).
    \label{eq:app-reference-inclusion}
\end{equation}
The decomposition itself does not require $\Delta(h)\geq0$. Different action catalogs change the split into $\Delta$ and $\ell$; their difference remains the expected advantage of the selected action over the reference continuation.

\subsection{Proof of the Cumulative Opportunity--Loss Decomposition}
\label{app:global-improvement-proof}

\begin{proof}[Proof of Theorem~\ref{thm:global-adaptive-value}]
We first identify the contribution of a single decision and then accumulate these contributions through a sequence of hybrid policies.

\paragraph{Conditional decision advantage.}
Let $g(h)$ be the expected advantage of the action selected by $\pi$ over the reference continuation. Adding and subtracting the best feasible action value gives
\begin{align}
    g(h)
    &:=\mathbb{E}_{A\sim\pi(\cdot\mid h)}[q(h,A)]-V^{\pi_0}(h)\nonumber\\
    &=q^\star(h)-V^{\pi_0}(h)
      -\left(q^\star(h)-\mathbb{E}_{A\sim\pi(\cdot\mid h)}[q(h,A)]\right)\nonumber\\
    &=\Delta(h)-\ell(h).
    \label{eq:app-conditional-advantage}
\end{align}
This identity holds regardless of whether $g(h)$ is positive.

\paragraph{One decision replacement.}
For $m=0,\ldots,H$, define $\pi^{(m)}$ to use $\pi$ for its first $m$ decisions and $\pi_0$ thereafter. Thus, $\pi^{(0)}=\pi_0$ and $\pi^{(H)}=\pi$. The adjacent policies $\pi^{(m)}$ and $\pi^{(m+1)}$ share the same prefix distribution $d_m^\pi$ of $h_m$. Conditional on that history, the former uses the reference continuation and the latter executes one $\pi$ action before returning to it. Consequently,
\begin{align}
    J_B(\pi^{(m)})&=\mathbb{E}_{h_m\sim d_m^\pi}[V^{\pi_0}(h_m)],\\
    J_B(\pi^{(m+1)})&=\mathbb{E}_{h_m\sim d_m^\pi}
       \left[\mathbb{E}_{A_m\sim\pi(\cdot\mid h_m)}q(h_m,A_m)\right].
\end{align}
Subtracting yields
\begin{align}
    J_B(\pi^{(m+1)})-J_B(\pi^{(m)})
    &=\mathbb{E}_{h_m\sim d_m^\pi}
      \left[\mathbb{E}[q(h_m,A_m)\mid h_m]-V^{\pi_0}(h_m)\right]\nonumber\\
    &=\mathbb{E}_{h_m\sim d_m^\pi}[g(h_m)]\nonumber\\
    &=\mathbb{E}_{\pi}[\Delta_m-\ell_m].
    \label{eq:app-hybrid-difference}
\end{align}
All changes to later histories are already included in the continuation values; the argument does not couple future trajectories to be identical.

\paragraph{Accumulation over the run.}
Summing the adjacent differences telescopes:
\begin{align}
    J_B(\pi)-J_B(\pi_0)
    &=J_B(\pi^{(H)})-J_B(\pi^{(0)})\nonumber\\
    &=\sum_{m=0}^{H-1}\left(J_B(\pi^{(m+1)})-J_B(\pi^{(m)})\right)\nonumber\\
    &=\sum_{m=0}^{H-1}\mathbb{E}_{\pi}[\Delta_m-\ell_m]\nonumber\\
    &=\mathbb{E}_{\pi}\sum_{m=0}^{H-1}(\Delta_m-\ell_m)\nonumber\\
    &=\mathbb{E}_{\pi}\sum_{m<T}(\Delta_m-\ell_m).
    \label{eq:app-global-decomposition-proof}
\end{align}
The finite horizon and bounded values justify exchanging summation and expectation. The last equality uses the zero contributions after absorption.
\end{proof}

\paragraph{Equivalent value-function telescoping.}
The same result follows directly from conditional expectation:
\begin{align}
    \mathbb{E}_{\pi}\sum_{t=0}^{H-1}(\Delta_t-\ell_t)
    &=\sum_{t=0}^{H-1}\mathbb{E}_{\pi}
       \left[\mathbb{E}_{\pi}[V^{\pi_0}(h_{t+1})\mid h_t]-V^{\pi_0}(h_t)\right]\nonumber\\
    &=\mathbb{E}_{\pi}\sum_{t=0}^{H-1}
       \left(V^{\pi_0}(h_{t+1})-V^{\pi_0}(h_t)\right)\nonumber\\
    &=\mathbb{E}_{\pi}[V^{\pi_0}(h_H)]-V^{\pi_0}(h_0)\nonumber\\
    &=J_B(\pi)-J_B(\pi_0).
\end{align}
This is the finite-horizon, terminal-utility form of a performance-difference identity~\cite{schulman2015trust}.

\paragraph{Why the stopping action matters.}
If $\pi$ stops at an active history $h$, then
\begin{equation}
    q(h,\mathrm{Stop})-V^{\pi_0}(h)=F(h)-V^{\pi_0}(h).
\end{equation}
This accounts for any reference continuation value forgone by stopping. Omitting that decision would omit a potentially negative contribution. After absorption, both continuation values equal the retained quality and the subsequent terms are zero.

\subsection{Selection-Loss Bound and Global Improvement}
\label{app:selection-guarantee-proof}
\label{app:local-improvement-proof}

The capability assumption in Equation~\eqref{eq:near-optimal-selection} is inspired by analyses that state explicit generation and comparison conditions for LLM selection~\cite{chen2025provable}. That work analyzes answer correctness and particular selection algorithms. Here the assumed probability concerns near-optimal action continuation value under the current budget; it is not supplied by that result. The parameters may vary with history, and no independence across optimization decisions is required.

\paragraph{Bounding the conditional selection loss.}
Fix $h_t$ and let $X_t=q^\star(h_t)-q(h_t,A_t)$. Bounded terminal utility implies $0\leq X_t\leq1$. Let $\mathcal{G}_t=\{X_t\leq\epsilon_t\}$ and $p_t=\Pr(\mathcal{G}_t^c\mid h_t)\leq\delta_t$. Splitting the conditional expectation gives
\begin{align}
    \ell_t
    &=\mathbb{E}[X_t\mid h_t]\nonumber\\
    &=\mathbb{E}[X_t\mathbf{1}_{\mathcal{G}_t}\mid h_t]
      +\mathbb{E}[X_t\mathbf{1}_{\mathcal{G}_t^c}\mid h_t]\nonumber\\
    &\leq\epsilon_t\Pr(\mathcal{G}_t\mid h_t)+\Pr(\mathcal{G}_t^c\mid h_t)\nonumber\\
    &=(1-p_t)\epsilon_t+p_t\nonumber\\
    &=\epsilon_t+p_t(1-\epsilon_t)\nonumber\\
    &\leq\epsilon_t+\delta_t(1-\epsilon_t)\nonumber\\
    &=(1-\delta_t)\epsilon_t+\delta_t.
    \label{eq:app-local-loss-bound}
\end{align}
The penultimate step uses $0\leq\epsilon_t\leq1$. Substituting into the conditional advantage gives the single-decision bound:
\begin{align}
    \mathbb{E}[q(h_t,A_t)\mid h_t]-V^{\pi_0}(h_t)
    &=\Delta_t-\ell_t\nonumber\\
    &\geq\Delta_t-(1-\delta_t)\epsilon_t-\delta_t.
    \label{eq:local-improvement-bound}
\end{align}

\begin{proof}[Proof of Theorem~\ref{thm:global-improvement}]
Apply Equation~\eqref{eq:app-local-loss-bound} conditionally at every active history reached by $\pi$. Combining it with Theorem~\ref{thm:global-adaptive-value} yields
\begin{align}
    J_B(\pi)-J_B(\pi_0)
    &=\mathbb{E}_{\pi}\sum_{t<T}(\Delta_t-\ell_t)\nonumber\\
    &\geq\mathbb{E}_{\pi}\sum_{t<T}
       \left(\Delta_t-(1-\delta_t)\epsilon_t-\delta_t\right)\nonumber\\
    &=\mathbb{E}_{\pi}\sum_{t<T}\Delta_t
      -\mathbb{E}_{\pi}\sum_{t<T}
       \left((1-\delta_t)\epsilon_t+\delta_t\right).
    \label{eq:app-global-improvement-bound}
\end{align}
This is Equation~\eqref{eq:global-improvement-bound}.
\end{proof}

\paragraph{Interpretation and scope.}
The bound yields the sufficient condition
\begin{equation}
\begin{gathered}
    \mathbb{E}_{\pi}\sum_{t<T}\Delta_t
    >
    \mathbb{E}_{\pi}\sum_{t<T}
    \bigl((1-\delta_t)\epsilon_t+\delta_t\bigr)\\
    \Longrightarrow\quad J_B(\pi)>J_B(\pi_0).
\end{gathered}
\label{eq:strict-global-improvement}
\end{equation}
It does not establish that the premise holds for every adaptive policy. The exact decomposition also permits negative local advantages. Writing $g_t^+=\max\{g(h_t),0\}$ and $g_t^-=\max\{-g(h_t),0\}$ gives
\begin{equation}
    J_B(\pi)-J_B(\pi_0)
    =\mathbb{E}_{\pi}\sum_{t<T}g_t^+
     -\mathbb{E}_{\pi}\sum_{t<T}g_t^-.
\end{equation}
Thus, positive contributions may compensate for unfavorable decisions. The result compares expected terminal quality at a shared budget, not monotonic separation of two realized trajectories or strict superiority in the infinite-budget limit. We do not establish the conditional selection premise for our Controller or infer it from endpoint scores. Finite-catalog expansion holds existing action values and costs fixed; catalog-dependent prompt overhead or changed continuation policies fall outside that particular comparison.

\subsection{Theoretical Remarks and Applicability}
\label{app:theory-remarks}

\paragraph{Remark on Theorem 1.}
The contribution of a decision is $\Delta_t-\ell_t$, so a larger $\Delta_t$ is useful only to the extent that selection realizes the additional value. At a fixed history, expanding the action set while preserving its existing actions and costs cannot decrease $q^\star(h_t)$. However, if the selected-action distribution remains unchanged, the increase in $q^\star(h_t)$ raises $\Delta_t$ and $\ell_t$ equally, leaving their difference unchanged. Thus, component diversity must be paired with selection that actually exploits the added choices. The summation also permits negative $\Delta_t-\ell_t$ at some steps: such decisions are compatible with overall improvement when positive contributions elsewhere outweigh them.

\paragraph{Remark on Theorem 2.}
Writing the bound on $\ell_t$ as $\epsilon_t+\delta_t(1-\epsilon_t)$ reveals that, for fixed $\delta_t$, making $\epsilon_t$ small still leaves a bound close to $\delta_t$. Refining already near-optimal choices is therefore insufficient to tighten the guarantee when failures to select them remain frequent. This motivates using diagnostics and accumulated experience to address recurring poor selections, as well as improving the quality of successful choices. Such evidence is not free: additional querying or reflection consumes budget and changes $q(h_t,a)$ itself. The analysis therefore motivates allocating decision effort according to the state, rather than increasing it uniformly. A positive lower bound certifies improvement; a nonpositive one leaves the comparison unresolved.

\paragraph{Theoretical perspective and implementation scope.}
Theorem~\ref{thm:global-adaptive-value} provides an exact performance-difference identity for the shared finite-horizon process, while Theorem~\ref{thm:global-improvement} establishes a general performance envelope under the conditional selection assumption (Equation~\ref{eq:near-optimal-selection}). In practice, our LLM-based Controller and Strategy Adapter operate directly on observed textual and metric feedback without explicitly computing or estimating abstract analytical quantities such as $q$, $\Delta_t$, or $\ell_t$. Furthermore, while the theoretical analysis assumes a normalized bounded utility to cleanly isolate cumulative credit, empirical evaluations involve task-specific, unnormalized score scales. This formulation positions the theory as a foundational design guide rather than an online runtime estimator, offering rigorous first-principles motivation for why state-conditioned mechanism coordination is essential.

\section{Framework Specification}
\label{app:implementation}

This appendix specifies the architecture in Section~\ref{sec:framework}: the information exposed to each component, the decisions that may change, and the execution conditions that remain fixed. Task-specific parameter values, prompts used in a particular run, and experimental implementation alignment belong to the experimental record.

\subsection{OptiCom Algorithm and Execution Order}
\label{app:algorithm}

Algorithm \ref{alg:optimize-anything} provides the pseudo-code logic driving the OptiCom framework loop.

\vspace{-3mm}
\begin{algorithm}[h!]
\caption{\ours{}: Adaptive Optimization through Component Composition}
\label{alg:optimize-anything}
\begin{algorithmic}[1]
\STATE \textbf{Input:} Task $d$, mechanisms $\mathcal{R}$, initial candidates $\mathcal{P}_0$, budget $B$.
\STATE $\mathcal{P} \gets \mathcal{P}_0$; $\mathcal{M} \gets \emptyset$; $g \gets g_0$; $b \gets B$
\WHILE{$\neg\operatorname{Stop}(\mathcal{P},\mathcal{M},b)$}
    \STATE \COMMENT{Step 1: Select a composition from the current state}
    \STATE $z \gets \operatorname{Summarize}(d,\mathcal{P},\mathcal{M},b)$
    \STATE $(a,c_{\mathrm{ctrl}}) \gets \operatorname{Controller}(z,\mathcal{R},g,b)$
    \STATE $b \gets b-c_{\mathrm{ctrl}}$
    \IF{$\neg\operatorname{Executable}(a,b)$}
        \STATE \textbf{break}
    \ENDIF

    \STATE \COMMENT{Step 2: Query, generate, and evaluate within the remaining budget}
    \STATE $(\mathcal{X},r,c_{\mathrm{exec}})
        \gets \operatorname{ExecuteWithHarness}(a,d,\mathcal{P},\mathcal{M},b)$
    \STATE $b \gets b-c_{\mathrm{exec}}$

    \STATE \COMMENT{Step 3: Retain candidates and experience using the selected policy}
    \STATE $(\mathcal{P},\mathcal{M})
        \gets \operatorname{Update}(\mathcal{P},\mathcal{M},a,\mathcal{X},r)$

    \STATE \COMMENT{Step 4: Revise strategy guidance from accumulated outcomes}
    \IF{$\operatorname{AdaptNow}(\mathcal{M},b)$}
        \STATE $(g,c_{\mathrm{adapt}})
            \gets \operatorname{Adapter}(g,\mathcal{M},b)$
        \STATE $b \gets b-c_{\mathrm{adapt}}$
    \ENDIF
\ENDWHILE
\STATE \textbf{return} $\operatorname{BestValid}(\mathcal{P})$
    \COMMENT{Return task-defined failure if no valid candidate exists}
\end{algorithmic}
\end{algorithm}
\vspace{-3mm}

\subsection{Component Interfaces and Action Packages}
\label{app:component-interfaces}

\paragraph{Task contract and registry.}
The task adapter defines artifact parsing and serialization, the editable region, the optimization objective, mandatory feasibility and correctness checks, permitted evidence sources, the final scoring rule, and resource limits. A registry entry identifies a component's accepted inputs, outputs, applicability conditions, parameter schema, and execution limits. Null or minimal entries permit configurations that do not use a particular function. A new mechanism can be registered when it satisfies a contract; requirements beyond the contract must be exposed as extensions rather than hidden inside an opaque component.

\paragraph{State records.}
A candidate record contains an identifier, serialized artifact or artifact reference, parent identifiers, the producing action and strategy version, evaluation status, and links to evaluation records. Evaluation records distinguish validity, objective measurements, diagnostics, and costs. Missing or incomplete checks remain explicitly unknown. Experience entries link an observation or lesson to the candidates, actions, and evidence from which it was derived. These records support reconstruction of the visible context and prevent an unverified proposal from being confused with an evaluated incumbent.

\begin{table}[ht]
\centering
\small
\setlength{\tabcolsep}{4pt}
\renewcommand{\arraystretch}{1.12}
\caption{Action Package fields and their execution responsibilities. Fields select registered functionality; they do not redefine the task contract.}
\label{tab:action-package}
\begin{tabularx}{\linewidth}{@{}l>{\raggedright\arraybackslash}p{0.22\linewidth}>{\raggedright\arraybackslash}X@{}}
\toprule
\textbf{Field} & \textbf{Decision} & \textbf{Consumer and constraints}\\
\midrule
$q_t$ & Query plan & Query executor: permitted tools, evidence requests, and retrieval operations; may be empty.\\
$c_t$ & Context scope & Context builder: candidate identifiers, diagnostics, and experience to expose within the context limit.\\
$o_t$ & Update operator & Operator executor: a registered mechanism with compatible artifact inputs and required parent candidates.\\
$w_t$ & Branch width & Scheduler: a positive bounded number of candidate proposals, subject to available resources.\\
$e_t$ & Evaluation plan & Evaluator: registered stages, effort limits, and feedback form; final required checks remain unchanged.\\
$\rho_t$ & Retention policy & Archive and experience stores: which evaluated candidates, records, and lessons to retain or expose.\\
\bottomrule
\end{tabularx}
\end{table}

\paragraph{Functional correspondence.}
The task adapter supplies $A$; query and context fields invoke $Q$; the selected update supplies $O$; and the evaluation plan invokes $E$. Archive and experience operations implement $M$. The Controller and Adapter jointly implement $S$. Branch width and evaluation effort are resource-allocation decisions coordinated by $S$, rather than additional AQOEMS dimensions. A retained memory entry belongs to $M$, while the decision to retrieve it for a particular update belongs to $Q$.

\paragraph{Records versus theoretical quantities.}
The runtime records observed scores, feedback, and resource use. It does not require a learned value function, an explicit estimate of $q(h,a)$, or a numerical estimate of $\Delta_t$ and $\ell_t$. Theoretical histories can retain more information than the compact state summary sent to the Controller.

\subsection{Controller Context and Action Validation}
\label{app:controller-details}

\paragraph{Decision context.}
The Controller receives the task contract, current incumbent and selected alternatives, recent diagnostics, a summary of progress and resource consumption, relevant experience, applicable registry entries, and current strategy preferences. Its instruction specifies the Action Package schema and asks it to choose a feasible next decision. The policy may preserve a working configuration; adaptation does not require changing every dimension at every iteration. Context limits and retrieval rules determine what information is actually visible.

\paragraph{Validation sequence.}
The runtime first parses the structured output and checks required fields and types. It then checks that component identifiers exist, input artifacts and parent references are available, and any operator preconditions are satisfied. For example, recombination requires compatible parents; a tool query requires task permission and an available tool. Finally, it checks branch, context, evaluation, and execution limits against the updated budget after charging the Controller call.

A bounded correction policy may request a repaired package or select a declared feasible fallback. Every additional model call is charged, and the configured correction limit is finite. If no valid affordable action remains, the run terminates with the best verified incumbent or the task-defined failure outcome. No malformed package authorizes arbitrary tool access or an unbounded retry loop.

\paragraph{Execution order.}
Let $u_t$ be the context assembled from the selected query and context scope. One iteration follows
\begin{align}
    u_t&=\operatorname{QueryContext}(d,z_t,q_t,c_t),\\
    \mathcal{Y}_t&=\operatorname{Propose}(d,u_t,o_t,w_t),\\
    \mathcal{R}_t&=\operatorname{HarnessEvaluate}(d,\mathcal{Y}_t,e_t),\\
    (\mathcal{P}_{t+1},\mathcal{D}_{t+1},\mathcal{H}_{t+1})
      &=\operatorname{Retain}(\mathcal{P}_t,\mathcal{D}_t,\mathcal{H}_t,
                              \mathcal{Y}_t,\mathcal{R}_t,\rho_t).
\end{align}
Each execution stage observes its resource limit and reports actual usage. The initial query is downstream of package selection: its new evidence can guide proposal generation and subsequent decisions but is not retroactively part of the Controller's input. A separate replanning call, if explicitly configured, is another charged decision and must be recorded as such.

\subsection{Strategy Adaptation Rules}
\label{app:adapter-details}

\paragraph{Adaptable state.}
The strategy state $\theta$ contains selection preferences, weights over registered operators, and templates used to form component instructions. A strategy version identifies the resolved choices used by each Action Package. The Adapter reads accumulated records, including successful and failed attempts, validity outcomes, observed progress, and consumed resources. It proposes changes to these preferences and templates while retaining the task and component contracts.

\paragraph{Trigger and update.}
The trigger is a configured predicate over the outcome history and elapsed iterations since the last adaptation. It may combine a stagnation window or a recurring-failure condition with a minimum adaptation interval. These windows and thresholds are task or run parameters, not universal constants. The scheduler checks that adaptation is affordable before invoking it. A proposal must preserve the schema, use registered identifiers, and satisfy any configured ranges; invalid proposals leave the previous strategy active. Accepted changes are recorded with their supporting outcomes and version.

Changes to preferences or templates are policy updates, not evidence of improvement by themselves. An Adapter update can be ineffective or harmful, so its value must be evaluated through subsequent behavior. In particular, the architecture does not assume monotonically decreasing selection error or cost-free adaptation.

\paragraph{Scope of modification.}
The Adapter cannot silently change the final objective, required tests, artifact permissions, or resource accounting. A newly introduced executable mechanism requires registration and contract validation. This keeps strategy revision distinct from unrestricted changes to runtime logic and makes its effects traceable through the action records.

\subsection{Archive, Memory, and Context Construction}
\label{app:memory-context}

\paragraph{Candidate archive.}
The archive stores artifacts with their lineage and evaluation records. Retention policies may preserve high-quality candidates, diverse alternatives, or candidates relevant to unresolved failures. The best candidate that has completed all required checks remains available for return. Candidates with only intermediate evaluation are explicitly marked as provisional, and a higher provisional score alone does not displace a verified incumbent.

\paragraph{Experience memory.}
Experience entries describe observations such as a repeated failure mode, a useful local modification, or a combination that did not justify its cost. Entries retain their source evidence and scope instead of converting one outcome into an unconditional rule. A configuration with no additional experience memory still retains the current candidate and the minimal records required to execute and account for the loop.

\paragraph{Retrieval and context.}
The query policy chooses whether to use the current candidate only, inspect additional diagnostics, or retrieve relevant archive and experience entries. The context builder applies the selected scope and size limit, keeping candidate identities and evaluation provenance available. Retrieval can improve the information presented to the Controller in a later state or to the current operator after package selection; those two information paths are recorded separately.

\subsection{Execution Reliability and Resource Accounting}
\label{app:execution-reliability}

\paragraph{Evaluation and final eligibility.}
The evaluation protocol may return scalar scores, textual diagnostics, or staged feedback. Staging can reject invalid candidates before expensive measurements, but omitted checks remain incomplete rather than passed. Final eligibility is determined by the task contract, and the returned candidate must satisfy its mandatory checks. Changing feedback richness does not change the final scoring rule.

\paragraph{Failure handling.}
Execution records distinguish malformed packages, invalid artifact edits, parse or build failures, failed checks, timeouts, and budget interruptions. Artifact updates are applied to a candidate copy so that a failed edit does not destroy the retained incumbent. Task-appropriate process isolation and time limits bound candidate execution. A failure returns a typed record with the available diagnostics and actual costs, which can inform later state summaries.

\paragraph{Resource ledger.}
For each decision, the ledger records control, query, generation, evaluation, memory-processing, and adaptation usage, including failures and bounded retries. Token counts, elapsed execution time, and API charges remain distinguishable rather than being treated as interchangeable. A run declares its controlling budget or conversion rule and checks remaining resources between stages. Any final verification or adaptation must fit within the declared allocation. A finite iteration cap additionally prevents unlimited zero-cost decision cycles.

\paragraph{Architecture and experimental alignment.}
The contracts in this appendix specify the intended executable architecture. A reported experimental configuration must identify its implemented Controller, Adapter, prompts, registry, task adapter, and accounting behavior. A rule-based controller or a descriptive evaluation-depth field that does not affect evaluator execution is a different implementation choice and must be reported as such. Appendix~\ref{app:alignment-checks} distinguishes functional mapping criteria from evidence of implementation equivalence; the criteria are not themselves a completed equivalence audit.

\section{Experimental Protocols}
\label{app:experimental-protocols}

\subsection{Benchmarks and Evaluation Pipelines}
\label{app:benchmark-details}

We integrate the benchmark evaluation pipelines from SkyDiscover~\cite{liu2026skydiscover} into our framework. The integration adapts candidate submission and the return of evaluation records. The study uses the task definitions and available checks from these pipelines, with the final score fields and aggregation rules specified below. All compared methods therefore use the same task-specific evaluation criteria. Instantiating an optimizer within the shared configuration space changes its optimization procedure, not the benchmark objective.

The evaluation covers 32 groups: 17 mathematical tasks, five systems tasks, three GPU Mode tasks, and one group each for Frontier-CS, ARC, Sky Festival, HotpotQA, QNN, ALE-Bench, and KernelBench. Table~\ref{tab:benchmark-overview} summarizes these domains. The main results table presents seven representative benchmarks; aggregate rankings use all 32 groups in Table~\ref{tab:complete-best-results}. MLA Decode is excluded.

\begin{table}[ht]
    \centering
    \caption{Benchmark domains and evaluation targets. Evaluation uses task-specific benchmark harnesses and the stated aggregation rules.}
    \label{tab:benchmark-overview}
    \begingroup
    \small
    \setlength{\tabcolsep}{4pt}
    \renewcommand{\arraystretch}{1.12}
    \begin{tabular}{
        @{}
        p{0.12\textwidth}
        p{0.24\textwidth}
        p{\dimexpr0.64\textwidth-16pt\relax}
        @{}
    }
        \toprule
        \textbf{Domain} & \textbf{Benchmark family} & \textbf{Evaluation target} \\
        \midrule
        Math
        & Mathematical optimization
        & Numerical objectives subject to task-specific feasibility constraints. \\
        Systems
        & ADRS
        & Workload performance, cost, and composite system objectives. \\
        GPU
        & GPU Mode; KernelBench
        & Numerical correctness and kernel execution performance; scaled inverse runtime or eager-baseline speedup, respectively. \\
        Algorithms
        & Frontier-CS; ALE-Bench-Lite
        & Bounded algorithmic-problem scores or private contest-performance scores over the stated problem sets. \\
        Reasoning
        & ARC
        & Correctness of candidate transformations on benchmark inputs. \\
        Creative
        & Sky Festival
        & Satisfaction of semantic and compositional image requirements. \\
        Prompts
        & HotpotQA
        & Question-answering performance obtained with optimized instructions. \\
        Quantum
        & QNN circuit topology
        & Classification performance obtained with candidate circuit structures. \\
        \bottomrule
    \end{tabular}
    \endgroup
\end{table}

Where a benchmark separates optimization feedback from final evaluation, this separation is retained. Within-evaluator aggregation is distinguished from the aggregation across independent optimization subtasks. For example, runtime aggregation within a GPU benchmark remains part of that evaluator and is distinct from averaging scores across independent optimization subtasks.

\paragraph{Score fields and aggregation.}
GPU Mode (VecAdd, Grayscale, and TriMul) reports $3000/t_g$, where $t_g$ is the geometric mean execution time in microseconds after the required correctness tests; these are scaled inverse runtimes, not speedup ratios. KernelBench instead reports speedup relative to PyTorch eager. Frontier-CS averages the bounded scores over all 172 problems and divides by 100, counting failed or missing problems as zero. ARC uses final held-out test pass@2, not optimization-time cell accuracy. ALE-Bench-Lite averages private \texttt{final\_performance} over ten problems without dividing by 100. HotpotQA uses exact-match percentage divided by 100. QNN uses held-out classification accuracy with 60 test examples per evaluation; five-run summaries are computed before rounding. Sky Festival averages no physical execution metric: its score is the sum of seven vision-model rubric categories divided by 100.

For PRISM, one run is scored by the arithmetic mean of the per-configuration \texttt{score} outputs across the full configuration set. Table~\ref{tab:complete-best-results} reports the maximum of these run-level means over five runs. This study's aggregation is not the inverse of an averaged KV-cache pressure plus a success-rate term. PRISM scores are ranked in the higher-is-better direction used in the result matrix. Mathematical tasks use their task-specific reference-relative scores, which can exceed one; no cross-task mean of raw scores is reported.

\subsection{Models, Baselines, and Budgets}
\label{app:model-budget-protocol}

The main experiments use Doubao-Seed-2.0-pro as the optimization backbone. Each method is evaluated over five independent runs with a budget of 30 optimization iterations per task. Model parameters remain fixed throughout optimization. Task-specific evaluation models, where applicable, are shared across the compared methods.

The 13 method-inspired profiles in the overall comparison are TextGrad, OPRO, ProTeGi, Reflexion, GEPA, AdaEvolve, EvoX, FunSearch, AlphaEvolve, Voyager, DSPy, Self-Refine, and LATS. The main table presents five of these profiles plus the official SkyDiscover EvoX and AdaEvolve implementations. The overall ranking includes only the 13 profiles and \ours{}; official-code entries are not additional overall methods. Their mappings into the shared configuration space and implementation differences are described in Appendix~\ref{app:optimizer-mapping}.

The shared iteration budget controls the number of optimization rounds. It does not imply identical numbers of generated candidates, LLM calls, or tokens. Accordingly, the main comparison concerns final quality under a common iteration limit; the iteration-wise trajectories do not establish equal-time or equal-cost performance.

\subsection{Aggregation and Reporting}
\label{app:statistical-protocol}

\paragraph{Benchmark-level scores.}
Let $y_{m,t,r}$ denote the final task score of method $m$ on subtask $t$ in run $r$. For a benchmark group $g$ containing a fixed collection of independent subtasks $\mathcal{T}_g$, we compute
\begin{equation}
    G_{m,g,r}
    =
    \frac{1}{|\mathcal{T}_g|}
    \sum_{t\in\mathcal{T}_g} y_{m,t,r}.
    \label{eq:benchmark-group-score}
\end{equation}
The average uses the comparable scores specified for that group, rather than heterogeneous physical quantities. Single-task benchmarks correspond to $|\mathcal{T}_g|=1$. Test cases evaluated jointly by a task's evaluator are not counted as separate optimization subtasks.

\paragraph{Mean and maximum scores.}
For the higher-is-better scores reported in the main table, the two summaries are
\begin{equation}
    \operatorname{Mean}_{m,g}
    =
    \frac{1}{5}\sum_{r=1}^{5}G_{m,g,r},
    \qquad
    \operatorname{Max}_{m,g}
    =
    \max_{r\in\{1,\ldots,5\}}G_{m,g,r}.
    \label{eq:mean-max-reporting}
\end{equation}
Both use the same five runs. For a group containing multiple subtasks, group aggregation precedes taking the maximum over runs. Thus, Max does not combine a separately selected best run from each subtask.

\paragraph{Overall ranking.}
Figure~\ref{fig:overall_results}(a) reports two rankings computed from the same 14 configurations and 32 groups. The Max ranking uses $\operatorname{Max}_{m,g}$ and the Mean ranking uses $\operatorname{Mean}_{m,g}$. For each statistic, configurations are ranked within each group in descending score order; ties at the retained reporting precision receive the arithmetic mean of their occupied rank positions. A method's overall rank is the arithmetic mean of its 32 within-group ranks, with every group receiving equal weight regardless of its number of subtasks. The two official-code entries are excluded. Numeric zero scores remain in the ranking and are not treated as missing. Under this convention, \ours{} has average ranks 1.72 (Max) and 2.08 (Mean). The Max matrix contains all 448 profile entries; the corresponding aggregate Mean matrix is used for the Mean-based ranking.

\paragraph{Reporting standards.}
Overall ranks are computed from the retained four-decimal aggregate score matrices following standard multi-task algorithmic evaluation protocols. Rounding can create ties, especially on Cloudcast and EPLB. The main table reports five-run Max and Mean; ablation tables additionally retain the supplied standard-deviation summaries. Best-of-five performance reflects peak search capability under a fixed budget, serving as a standard evaluation metric across complex heuristic search spaces.

\paragraph{Relative gains.}
For each column of Table~\ref{tab:main-results}, the relative gain over the strongest compared baseline is
\begin{equation}
    \operatorname{Gain}
    =
    \frac{s_{\mathrm{OptiCom}}-s_{\mathrm{baseline}}}
         {s_{\mathrm{baseline}}}
    \times 100\%.
    \label{eq:relative-improvement}
\end{equation}
The baseline is selected separately for the Mean and Max columns. These values describe relative score improvements, not percentage-point changes or statistical significance.

\subsection{Ablation Configurations}
\label{app:ablation-protocol}

The core ablations use Doubao-Seed-2.0-pro on Heilbronn Triangle, LLM-SQL, and HotpotQA. Each configuration is evaluated over five independent runs with a maximum of 30 optimization iterations. We report the mean and standard deviation of the final retained scores, rather than statistics over intermediate iterations. Task definitions and final evaluation criteria remain unchanged. The iteration cap does not impose an equal token, API-call, or monetary budget across configurations.

\paragraph{Core interventions.}
The variants modify the following parts of the optimizer:
\begin{itemize}
    \item \textbf{Fixed composition.}
    The optimizer retains a predefined configuration of query, operator, evaluation, and memory mechanisms throughout the run, with the Strategy Adapter disabled. Candidates, evaluation records, and memory contents continue to update under these fixed rules. Thus, a fixed composition does not imply repeatedly generating the same candidate or ignoring new feedback.

    \item \textbf{Random composition.}
    Mechanisms are selected randomly from executable combinations in the same registry, without state-conditioned selection preferences or Strategy Adapter updates. Applicability checks and mandatory task constraints remain in force.

    \item \textbf{Without Strategy Adapter.}
    The Controller continues to select Action Packages from the current optimization state, but the initial strategy guidance, selection weights, and templates remain fixed. Immediate action selection is therefore adaptive even though its longer-term guidance is not revised.

    \item \textbf{Operator-only adaptation.}
    The Strategy Adapter is disabled, and the Controller can change only the update operator. Query, evaluation, memory, and other action settings follow fixed rules. Comparing this variant with the preceding one tests the value of allowing a broader range of optimization decisions to change.

    \item \textbf{Without Experience Memory.}
    Cross-iteration experience summaries and their retrieval are removed. The candidate archive, current evaluation feedback, and progress and budget records remain available, so candidate retention and operations requiring archived candidates remain executable. The Controller and Strategy Adapter otherwise remain enabled.
\end{itemize}

These interventions distinguish component contents from the rules governing their use. For example, freezing a memory mechanism does not prevent it from storing new observations, while removing experience memory does not remove the candidate archive. Comparisons involving the Adapter assess its contribution within the iteration-limited process, including its additional API calls.

\paragraph{Adaptation frequency.}
On Heilbronn Triangle, we compare the default event-triggered Adapter with no adaptation, periodic adaptation every five iterations, and adaptation after every iteration. No Adapter call is made after the final iteration because its output would have no subsequent optimization step to influence. The observed mean call counts are therefore $0$, $10$, $5$, and $29$, respectively. We additionally report the mean total API token consumption for each complete run, including optimization-related calls beyond candidate generation.

\paragraph{Feedback access.}
The feedback experiment on Heilbronn Triangle crosses two information conditions with the presence or absence of the Strategy Adapter. Basic feedback exposes validity status and a scalar score; rich feedback additionally exposes failure reasons and task diagnostics. The underlying evaluator and final scoring criterion remain unchanged. The intended intervention concerns information available to the optimizer, rather than changes to what constitutes a valid or high-quality solution. Results are reported with total API token consumption because richer feedback can also increase the amount of processed context.

\paragraph{Initialization sensitivity.}
We evaluate three deliberately different initial Q/O/E/M configurations on Heilbronn Triangle. The local-revision configuration uses self-only context, local candidate updates, scalar evaluation feedback, and a quality-oriented archive with recent records. The diagnosis-and-repair configuration uses environment probing, targeted repair or bottleneck-directed updates, diagnostic feedback, and structured experience. The exploration-and-recombination configuration retrieves alternative candidates, combines their structures or construction procedures, and maintains a quality-and-diversity-oriented archive. All configurations retain the same mandatory validity checks and final objective.

For each initialization, the fixed variant retains its mechanism rules throughout optimization. The adaptive variant executes the specified configuration in the first iteration and subsequently permits the Controller and Strategy Adapter to modify it. Within each comparison, the initial candidate set and evaluation protocol are held constant. These three fixed configurations are distinct from the default Fixed composition variant in Table~\ref{tab:core-ablations}; they provide additional, deliberately designed reference configurations rather than repeated measurements of that row.

\section{Additional Experimental Results}
\label{app:full-results}

\subsection{Complete Max-Score Results}
\label{app:main-results}

Table~\ref{tab:complete-best-results} contains the retained Max scores for 32 groups and 14 configurations, totaling 448 entries. It replaces the earlier partial 22-task archive. The corresponding aggregate Mean matrix is used for the Mean-based ranking in Figure~\ref{fig:overall_results}(a) but is not reproduced here to avoid duplicating another full 32-by-14 table. Unqualified baseline names denote method-inspired profiles; the two official-code comparisons appear only in Table~\ref{tab:main-results}. Values are reported to four decimal places and follow the task-specific higher-is-better score definitions. They should be compared within a task, not averaged across heterogeneous raw scales.

This matrix is sufficient to reproduce the reported Max-based ranks at the retained precision. It is not a replacement for the underlying five-run logs or candidate artifacts. Ties share the highest/second-highest distinct-score annotations; ranking instead uses average occupied positions as specified in Appendix~\ref{app:statistical-protocol}.

\begin{sidewaystable}[p]
    \centering
    \caption{Max scores over five runs on all 32 benchmark groups. Higher is better within each task. Baseline columns are method-inspired framework profiles. Bold and underlining mark the highest and second-highest distinct values at four-decimal precision. The final row gives average within-task ranks with average ranks for ties; lower is better. Official-code entries are excluded.}
    \label{tab:complete-best-results}
    \begingroup
    \small
    \setlength{\tabcolsep}{3pt}
    \renewcommand{\arraystretch}{1.13}
    \resizebox{\linewidth}{!}{%
    \begin{tabular}{@{}l*{14}{r}@{}}
        \toprule

Task & TextGrad & OPRO & ProTeGi & Reflexion & GEPA & AdaEvolve & EvoX & FunSearch & AlphaEvolve & Voyager & DSPy & Self-Refine & LATS & \ours{} \\
\midrule
Circle packing & 0.9256 & 0.8892 & 0.8806 & 0.7652 & 0.8607 & 0.8259 & 0.8833 & 0.8277 & \textbf{1.0003} & 0.6818 & 0.7830 & 0.7547 & 0.9282 & \underline{0.9682} \\
Circle packing (rect.) & 0.9869 & 0.9895 & 0.9968 & 0.9964 & 0.9896 & 0.9876 & 0.8577 & 0.9895 & 0.9913 & 0.9967 & 0.9879 & 0.7342 & \underline{0.9975} & \textbf{0.9982} \\
Erdos minimum overlap & 0.7618 & 0.7920 & 0.7633 & 0.7637 & 0.7618 & 0.7618 & 0.7600 & 0.7530 & \underline{0.8204} & 0.7695 & 0.7637 & 0.0000 & 0.8044 & \textbf{0.9972} \\
Autocorrelation inequality 1 & 0.9939 & 0.9950 & 0.9945 & 0.9928 & 0.9920 & 0.9945 & \textbf{0.9971} & 0.9948 & 0.9957 & 0.9925 & 0.9949 & 0.9934 & 0.9960 & \underline{0.9966} \\
Heilbronn convex (13) & 0.6415 & 0.7213 & 0.4706 & 0.7141 & 0.5436 & 0.4591 & \underline{0.8242} & 0.0878 & 0.0425 & 0.4211 & 0.7136 & 0.5160 & 0.7147 & \textbf{0.9386} \\
Heilbronn convex (14) & \underline{0.7384} & 0.5486 & 0.6956 & 0.6117 & 0.6052 & 0.3543 & 0.7236 & 0.6689 & 0.5780 & 0.4616 & 0.6672 & 0.6592 & 0.4531 & \textbf{0.8591} \\
Heilbronn triangle & 0.8557 & 0.5984 & 0.8282 & 0.6401 & 0.7191 & 0.7685 & \underline{0.9431} & 0.6508 & 0.7993 & 0.6060 & 0.9145 & 0.7151 & 0.8874 & \textbf{0.9608} \\
Hexagon packing (11) & \underline{0.0000} & \underline{0.0000} & \underline{0.0000} & \underline{0.0000} & \underline{0.0000} & \underline{0.0000} & \underline{0.0000} & \underline{0.0000} & \underline{0.0000} & \underline{0.0000} & \underline{0.0000} & \underline{0.0000} & \underline{0.0000} & \textbf{0.2399} \\
Hexagon packing (12) & 0.7884 & \textbf{0.8942} & 0.6706 & 0.6838 & 0.6362 & 0.7167 & \underline{0.8871} & 0.6555 & 0.6570 & 0.6362 & 0.6570 & 0.7167 & 0.8731 & 0.7642 \\
Matrix multiplication & \underline{0.8000} & \underline{0.8000} & \underline{0.8000} & \underline{0.8000} & \underline{0.8000} & \underline{0.8000} & \underline{0.8000} & \underline{0.8000} & \textbf{0.8421} & 0.0000 & 0.0000 & \underline{0.8000} & 0.0000 & \textbf{0.8421} \\
Min-max distance (2D) & 0.8675 & 0.7159 & 0.8475 & \underline{0.9989} & 0.8559 & 0.7167 & 0.7161 & 0.8718 & 0.8741 & 0.7161 & 0.9494 & 0.8625 & 0.9942 & \textbf{0.9996} \\
Min-max distance (3D) & \underline{0.9415} & 0.7307 & 0.8803 & 0.9083 & 0.8803 & 0.8994 & 0.8803 & 0.9170 & 0.8803 & 0.6943 & 0.6943 & 0.9031 & 0.8778 & \textbf{0.9968} \\
Autocorrelation inequality 2 & 0.9549 & 0.9827 & \underline{0.9879} & 0.8344 & 0.9606 & 0.9864 & \textbf{1.0135} & 0.8347 & 0.9833 & 0.9549 & 0.9574 & 0.9794 & 0.9466 & 0.9872 \\
Signal processing & 0.5012 & 0.5298 & 0.5354 & 0.5124 & 0.5442 & \textbf{0.6086} & 0.5558 & 0.5183 & 0.5232 & 0.5209 & \underline{0.5914} & 0.5343 & 0.5147 & 0.5484 \\
Sums/differences of finite sets & 0.8880 & 0.0000 & 0.9384 & 0.9497 & 0.9295 & 0.9066 & 0.0000 & 0.8667 & \textbf{0.9531} & 0.9188 & 0.0000 & 0.9242 & 0.8800 & \underline{0.9525} \\
Autocorrelation inequality 3 & \textbf{0.9952} & 0.9913 & 0.9941 & 0.9940 & \underline{0.9951} & 0.9912 & 0.9939 & 0.9919 & 0.9920 & 0.9934 & 0.9921 & 0.9924 & 0.9938 & \textbf{0.9952} \\
Uncertainty inequality & 0.8956 & 0.8958 & 0.9120 & 0.8804 & 0.8995 & 0.9034 & \underline{0.9133} & 0.8933 & 0.9090 & 0.8804 & \underline{0.9133} & 0.8960 & 0.9016 & \textbf{0.9195} \\
Cloudcast & \underline{0.0010} & \underline{0.0010} & \underline{0.0010} & \underline{0.0010} & \underline{0.0010} & \underline{0.0010} & \underline{0.0010} & \underline{0.0010} & \underline{0.0010} & \underline{0.0010} & \underline{0.0010} & \underline{0.0010} & \underline{0.0010} & \textbf{0.0011} \\
EPLB & 0.1274 & 0.1274 & 0.1274 & \underline{0.1275} & \underline{0.1275} & \underline{0.1275} & 0.1274 & \underline{0.1275} & 0.1274 & 0.1266 & \underline{0.1275} & \underline{0.1275} & 0.1274 & \textbf{0.1344} \\
LLM-SQL & 0.6769 & 0.6488 & 0.6299 & 0.6385 & 0.6589 & 0.6190 & 0.6835 & 0.5829 & 0.5876 & 0.6451 & 0.1146 & 0.6554 & \underline{0.6863} & \textbf{0.6980} \\
PRISM & 0.8630 & 0.9584 & \underline{0.9619} & 0.0389 & 0.9603 & 0.9613 & 0.1734 & 0.9614 & 0.9589 & 0.9614 & \textbf{0.9620} & 0.9589 & \textbf{0.9620} & 0.9612 \\
Transaction scheduling & 3875.9690 & 3984.0637 & 3759.3985 & 3937.0079 & 3906.2500 & \underline{4016.0643} & 3663.0037 & 3690.0369 & 3952.5692 & 3690.0369 & 3690.0369 & 3690.0369 & 3690.0369 & \textbf{4032.2581} \\
Grayscale & 42.1582 & 58.7491 & 65.3024 & 38.9915 & 51.4823 & \textbf{69.1047} & 47.6258 & 55.8310 & 62.4172 & 36.5749 & 44.8931 & 53.2185 & 60.7564 & \underline{68.3296} \\
TriMul & 2.3921 & 2.3773 & 2.5410 & 2.4127 & 2.4370 & 2.5730 & \underline{2.5930} & 2.3875 & 2.5912 & 2.5691 & 2.5131 & 2.4918 & 2.5760 & \textbf{2.6540} \\
VecAdd & 89.4521 & 120.3184 & 95.7632 & 112.4891 & 86.1298 & \underline{131.0543} & 127.3412 & 98.6504 & 115.8237 & 109.4310 & 91.2785 & 124.7659 & 85.9934 & \textbf{133.2014} \\
HotpotQA & 0.3452 & 0.3128 & \underline{0.4410} & 0.3891 & 0.3810 & 0.3970 & 0.3860 & 0.3605 & 0.3274 & 0.3719 & 0.3386 & 0.3540 & 0.4130 & \textbf{0.5130} \\
ARC & 0.3142 & 0.3875 & \underline{0.4730} & 0.3421 & 0.3960 & 0.4130 & 0.4610 & 0.3956 & 0.3208 & 0.3764 & 0.3559 & 0.3093 & 0.4570 & \textbf{0.5030} \\
Frontier-CS & 0.5381 & 0.6124 & \underline{0.6320} & 0.5709 & 0.6130 & 0.6210 & 0.6260 & 0.5042 & 0.5933 & 0.5487 & 0.5215 & 0.6078 & 0.6270 & \textbf{0.6930} \\
Sky Festival & 0.2451 & \underline{0.3102} & 0.1987 & 0.2845 & 0.2134 & \textbf{0.3291} & 0.2678 & 0.2950 & 0.1842 & 0.2319 & 0.2765 & 0.3014 & 0.2056 & 0.2983 \\
KernelBench & 0.6234 & 0.8412 & 1.0539 & 0.5127 & 0.9845 & 0.7321 & \underline{1.0892} & 0.6750 & 0.9104 & 0.5899 & 0.8876 & 0.7954 & 0.6481 & \textbf{1.1123} \\
ALE-Bench & 1842.1593 & 1950.4821 & 1715.3094 & 1888.7512 & 1921.6045 & 1795.8236 & 1913.1962 & \underline{1973.9548} & 1742.0671 & 1811.3925 & 1905.7482 & 1768.5139 & 1837.9264 & \textbf{1991.2057} \\
QNN & 0.7833 & 0.8167 & \underline{0.8833} & 0.7667 & 0.8000 & 0.8333 & 0.8167 & 0.8333 & 0.8000 & 0.8333 & 0.7667 & 0.8333 & 0.8500 & \textbf{0.9000} \\
\midrule
Average rank & 8.703125 & 8.015625 & 6.453125 & 8.625000 & 7.875000 & 7.000000 & 6.328125 & 8.875000 & 7.562500 & 10.078125 & 8.453125 & 8.468750 & 6.843750 & \textbf{1.718750} \\
\bottomrule
    \end{tabular}%
    }
    \endgroup
\end{sidewaystable}

\subsection{Optimization Trajectories}
\label{app:optimization-trajectories}

Figures~\ref{fig:app-progress-p1}--\ref{fig:app-progress-p6} complement the aggregate results with optimization trajectories on twelve mathematical benchmarks. The plots show best-so-far scores from individual archived runs of the framework profiles, rather than averages over five runs or trajectories of the official-code entries. Highlighted annotations indicate the iteration at which \ours{} reaches its final recorded best score. These trajectories reveal when improvements occur and whether early advantages persist. Iterations describe search progress rather than equal computational cost, and benchmark scores should be interpreted within each task; a score close to one does not, by itself, certify proximity to a mathematical optimum.

\paragraph{Early progress and final quality capture different properties.}
On Erdos Min Overlap, \ours{} establishes a substantial advantage within the first few iterations and subsequently makes smaller improvements, reaching a recorded score of $0.997$ at iteration $16$ (Figure~\ref{fig:app-progress-p1}a). A different pattern appears on Circle Packing Rect: several baselines obtain strong candidates before \ours{}, but \ours{} subsequently overtakes them and reaches $0.998$ at iteration $9$ (Figure~\ref{fig:app-progress-p2}a). Early progress can also fail to translate into the strongest final result. On Circle Packing, \ours{} reaches $0.968$ at iteration $3$, while AlphaEvolve later obtains a higher score (Figure~\ref{fig:app-progress-p5}a). These comparisons show why optimizer quality cannot be characterized solely by either the first few iterations or the final score: their relative importance depends on the available budget and the quality required by the task.

\paragraph{Temporary stagnation need not imply exhausted improvement opportunities.}
On Heilbronn Triangle, \ours{} remains near $0.774$ for several iterations before improving to $0.961$ at iteration $13$ (Figure~\ref{fig:app-progress-p3}a). On Matmul, it initially trails several methods at approximately $0.582$, then improves to $0.800$ and finally $0.842$ at iteration $11$, matching the strongest final result shown (Figure~\ref{fig:app-progress-p6}a). These trajectories illustrate that an optimizer can recover from an unproductive interval or an initially unfavorable position. They motivate retaining alternatives and reconsidering optimization decisions when progress stalls, rather than treating a short plateau as evidence that further search is unproductive. The score histories alone do not identify which mechanism produced each improvement, but they demonstrate the importance of evaluating decisions over the remaining optimization horizon.

\paragraph{The useful refinement horizon varies across tasks.}
Some trajectories are dominated by large early gains, whereas others continue to accumulate small improvements after reaching a strong candidate. For example, \ours{} reaches a high score early on both Minimizing Max Min Dist tasks, but its final recorded improvements occur at iterations $26$ and $13$, respectively (Figure~\ref{fig:app-progress-p4}). On Circle Packing Rect and First Autocorr Ineq, the full-scale curves appear nearly indistinguishable among several methods, while the insets expose meaningful differences in the timing and magnitude of subsequent refinements (Figure~\ref{fig:app-progress-p2}). Similarly, Sums Diffs Finite Sets shows incremental progress up to iteration $20$, with \ours{} finishing close to the strongest baseline (Figure~\ref{fig:app-progress-p5}b). These patterns motivate adapting the balance between exploration and refinement to recent progress and remaining budget. They also show why the iteration of the final improvement is insufficient as a standalone efficiency measure: a late, small refinement may follow a much earlier attainment of practically useful quality.

\paragraph{Adaptive composition does not dominate every task.}
The additional trajectories expose limitations alongside favorable results. EvoX exceeds \ours{} on both autocorrelation tasks (Figures~\ref{fig:app-progress-p1}b and~\ref{fig:app-progress-p2}b), and several methods attain higher final scores on Hexagon Packing 12 (Figure~\ref{fig:app-progress-p6}b). On the latter task, \ours{} improves to approximately $0.764$ but does not close the gap to the strongest alternatives. Thus, continued improvement within a run does not necessarily imply competitive final performance. These cases are consistent with the distinction in Section~\ref{sec:theory}: making complementary mechanisms available creates opportunities, but realizing their value also requires selecting suitable actions. The trajectories do not distinguish insufficient alternatives from inaccurate selection or inadequate feedback; resolving those explanations requires controlled comparisons or action-level records.

Taken together, the trajectories support a design centered on evolving optimization states: preserve strong candidates, retain the ability to make further improvements after stagnation, and adjust the search effort as the remaining opportunities change. This interpretation connects to the cumulative opportunity--loss decomposition in Section~\ref{sec:theory}, under which intermediate decisions matter through their contribution to the final outcome. The plots illustrate the resulting temporal patterns without directly estimating $\Delta_t$ or $\ell_t$, or attributing the observed gains to an individual component.

\begin{figure}[ht]
    \centering
    \includegraphics[width=\textwidth]{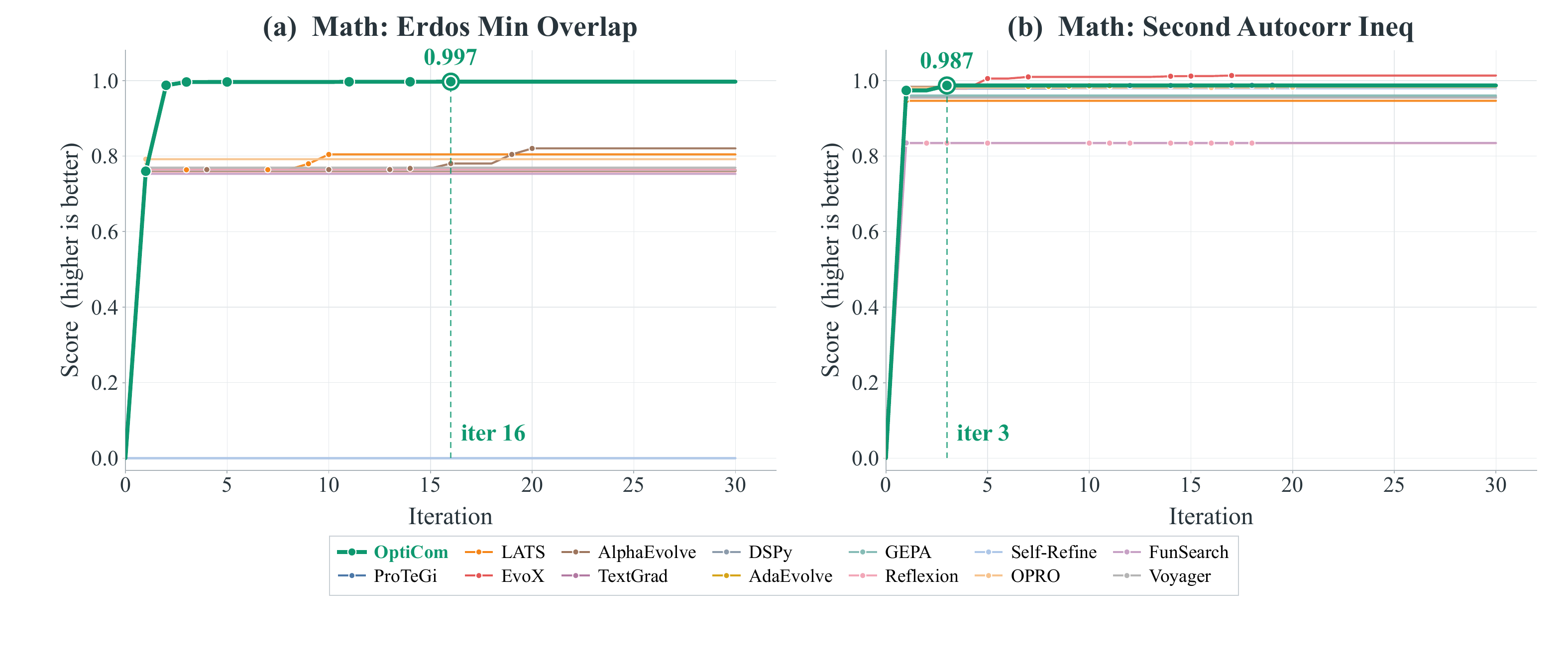}
    \caption{Archived best-so-far scores on \textbf{(a)} Erdos Min Overlap and \textbf{(b)} Second Autocorr Ineq. \ours{} establishes an early advantage on Erdos Min Overlap, whereas EvoX obtains a higher final score on Second Autocorr Ineq. Highlighted annotations mark the final recorded best score of \ours{} and its corresponding iteration.}
    \label{fig:app-progress-p1}
\end{figure}

\begin{figure}[ht]
    \centering
    \includegraphics[width=\textwidth]{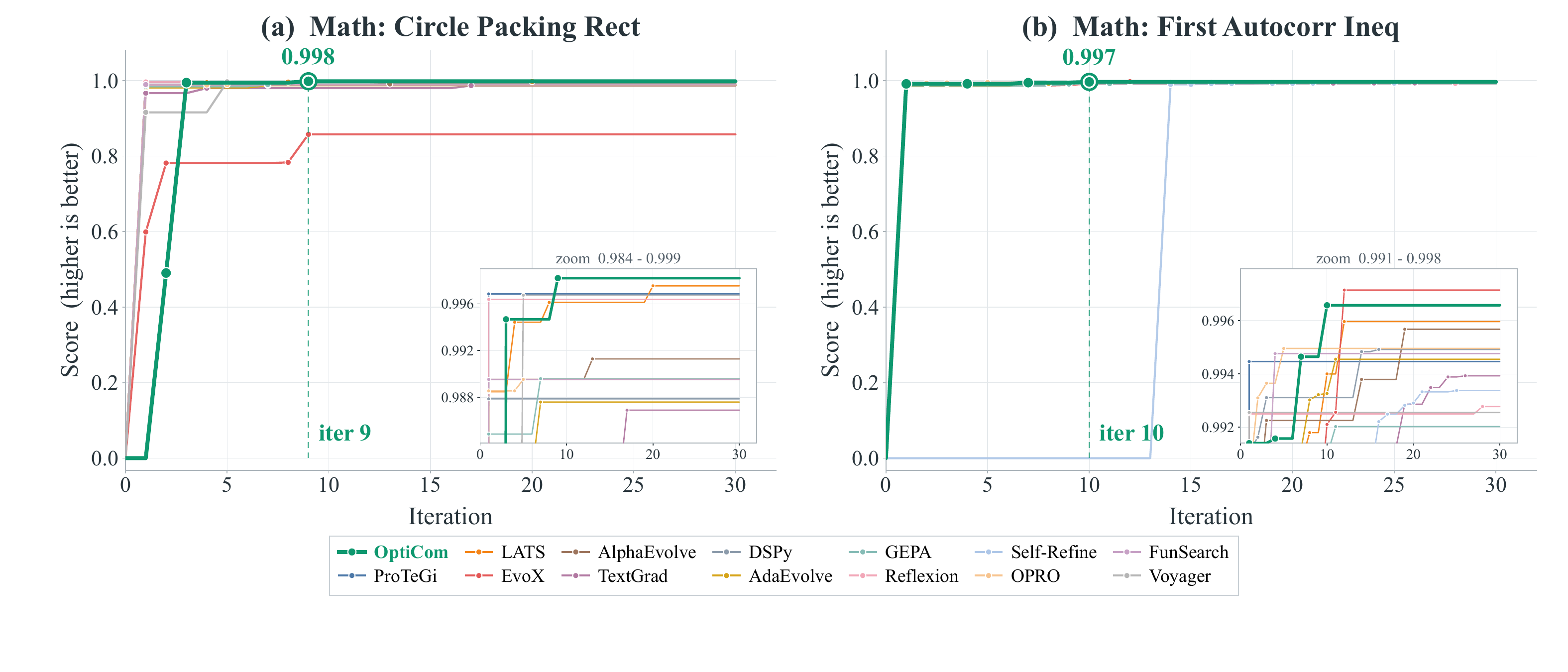}
    \caption{Archived best-so-far scores on \textbf{(a)} Circle Packing Rect and \textbf{(b)} First Autocorr Ineq. Insets reveal differences among high-scoring candidates that are difficult to distinguish on the full vertical scale. \ours{} achieves the highest final score on Circle Packing Rect, while EvoX finishes slightly higher on First Autocorr Ineq.}
    \label{fig:app-progress-p2}
\end{figure}

\begin{figure}[ht]
    \centering
    \includegraphics[width=\textwidth]{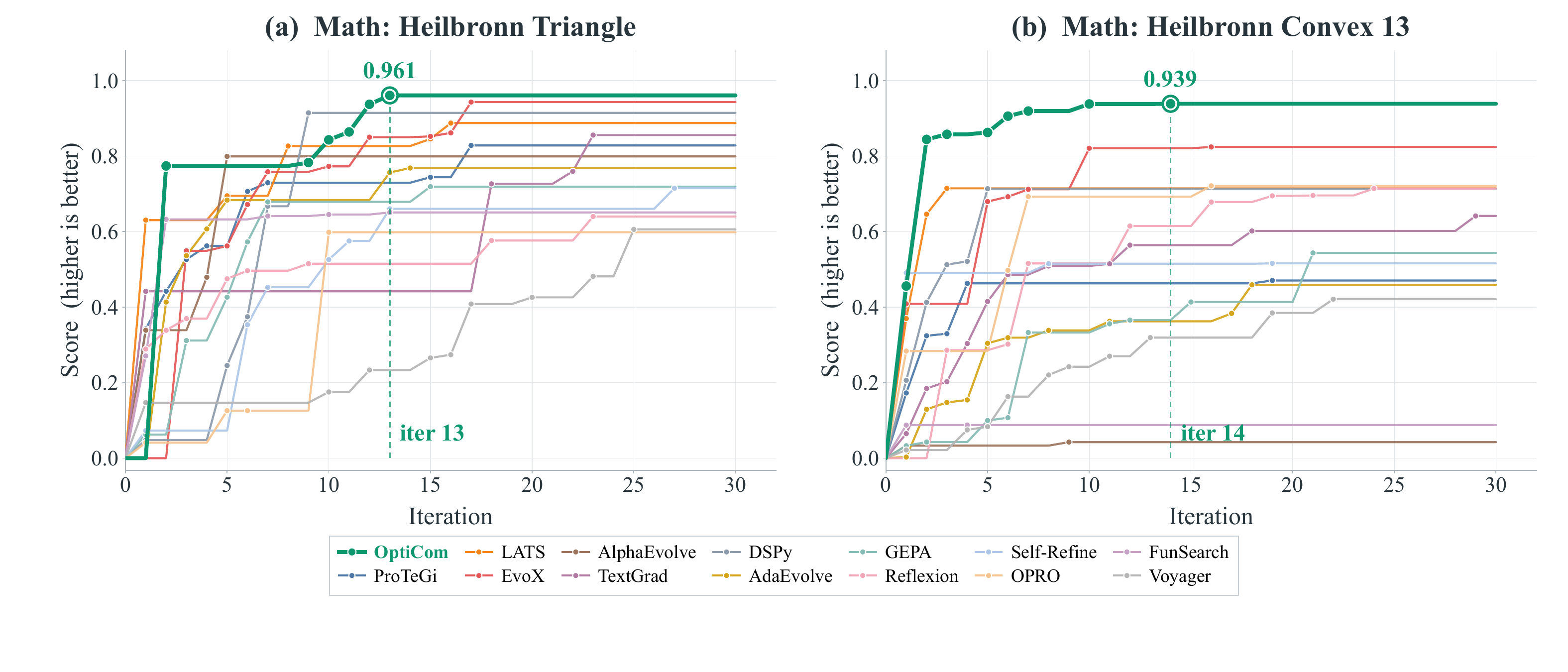}
    \caption{Archived best-so-far scores on \textbf{(a)} Heilbronn Triangle and \textbf{(b)} Heilbronn Convex 13. On Heilbronn Triangle, \ours{} resumes improvement after an initial plateau and reaches $0.961$ at iteration $13$. On Heilbronn Convex 13, successive improvements produce a final recorded score of $0.939$ at iteration $14$.}
    \label{fig:app-progress-p3}
\end{figure}

\begin{figure}[ht]
    \centering
    \includegraphics[width=\textwidth]{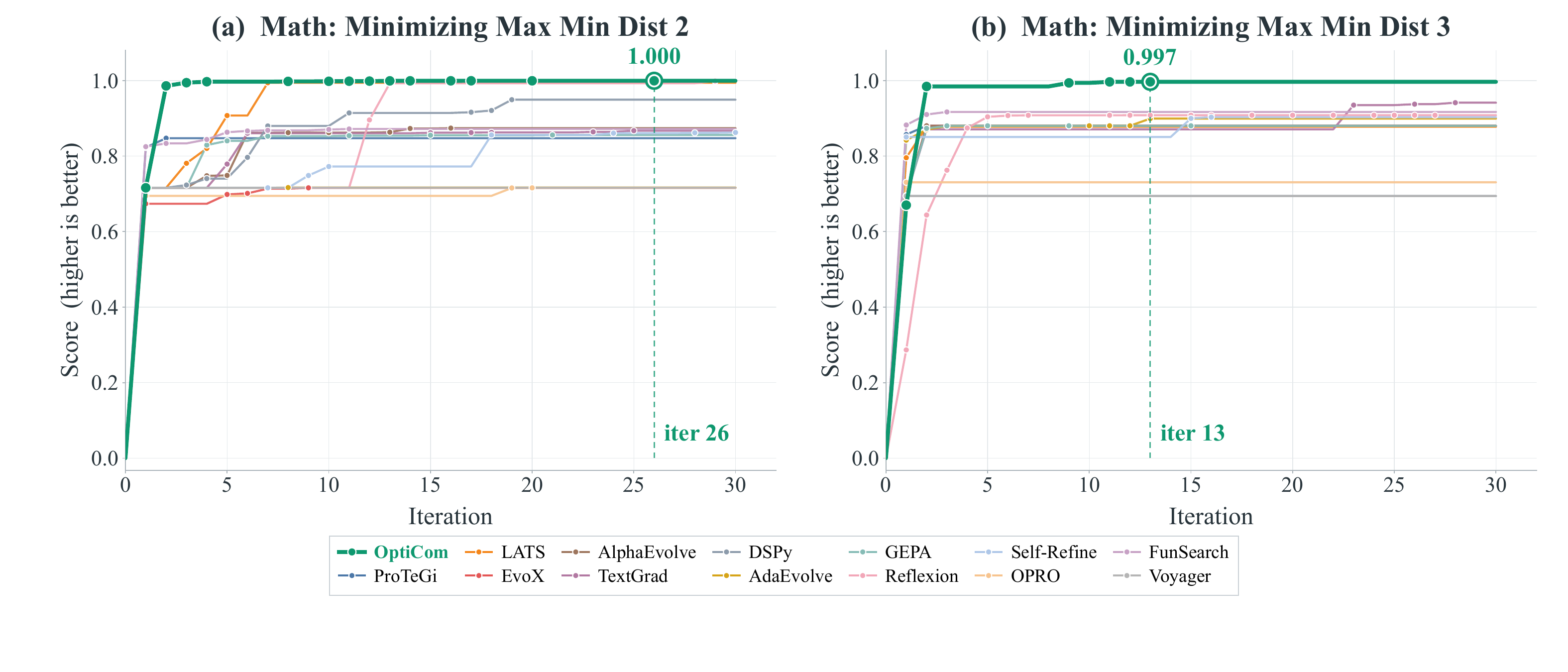}
    \caption{Archived best-so-far scores on \textbf{(a)} Minimizing Max Min Dist 2 and \textbf{(b)} Minimizing Max Min Dist 3. Large early improvements are followed by smaller refinements, with the final recorded improvements of \ours{} occurring at iterations $26$ and $13$, respectively. The displayed score of $1.000$ is rounded and does not constitute an optimality certificate.}
    \label{fig:app-progress-p4}
\end{figure}

\begin{figure}[ht]
    \centering
    \includegraphics[width=\textwidth]{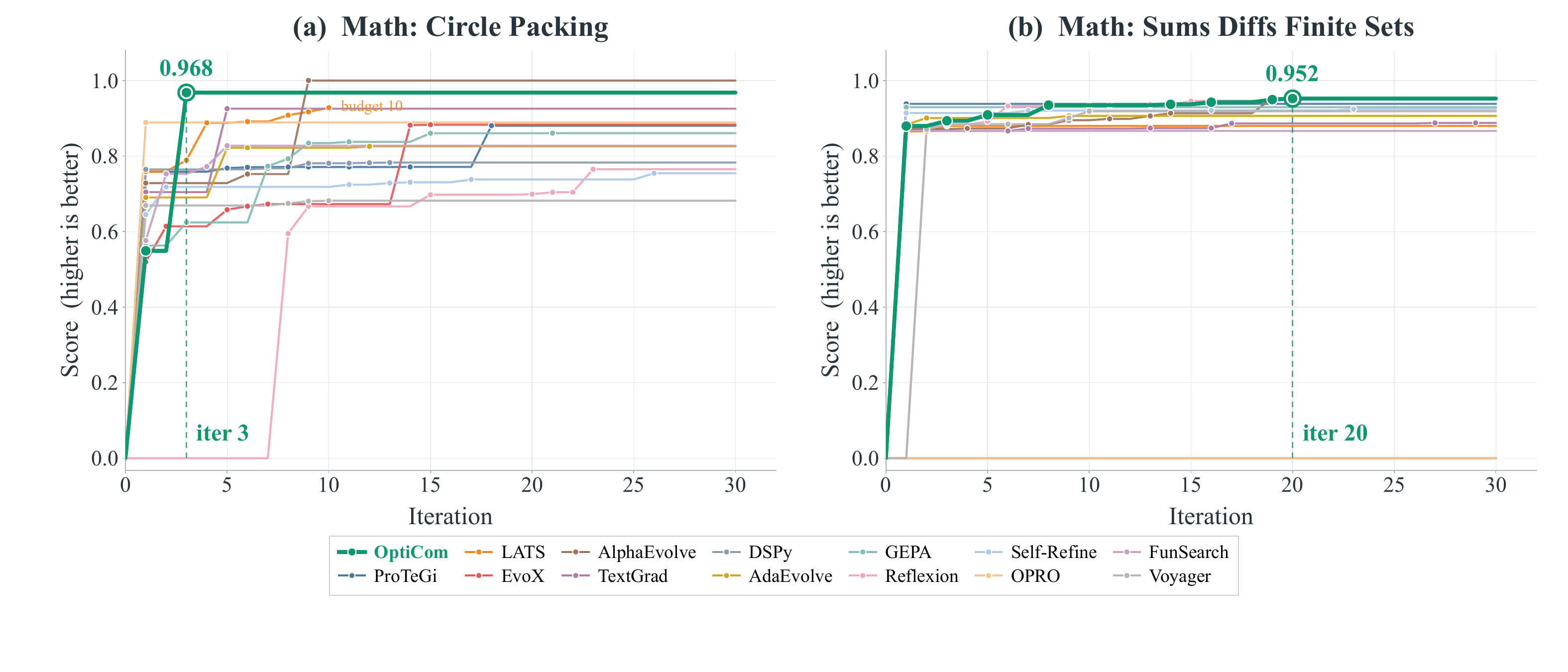}
    \caption{Archived best-so-far scores on \textbf{(a)} Circle Packing and \textbf{(b)} Sums Diffs Finite Sets. \ours{} makes a large early improvement on Circle Packing, but AlphaEvolve later achieves a higher score. On Sums Diffs Finite Sets, smaller improvements accumulate until iteration $20$. The Circle Packing panel marks the shorter recorded budget of LATS.}
    \label{fig:app-progress-p5}
\end{figure}

\begin{figure}[ht]
    \centering
    \includegraphics[width=\textwidth]{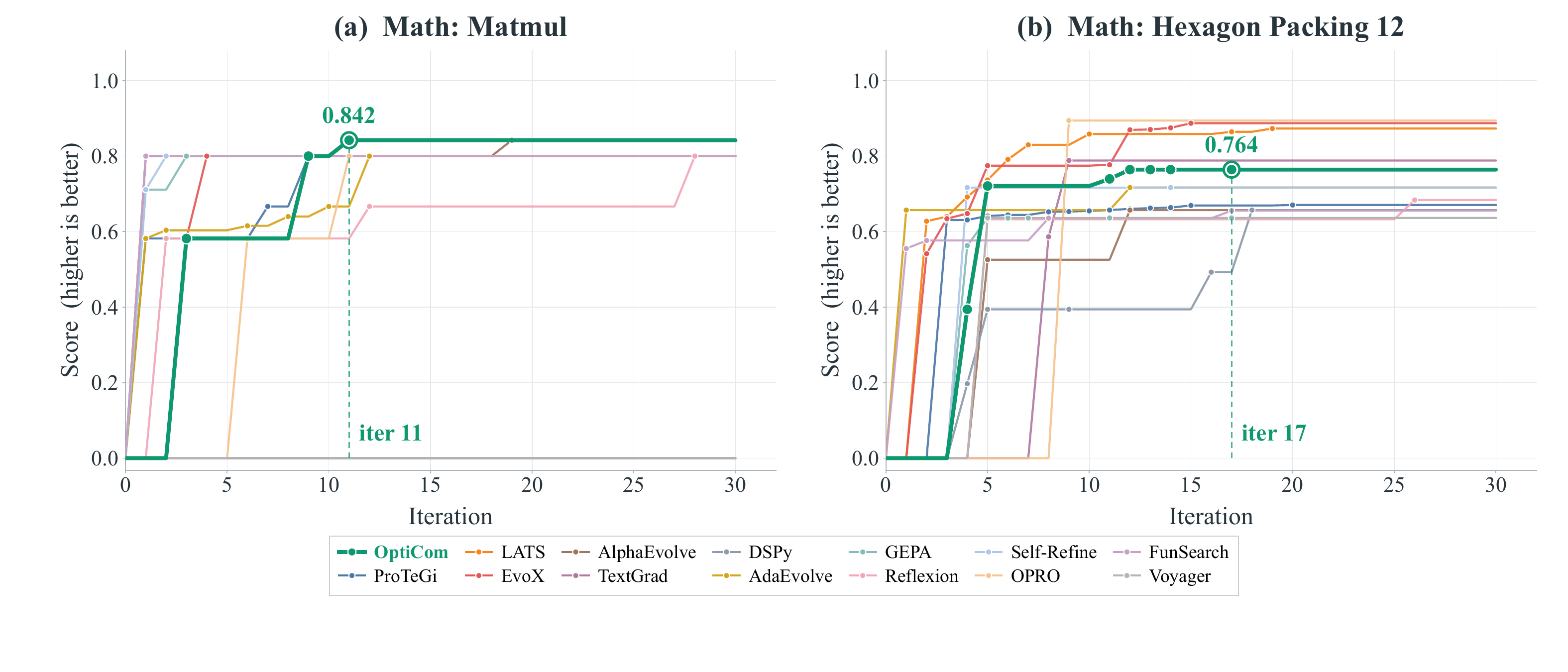}
    \caption{Archived best-so-far scores on \textbf{(a)} Matmul and \textbf{(b)} Hexagon Packing 12. On Matmul, \ours{} overcomes an initial plateau and reaches $0.842$ at iteration $11$, matching the strongest final score shown. On Hexagon Packing 12, improvements to $0.764$ remain insufficient to match several baselines, illustrating a limitation of the observed optimization trajectory.}
    \label{fig:app-progress-p6}
\end{figure}

\subsection{Cross-Backbone Robustness}
\label{app:transfer-results}

Table~\ref{tab:cross-backbone-full} reproduces the five-run cross-backbone summaries from Table~\ref{tab:cross-backbone}. All optimization-related LLM calls use the listed backbone, with the remaining configuration and evaluation procedures unchanged. The experiment therefore replaces the optimization backbone as a whole rather than the candidate generator alone.

To characterize the contribution of adaptation within each backbone, Table~\ref{tab:cross-backbone-gains} reports the difference between the full framework and the variant without the Adapter. The difference is positive in every backbone--task pair. Relative to the corresponding no-Adapter mean, the gains range from approximately $23.1\%$ to $33.7\%$ on Heilbronn Triangle and from $11.1\%$ to $18.1\%$ on HotpotQA.

\begin{table}[ht]
    \centering
    \small
    \setlength{\tabcolsep}{4pt}
    \renewcommand{\arraystretch}{1.12}
    \caption{Cross-backbone evaluation over five runs per configuration, with a maximum of 30 iterations. All optimization-related LLM calls use the listed backbone. Bold indicates the higher mean within each backbone--task pair.}
    \label{tab:cross-backbone-full}
    \begin{tabular}{@{}lcccc@{}}
        \toprule
        & \multicolumn{2}{c}{\textbf{Heilbronn Triangle}}
        & \multicolumn{2}{c}{\textbf{HotpotQA}} \\
        \cmidrule(lr){2-3}
        \cmidrule(lr){4-5}
        \textbf{Backbone} &
        \textbf{Full \ours{}} &
        \textbf{w/o Adapter} &
        \textbf{Full \ours{}} &
        \textbf{w/o Adapter} \\
        \midrule
        Doubao-Seed-2.0-pro &
        $\mathbf{0.953 \pm 0.008}$ &
        $0.713 \pm 0.027$ &
        $\mathbf{0.501 \pm 0.017}$ &
        $0.451 \pm 0.058$ \\
        GPT-5.5 &
        $\mathbf{0.987 \pm 0.002}$ &
        $0.796 \pm 0.011$ &
        $\mathbf{0.553 \pm 0.012}$ &
        $0.471 \pm 0.037$ \\
        Claude Opus 4.6 &
        $\mathbf{0.973 \pm 0.007}$ &
        $0.787 \pm 0.012$ &
        $\mathbf{0.554 \pm 0.010}$ &
        $0.469 \pm 0.041$ \\
        GLM-5.3 &
        $\mathbf{0.969 \pm 0.008}$ &
        $0.783 \pm 0.014$ &
        $\mathbf{0.523 \pm 0.019}$ &
        $0.461 \pm 0.059$ \\
        Kimi-K3 &
        $\mathbf{0.974 \pm 0.004}$ &
        $0.791 \pm 0.011$ &
        $\mathbf{0.547 \pm 0.016}$ &
        $0.465 \pm 0.053$ \\
        \bottomrule
    \end{tabular}
\end{table}

\begin{table}[ht]
    \centering
    \small
    \setlength{\tabcolsep}{8pt}
    \renewcommand{\arraystretch}{1.12}
    \caption{Absolute mean-score gains from enabling the Strategy Adapter, computed as Full \ours{} minus w/o Adapter using Table~\ref{tab:cross-backbone}. These are within-backbone comparisons under the same 30-iteration cap.}
    \label{tab:cross-backbone-gains}
    \begin{tabular}{@{}lcc@{}}
        \toprule
        \textbf{Backbone} &
        \textbf{Heilbronn Triangle} &
        \textbf{HotpotQA} \\
        \midrule
        Doubao-Seed-2.0-pro & $+0.240$ & $+0.050$ \\
        GPT-5.5 & $+0.191$ & $+0.082$ \\
        Claude Opus 4.6 & $+0.186$ & $+0.085$ \\
        GLM-5.3 & $+0.186$ & $+0.062$ \\
        Kimi-K3 & $+0.183$ & $+0.082$ \\
        \bottomrule
    \end{tabular}
\end{table}

The effect is not restricted to a backbone with a particularly weak no-Adapter result. On Heilbronn Triangle, the four alternative backbones already exceed the Doubao no-Adapter mean, yet each still benefits from enabling adaptation. Conversely, the size of the gain does not follow a common ordering across tasks: Doubao has the largest absolute gain on Heilbronn Triangle, whereas Claude Opus 4.6 has the largest on HotpotQA. The evidence therefore favors a benefit that persists across the tested backbones, rather than a monotonic relationship between backbone performance and the value of adaptation.

The full framework also has lower observed standard deviations in all ten comparisons. On HotpotQA, standard deviations range from $0.010$ to $0.019$ with the Adapter, compared with $0.037$ to $0.059$ without it. This pattern is consistent with more repeatable outcomes in these runs, although five-run summaries alone do not establish statistical significance or a universal stability guarantee.

Finally, this experiment directly evaluates the cross-backbone contribution of the Strategy Adapter, not every individual Q/O/E/M mechanism. Together with the core ablations, it supports the portability of the proposed design while leaving component-specific transfer effects unresolved. Since total computational expenditure is not matched across configurations or providers, the results should not be interpreted as a ranking of backbone cost efficiency.

\subsection{Additional Ablation Details}
\label{app:ablation-results}

\paragraph{How frequently should strategy adaptation occur?}
Table~\ref{tab:adapter-frequency} shows that all three adaptation schedules outperform the no-adaptation variant on Heilbronn Triangle. The default event-triggered policy obtains a mean score of $0.953$, compared with $0.954$ when adaptation is invoked after every nonterminal iteration. It uses approximately $65.5\%$ fewer Adapter calls, although the reduction in total token consumption is about $1.37\%$. Thus, the evidence supports obtaining similar observed mean quality without invoking the Adapter at every iteration, rather than a substantial reduction in total computation. Every-iteration adaptation has the smallest observed standard deviation, so the results do not establish that less frequent adaptation is uniformly preferable.

Periodic adaptation every five iterations obtains a lower mean score of $0.939$. However, it also makes fewer Adapter calls than the event-triggered variant. This comparison therefore combines differences in timing and frequency and does not isolate the advantage of state-dependent triggering at a matched number of updates.

\begin{table}[ht]
    \centering
    \small
    \setlength{\tabcolsep}{5pt}
    \renewcommand{\arraystretch}{1.12}
    \caption{Strategy adaptation frequency on Heilbronn Triangle. Scores report mean $\pm$ standard deviation over five runs. Adapter calls and total API tokens are run averages; K denotes one thousand tokens. Updates are not invoked after the final iteration.}
    \label{tab:adapter-frequency}
    \begin{tabular}{@{}lccc@{}}
        \toprule
        \textbf{Configuration} &
        \textbf{Final score} &
        \textbf{Adapter calls} &
        \textbf{Total tokens} \\
        \midrule
        No adaptation &
        $0.713 \pm 0.027$ & 0 & 853K \\
        Event-triggered adaptation &
        $0.953 \pm 0.008$ & 10 & 865K \\
        Periodic adaptation ($k=5$) &
        $0.939 \pm 0.010$ & 5 & 861K \\
        Every-iteration adaptation &
        $0.954 \pm 0.003$ & 29 & 877K \\
        \bottomrule
    \end{tabular}
\end{table}

\paragraph{Feedback access with and without strategy adaptation.}
Table~\ref{tab:feedback-ablation} shows that rich feedback improves mean scores both with and without the Adapter, by $0.056$ and $0.034$, respectively. Both configurations also exhibit lower standard deviations with diagnostic feedback. These observations suggest that information beyond a scalar score can benefit the optimization process even when strategy guidance is not updated; because generation and other downstream decisions also receive this information, the comparison does not isolate immediate Controller selection.

The Adapter remains beneficial under basic feedback: its mean-score advantage is $0.218$, compared with $0.240$ under rich feedback. Thus, diagnostic information is helpful but is not a prerequisite for the observed adaptation benefit. The difference between these two gains is $0.022$; with five runs and substantial variability in the basic-feedback condition, we treat this as descriptive evidence rather than an established interaction effect. Token consumption varies across the feedback and adaptation conditions, so the comparison concerns performance under a common iteration cap rather than equal information-processing cost. It does not establish an upper bound on adaptive performance determined by evaluator quality.

\begin{table}[ht]
    \centering
    \small
    \setlength{\tabcolsep}{4pt}
    \renewcommand{\arraystretch}{1.12}
    \caption{Feedback access on Heilbronn Triangle over five runs. Both conditions use the same evaluator and final scoring rule. Scores are mean $\pm$ standard deviation; token counts are mean total API consumption per run.}
    \label{tab:feedback-ablation}
    \begin{tabular}{@{}lcccc@{}}
        \toprule
        & \multicolumn{2}{c}{\textbf{Final score}}
        & \multicolumn{2}{c}{\textbf{Total tokens}} \\
        \cmidrule(lr){2-3}
        \cmidrule(lr){4-5}
        \textbf{Feedback} &
        \textbf{Full \ours{}} &
        \textbf{w/o Adapter} &
        \textbf{Full \ours{}} &
        \textbf{w/o Adapter} \\
        \midrule
        Basic &
        $0.897 \pm 0.057$ &
        $0.679 \pm 0.126$ &
        809K & 799K \\
        Rich &
        $0.953 \pm 0.008$ &
        $0.713 \pm 0.027$ &
        865K & 853K \\
        \bottomrule
    \end{tabular}
\end{table}

\paragraph{Observed sensitivity to the initial composition.}
Table~\ref{tab:initialization-ablation} compares fixed and adaptive optimization from three initial configurations. Adaptive optimization improves the mean score in every case, with absolute gains of $0.141$, $0.130$, and $0.094$ for local revision, diagnosis and repair, and exploration and recombination, respectively. Notably, adaptation also improves over the strongest fixed configuration, whose mean score is $0.857$. The benefit is therefore not confined to the default fixed reference used in the core ablation.

The range of mean scores across initializations decreases from $0.044$ for fixed compositions to $0.005$ for adaptive runs. This suggests that subsequent decisions can reduce the influence of the initial mechanism choice on terminal quality. Because only the first iteration is constrained in the adaptive variants, the result concerns sensitivity to initialization rather than recovery from a prolonged unsuitable strategy. Similar terminal means also do not imply identical intermediate decisions, resource use, or statistical equivalence across initializations.

\begin{table}[ht]
    \centering
    \small
    \setlength{\tabcolsep}{5pt}
    \renewcommand{\arraystretch}{1.12}
    \caption{Initialization sensitivity on Heilbronn Triangle. Each entry reports the mean and standard deviation over five runs with a maximum of 30 iterations. Adaptive \ours{} begins with the corresponding initial composition and can adjust it after the first iteration.}
    \label{tab:initialization-ablation}
    \begin{tabular}{@{}lcc@{}}
        \toprule
        \textbf{Initial configuration} &
        \textbf{Fixed} &
        \textbf{Adaptive \ours{}} \\
        \midrule
        Local-revision-first &
        $0.813 \pm 0.017$ &
        $\mathbf{0.954 \pm 0.008}$ \\
        Diagnosis-and-repair-first &
        $0.826 \pm 0.012$ &
        $\mathbf{0.956 \pm 0.008}$ \\
        Exploration-and-recombination-first &
        $0.857 \pm 0.011$ &
        $\mathbf{0.951 \pm 0.008}$ \\
        \bottomrule
    \end{tabular}
\end{table}

\section{Limitations and Broader Impact}
\label{app:limitations}
The profile comparison does not establish equivalence to all original optimizers, and the official-code checks cover only two methods on seven tasks. The motivation study is limited to one benchmark and three constructed situations per state. Complete per-run records are unavailable for the overall matrix, limiting uncertainty analysis. Common iteration caps do not control total computation, and the token-aware case study shows that substantial additional search cost can occur after the final improvement. The experiments do not establish long-horizon scaling, monotonic adaptation, causal effects of individual action changes, or satisfaction of the near-optimal selection assumption. Evaluation quality remains limited by each harness, including model-judge variability and benchmark-specific validity checks.

\end{document}